%% file: main.tex
\documentclass{article}

\PassOptionsToPackage{numbers, compress}{natbib}
\usepackage[preprint]
{neurips_2026}

\usepackage[utf8]{inputenc} 
\usepackage[T1]{fontenc}    
\usepackage[colorlinks,citecolor=blue]{hyperref}       
\usepackage{url}            
\usepackage{booktabs}       
\usepackage{amsfonts}       
\usepackage{nicefrac}       
\usepackage{microtype}      
\usepackage{xcolor}         
\usepackage{graphicx}
\usepackage{hyperref}
\usepackage{subcaption}
\usepackage{amsmath}
\usepackage{amssymb}
\usepackage{mathtools}
\usepackage{amsthm}

\usepackage{algorithm}
\usepackage{algorithmic}

\input{arxiv_preamble}

\newcommand{\sft}{\textrm{SFT}}
\newcommand{\tsk}{\textrm{TASK}}

\newcommand{\dpo}{\textrm{DPO}}
\newcommand{\newtask}{\tau_{\text{new}}}
\newcommand{\inner}[2]{\left\langle #1,\, #2\right\rangle}
\newcommand{\Hbase}{H_{\mathrm{base}}}
\newcommand{\thbase}{\theta_{\mathrm{base}}^{*}}
\newcommand{\thM}{\theta_{M}^{*}}
\newcommand{\tht}{\theta_{\tau}^{*}}
\newcommand{\thbar}{\bar{\theta}^{*}}

\newcommand{\Fbase}{F_{\mathrm{base}}}
\newcommand{\Fmaml}{\calL_{\mathrm{MAML}}}
\newcommand{\Dbase}{D_{\mathrm{base}}^{2}}

\newcommand{\maml}{\mathrm{MAML}}
\newcommand{\base}{\mathrm{base}}

\title{Meta-Learning Preferences for Multilingual LLM Alignment}

\author{%
  Jiaying Lin$^{1,*}$, Seongho Son$^{2,*}$, Nam Phuong Tran$^1$,\\ \textbf{Long Tran-thanh$^1$, Ilija Bogunovic$^3$, Debmalya Mandal$^1$} \\
  $^1$Department of Computer Science, University of Warwick, Coventry, United Kingdom \\
  $^2$Department of Computer Science, University College London, London, United Kingdom \\
  $^3$Department of Mathematics and Computer Science, University of Basel, Basel, Switzerland \\
  \texttt{\{Jiaying.Lin, nam.p.tran, Long.Tran-Thanh, Debmalya.Mandal\}@warwick.ac.uk,} \\ \texttt{seong.son.22@ucl.ac.uk, ilija.bogunovic@unibas.ch} \\
  \\
  \thanks{Equal contribution. Correspondence to \texttt{seong.son.22@ucl.ac.uk}.}
}

\begin{document}

\maketitle

\doparttoc
\faketableofcontents

\begin{abstract}
Unequal availability of human preference data across languages poses a significant challenge for aligning large language models in multilingual settings. 
To address the lack of sufficient data in low-resource language alignment, we propose a meta-learning framework for Reinforcement Learning from Human Feedback and Direct Preference Optimization.
By leveraging preference data from other languages, our framework learns a transferable initialization that enables effective adaptation to a target language with minimal data.
We provide theoretical guarantees for both the meta-reward modeling and meta-policy optimization settings, and empirically demonstrate the effectiveness of our approach on 
multilingual benchmarks.
In an extremely low-resource setting with only 100 target-language preference samples, our approach achieves  up to $28\%$ win-rate improvements over baseline methods, and consistently outperforms baselines across multiple target languages and model scales.
Our approaches retain these advantages across different combinations of meta-training languages and varying linguistic distances from the target languages.
\looseness=-1

\end{abstract}

\section{Introduction}\label{sec:introduction}
Alignment of Large Language Models (LLMs) have shown great progress, using methods such as Reinforcement Learning with Human Feedback (RLHF) \citep{ziegler2020finetuninglanguagemodelshuman, ouyang2022training} and Direct Preference Optimization (DPO) \citep{rafailov2024direct}. However, these approaches heavily rely on a large amount of high-quality human preference data, whose availability is severely unequal \textit{across languages}. As a result, LLMs often exhibit significantly weaker performance in low-resource languages, failing to provide global accessibility.\looseness=-1

Existing works have investigated improving multilingual LLMs through approaches such as fine-tuning to a specific target language \citep{dac2023okapi} or aggregating preference data across multiple languages and training models on them \citep{lai2024llms}. While these approaches are effective, they rely on strong assumptions, either requiring sufficient target-language data or a close linguistic relationship between languages.
In practice, direct transfer of general capabilities across languages is suboptimal, particularly, when only a small amount of data is available for a given language.\looseness=-1

In this work, we propose a meta-learning approach for preference-based multilingual alignment of LLMs. We formulate multilingual alignment as an \emph{adaptation problem}, by treating each language in the dataset as a task. Our approach leverages  data from multiple languages to meta-train the LLM policy, that can be rapidly adapted to a new target language using only a small number of preference comparisons. 
We visually summarize our approach in \Cref{fig: summary}. For the first time, we instantiate this framework for both RLHF (via meta-reward learning) and DPO (via meta-policy learning) using gradient-based meta-learning.\looseness=-1

\begin{figure*}[t!]
    \includegraphics[width=\linewidth, trim={4cm 0 3.5cm 0}, clip]{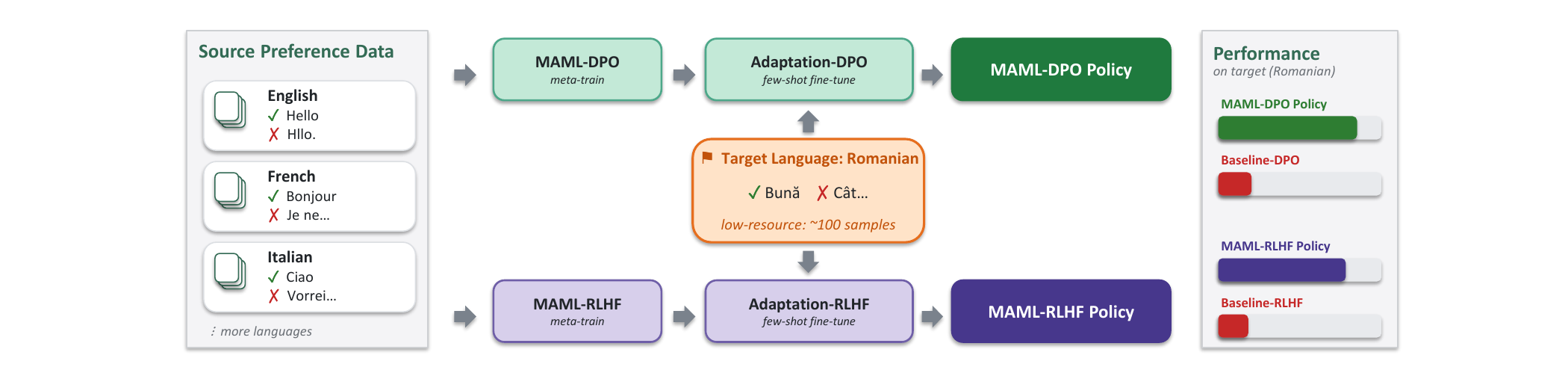}
    \vspace{-10pt}
    \caption{
        When the target language has low resource availability (e.g. Romanian) and has a very small number of data samples, such as 100, conventional approaches fail to properly fine-tune LLMs. On the other hand, our approach first meta-trains the model using other languages with sufficient data, allowing the model to boost sample efficiency in the target language. Our MAML-DPO/MAML-RLHF methods 
        achieve significant performance improvement over the baseline methods, given the same amount of datapoints from the target language.\looseness=-1
    }
    \label{fig: summary}
\end{figure*}

Multilingual preference alignment naturally exhibits a task-level adaptation structure, where each language can be viewed as a distinct task with unevenly distributed preference data. Existing methods commonly extend RLHF or DPO through joint training over multilingual data, leaving task-level adaptation largely underexplored. Gradient-based meta-learning is well suited to this setting, as it learns an initialization that can efficiently adapt across related tasks~\citep{finn2017model, zou2022unraveling}. We close this gap by formulating multilingual preference alignment as a meta-learning problem and instantiating it as MAML-RLHF and MAML-DPO. 
The resulting algorithms meta-train across languages and adapt to a target language with as few as 100 preference samples. On the theory side, the Bradley-Terry losses that govern preference learning sit outside the regression/classification setting of existing MAML convergence theory, and we establish the first convergence guarantees for MAML in this setting, characterising the regime in which the meta-trained initialisation provably accelerates target-language adaptation over both target-only and multitask baselines.\looseness=-1

We empirically evaluate our methods on various popular benchmarks across multiple target languages \citep{clark2018think, zellers2019hellaswag, hendrycks2020measuring, dac2023okapi}.
Using open-source LLMs with 270M and 7B parameters, we demonstrate that meta-trained models substantially outperform standard RLHF and DPO baselines in low-resource settings. Notably, with only 100 target-language preference samples, our meta-DPO approach improves win rates by up to 28\%.
In addition, the benefits of meta-training persist as the amount of adaptation data increases to more than 1k samples, still more stably improving performances across benchmarks than the baseline method.\looseness=-1

In this paper, we provide four main contributions:
\begin{itemize}
    \item We introduce a unified meta-learning framework for LLM alignment using preference data, which greatly improves the sample efficiency of low-resource languages.
    \item We provide convergence guarantees of our reward modeling and preference optimization algorithms, theoretically proving the efficiency of our approach.
    \item Using multilingual instruction-following datasets, we empirically demonstrate the greatly improved sample efficiency of our approach, outperforming baselines by up to 28\% in win rates.
    \item We show that our method retains advantage across different combinations of meta-training and target languages, suggesting that our method learns a broadly transferable initialization for adaptation.\looseness=-1
\end{itemize}

\section{Preliminaries}\label{sec: preliminaries}

\paragraph{Offline learning via human preference.}
{ In the offline setting, we assume access to a dataset $\mathcal D = {(x,y_w,y_\ell)}$, where $x$ is a prompt and $(y_w,y_\ell)$ are two possible completions of the prompt $x$. A human labeler then indicates a preference between the two responses according to a reward function $r^\star(x,y)$ and the Bradley--Terry (BT) model:}
\begin{equation}
    \begin{aligned}
        P(y_1 \succ y_2 | x) &= \sigma(r^\star(x,y_1) - r^\star(x,y_2)) 
    = \frac{\exp(r^\star(x,y_1))}{ \exp(r^\star(x,y_1)) + \exp(r^\star(x,y_2))},
    \end{aligned}
\end{equation}
where $\sigma$ is the sigmoid function, $\sigma(z)=1/(1+\exp(-z))$.
We aim to learn an aligned policy from the offline preference dataset $\mathcal D$. To this end, we study RLHF and DPO.\looseness=-1

\paragraph{Reinforcement learning from human feedback.}
As proposed by \cite{ziegler2020finetuninglanguagemodelshuman}, RLHF consists of two phases: reward learning and policy optimization.
Let $r_\phi$ be a reward function parameterized by $\phi$. The RLHF loss is
\begin{equation}\label{eq:log-loss RLHF}
\begin{aligned}
    \calL^{\mathrm{RLHF}}&(\phi) \triangleq  -\E_{(x,y_w, y_\ell)\sim \mathcal D} 
    \left[ \log \left(\sigma\left(r_\phi(x,y_w) - r_\phi(x,y_\ell) \right) \right)\right].
\end{aligned}
\end{equation}
After learning the reward function (denoted $r_{\widehat \phi}$), the last step is to optimize a policy using the following regularized objective with respect to $r_{\widehat \phi}$:
\begin{equation} \label{eq: optimal polcy wrt learnt reward}
    \widehat \pi = \argmax_{\pi} \mathbb E_{\substack{x\sim \mathcal D  \\ y\sim \pi(\cdot|x)}}\left[r_{\widehat \phi}(x,y) -  \beta \log \frac{\pi(y|x)}{\pi^{\mathrm{SFT}}(y|x)}\right],
\end{equation}
where $\beta>0$ is a regularization coefficient that penalizes deviation from the reference policy $\pi^{\mathrm{SFT}}$, a supervised fine-tuned policy.
\looseness=-1

\paragraph{Direct preference optimization.} 
Direct Preference Optimization (DPO), proposed by \cite{rafailov2024direct}, is an alternative to RLHF. DPO updates the policy directly from preference data, skipping the separate reward-modeling step. The main idea is that the reward can be written implicitly in terms of the optimal policy and $\pi^{\mathrm{SFT}}$, which leads to a loss defined purely by preference comparisons.
In particular, let $\pi_\theta$ be a policy parameterized by $\theta$. Then, the DPO loss is\looseness=-1
\begin{equation}\label{eq:log-loss DPO}
\begin{aligned}
    \calL_{\mathrm{DPO}}&(\theta) \triangleq - \E_{(x,y_w, y_\ell) \sim \mathcal D} \,\,
    \! \! \!\left[ \log \left(\sigma\left(\beta\log \frac{\pi_\theta(y_w|x)}{\pi^{\mathrm{SFT}}(y_w|x)} - \beta\log \frac{\pi_\theta(y_\ell|x)}{\pi^\mathrm{SFT}(y_\ell|x)}\right)\right)\right].
\end{aligned}
\end{equation}

\section{Multilingual preference alignment as a meta-learning problem}
\label{sec: meta-learning framework}

Multilingual preference alignment is fundamentally a \emph{multi-task adaptation} problem. Each language $\tau$ has its own preference distribution, shaped by elements such as  grammar and culture. On the other hand, large-scale comparison data is collected for only a handful of high-resource languages. Thus, the relevant question is not ``how do we align a \emph{single} language?'' but ``how do we align a \emph{target} language efficiently when preference data is available only from \emph{other} languages?''. In this point of view, the problem maps to the meta-learning regime: each language is a task drawn from a distribution $P_\tsk$, with taskwise preference loss $\calL(\omega;\tau)$. The goal here is a meta-initialisation from which a few gradient steps on a small target-language dataset $\calD^{\newtask}$ recover a near-optimal aligned policy. The Model-Agnostic Meta-Learning (MAML) framework~\citep{finn2017model} formalizes this objective through the one-step meta-learning loss\looseness=-1
\begin{equation}\label{eq:shared-reward}
    \calL_M(\omega) \;\triangleq\; \E_{\tau \sim P_\tsk}\!\bigl[\calL\bigl(\omega - \alpha\,\nabla_\omega \calL(\omega;\tau);\,\tau\bigr)\bigr],
\end{equation}
with $\alpha>0$ the inner-loop learning rate. Adaptation to a new task $\tau_{\text{new}}$ then runs gradient descent on the empirical loss $\widehat\calL(\,\cdot\,;\tau_{\rm new})$ initialized at the meta-solution $\omega_M^\star = \arg\min_\omega \calL_M(\omega)$:
\begin{equation}\label{eq:adaptation-update}
    \widehat\omega^{\,\newtask}_{t+1} \;=\; \widehat\omega^{\,\newtask}_t - \eta\,\nabla_\omega \widehat\calL\bigl(\omega^{\,\newtask}_t;\newtask\bigr), \qquad \widehat\omega^{\,\newtask}_0 = \omega_M^\star,
\end{equation}
with adaptation step size $\eta>0$. Instantiating $\calL(\omega;\tau)$ as the per-language RLHF reward-modelling loss or the per-language DPO policy loss yields the two algorithms developed in this paper: MAML-RLHF and MAML-DPO.\looseness=-1

Both RLHF and DPO are based on a Bradley-Terry sigmoid log-likelihood structure, which is different from the regression and classification settings on which existing MAML convergence theory is built~\citep{fallah2020convergence,zou2022unraveling}. The per-task strong-convexity constant of the DPO loss depends on both the feature-coverage constant and the reference policy log-ratio bound, and the meta-objective $\calL_M$ becomes strongly convex only when the inner learning rate $\alpha$ is restricted in terms of these constants. For DPO, the reference policy term further couples the loss to an external SFT model. We give the first convergence analysis of MAML for both losses in \Cref{sec: theoretical_analysis} (\Cref{theorem: meta-RLHF convergence,theorem: meta-DPO convergence}), and characterize the regime in which the meta-trained initialization provably accelerates target language adaptation over both target-only and multitask baselines (\Cref{thm:accelerated_MAML_DPO_wrt_Multitask,thm: maml-vs-baseline}).\looseness=-1

\begin{algorithm}[t]
    \caption{Meta-reward learning}\label{alg:FO_MAML RLHF}
    {\raggedright\quad \textbf{Input}: Dataset $\calD = \set{(x^i,y^i_w,y^i_\ell, \tau^i)}_{i=1}^N$, \#iterations $T$, initial reward model $\phi_1$.\par}
    \begin{algorithmic}[1]
    \STATE Initialize $\phi \leftarrow \phi_1$
    \FOR{$t = 1,\ldots, T$}
        \STATE Choose a batch $\calB_t \subseteq \calI$ of i.i.d. tasks from distribution $P_\tsk$ with size $\abs{\calB_t}=B$
        \FOR{each task $\tau \in \calB_t$}
            \STATE Sample $\calD^{\tau}_{\text{in}} \subseteq \calD^\tau$
            \STATE Compute empirical loss  $\widehat{\calL}^{\mathrm{RLHF}}(\cdot;\tau)$ with $\calD^{\tau}_{\text{in}}$

            \STATE $\phi_{t+1}^\tau = \phi_t - \alpha \nabla_\phi \widehat{\calL}^{\mathrm{RLHF}}(\phi_t;\tau)$
            
            \STATE Sample $\calD^\tau_{\text{o}} \subseteq \calD^\tau$
            \STATE Compute empirical loss $\widehat \calL^{\mathrm{RLHF}}(\phi_{t+1}^\tau;\tau)$ with  $\calD^\tau_{\text{o}}$
        \ENDFOR
        \STATE $\phi_{t+1} = \phi_t - \frac{\eta}{B}\sum_{\tau \in \calB_t} \nabla_\phi \widehat{\calL}^{\mathrm{RLHF}}(\phi^{\tau}_{t+1} ;\tau)$
    \ENDFOR
    \end{algorithmic}
    \quad \textbf{Output}: $\widehat \phi_M = \phi_{t^\star}$ where $t^\star \sim \text{Unif}\set{1,\ldots,T}$. 
\end{algorithm}

In both algorithms, meta-learning is anchored at the preference learning stage. Formally, a task (language) $\tau$ is drawn from a distribution $P_{\tsk}$, and we assume access to a multi-task dataset $\mathcal D = \{(x^i,y_w^i,y_\ell^i,\tau^i)\}_{i=1}^N$. We also denote $\mathcal D^\tau \subset \mathcal D$ as the subset of the dataset corresponding to task $\tau$. 
Slightly abusing notation, we write $(x,y_w,y_\ell) \sim \cal D^\tau$ to denote uniform sampling from the dataset $\cal D^\tau$ for task $\tau$.\looseness=-1

\subsection{MAML-RLHF} 
\label{subsec: Meta Learning for RLHF}

\begin{wrapfigure}{r}{0.6\linewidth}
\begin{minipage}{\linewidth}
\vspace{-25pt}
\begin{algorithm}[H]
    \caption{Reward adaptation} \label{alg:meta-rlhf adaption}
    {\raggedright\quad \textbf{Input}: Dataset $\calD^{\newtask}$, meta reward model $\widehat \phi_M$, \#gradient steps $\widetilde{m}$, step size $\eta$.\par}
    \begin{algorithmic}[1]
    \STATE Initialize $\phi \leftarrow \widehat\phi_M$
    \FOR{$i = 1,\ldots, \widetilde{m}$}
    \STATE $\phi \leftarrow \phi - \eta\nabla_\phi \widehat{\calL}^{\mathrm{RLHF}}(\phi;\newtask)$.
    \ENDFOR
    \end{algorithmic}
     \quad \textbf{Output}: $\phi$.
\end{algorithm}
\end{minipage}
\vspace{-20pt}
\end{wrapfigure}

For RLHF, the per-task object that carries the cross-lingual preference signal is the reward function $r_\phi$. We therefore meta-learn $r_\phi$ across languages and adapt it to the target language at deployment time, leaving SFT and policy optimisation as standard single-task stages. Concretely, the task-wise reward-modelling loss in \eqref{eq:shared-reward} takes the form
\begin{equation}\label{eq:log-loss RLHF task-wise}
\begin{aligned}
    \calL^{\mathrm{RLHF}}&(\phi;\tau) \triangleq - \E_{(x,y_w, y_\ell)\sim \mathcal D^\tau}
    \left[ \log \left(\sigma\left(r_\phi(x,y_w) - r_\phi(x,y_\ell) \right) \right)\right].
\end{aligned}
\end{equation}
Given $\calL^{\mathrm{RLHF}}(\phi,\tau)$, we solve the meta-learning problem $\phi_M^\star \in \argmin_\phi \mathcal L_M^{\mathrm{RLHF}}(\phi)$, where\looseness=-1
\begin{equation}\label{eq:meta loss RLHF}
\begin{aligned}
    \calL_M^{\mathrm{RLHF}}(\phi) &\triangleq \E_{\tau \sim P_\tsk(\cdot)} 
    \left[ \calL^{\mathrm{RLHF}}(\phi^\tau ; \tau)\right], \quad
    \phi^\tau & \triangleq \phi - \alpha \nabla_\phi\calL^{\mathrm{RLHF}}(\phi; \tau).
\end{aligned}
\end{equation}

Algorithm \ref{alg:FO_MAML RLHF} utilizes the MAML framework~\cite{finn2017model, fallah2020convergence} to approximate the solution of $\eqref{eq:meta loss RLHF}$. 
 It learns a meta-initialization $\widehat\phi_M$  that can be quickly adapted to a new task using few samples. 
 
At each meta-iteration $t$, we sample a batch of i.i.d.~tasks $\calB_t \sim P_\tsk$. 
For each task $\tau \in \calB_t$, we first sample an inner-loop dataset $\calD^\tau_{\text{in}} \subset \mathcal D^\tau$ and perform one gradient step on the empirical RLHF loss $\widehat{\calL}^{\mathrm{RLHF}}(\cdot;\tau)$ starting from $\phi_t$, yielding an adapted parameter $\phi^{\tau}_{t+1}$. 
 We then evaluate the adapted parameter on a fresh outer dataset $\calD^\tau_{\text{o}} \subset \mathcal D^\tau$, which is sampled independently of $\calD^\tau_{\text{in}}$, to compute the outer empirical loss, and use its gradient to update the meta-parameter.
Finally, we aggregate gradients across tasks and take a meta-update step $\phi_{t+1} = \phi_t - \frac{\eta}{B}\sum_{\tau \in \calB_t}\nabla_\phi \widehat{\calL}^{\mathrm{RLHF}}(\phi^{\tau}_{t+1};\tau)$. 

In the adaption phase, we are given a new task $\tau_{\text{new}}$, and a preference dataset $\calD^{\newtask} = \{(x^i, y^i_w, y^i_\ell)\}_{i=1}^{n}$ where $y^i_w$ (resp. $y^i_\ell$) is the winning (resp. losing) completion among the pair of completions $y^i_w, y^i_\ell$ on the task $\newtask$ given prompt $x$. The reward model $r_{\phi_M}$ is updated as in Algorithm \ref{alg:meta-rlhf adaption}, by performing gradient descent on the empirical loss function $\widehat{\calL}^{\mathrm{RLHF}}(\phi; \tau_{\text{new}})$. This results in an adapted reward model $r_\phi^{\tau_{\text{new}}}$. 
Given the adapted reward model $\phi$, we can train the KL-regularized policy specific to task $\newtask$ using \eqref{eq: optimal polcy wrt learnt reward}.

\subsection{MAML-DPO} 
\label{subsec: Meta Learning for DPO}

For DPO, the policy is the only learnable object in the preference-learning stage, so meta-learning naturally lands on the policy itself. The policy is meta-trained on preference datasets from non-target languages, then adapted to the target language by running DPO on $\calD^{\newtask}$.\looseness=-1

For a task $\tau$, the task-wise loss function of policy learning is  
\begin{equation}\label{eq:log-loss DPO task-wise}
\begin{aligned}
    \calL_{\mathrm{DPO}}&(\theta;\tau) \triangleq - \E_{(x, y_w, y_\ell)\sim \calD^\tau} \,\,
    \! \! \!\left[ \log \left(\sigma\left(\beta\log\frac{\pi_\theta(y_w|x)}{\pi^{\mathrm{SFT}}(y_w|x)} - \beta\log \frac{\pi_\theta(y_\ell|x)}{\pi^\mathrm{SFT}(y_\ell|x)}\right)\right)\right].
\end{aligned}
\end{equation}
Given $\calL^{\mathrm{DPO}}(\theta,\tau)$ 
we solve the meta-learning problem $\phi_M^\star \in \argmin_\theta \mathcal L_M^\dpo(\theta)$, where
\begin{equation}\label{eq:meta loss DPO}
\begin{aligned}
    \calL_M^{\dpo}(\theta) &\triangleq \E_{\tau \sim P_\tsk(\cdot)} 
    \left[ \calL^{\dpo}(\theta^\tau ; \tau)\right], \quad
    \theta^\tau & \triangleq \theta - \alpha \nabla_\theta\calL^{\dpo}(\theta; \tau).
\end{aligned}
\end{equation}

Similar to MAML-RLHF, we employ a MAML framework (Algorithm \ref{alg:FO_MAML RLHF}) to approximate the solution for \eqref{eq:meta loss DPO} by using DPO loss instead.
In the adaptation phase, however, in contrast to meta-RLHF, we can directly update the meta-policy $\pi_M$ for the new task $\newtask$ using the empirical DPO loss~\cite{rafailov2024direct} 
i.e. $\widehat{\calL}_{\dpo}(\pi; \calD_{\tau_{\text{new}}})$.
In particular, starting from the meta-policy $\pi_M$, we perform gradient descent steps for the loss function $\widehat{\calL}_\dpo$ and dataset $\calD^{\newtask}$, which is described as in Algorithm \ref{alg:meta-dpo adaption}.
\begin{wrapfigure}{r}{0.45\linewidth}
    \vspace{-10pt}
    \begin{minipage}{\linewidth}
        \begin{algorithm}[H]
            \caption{Policy adaptation} \label{alg:meta-dpo adaption}
            {\raggedright \quad \textbf{Input}: Dataset $\calD^{\newtask}$, meta policy  $\widehat \theta_M$, number of gradient steps $\widetilde m$, and step size $\eta$.\par}
            \begin{algorithmic}[1]
            \STATE Initialize $\theta \leftarrow \widehat\theta_M$.
            \FOR{$i = 1,\ldots, \widetilde{m}$}
            \STATE $\theta \leftarrow \theta - \eta\nabla_\theta \widehat{\calL}^\dpo(\theta;\tau_{\text{new}})$
            \ENDFOR
            \end{algorithmic}
             \quad \textbf{Output}: $\pi^\dpo_{\tau_{\text{new}}} = \pi_\theta$.
        \end{algorithm} 
    \end{minipage}
    \vspace{-10pt}
\end{wrapfigure}

 Both approaches described in Section \ref{subsec: Meta Learning for RLHF}, \ref{subsec: Meta Learning for DPO} aim to transfer preference structure across languages for few-shot adaptation, but they apply meta-learning at different points in the pipeline. In practice, RLHF and DPO exhibit different stability and compute tradeoffs even in the standard setting, and this also carries over to the meta setting.
We therefore treat MAML-RLHF and MAML-DPO as complementary methods and evaluate how effectively each method adapts to the target language.\looseness=-1

\section{Theoretical analysis}\label{sec: theoretical_analysis}
In this section, we initiate the study of how accurately Algorithm \ref{alg:FO_MAML RLHF} (i.e., the MAML algorithm) can estimate the solution of the meta-learning problem in both the RLHF and DPO settings.
When clear from the context, we drop the subscripts RLHF and DPO and write $\mathcal L$ for the corresponding loss. In particular, we use $\mathcal L(\phi)=\mathcal L^{\mathrm{RLHF}}(\phi)$ for reward learning and $\mathcal L(\theta)=\mathcal L^{\mathrm{DPO}}(\theta)$ for policy learning. 

For theoretical tractability, we focus on a linear reward class and a log-linear policy class, a standard setup in theoretical studies (e.g., \cite{nika2024reward}). 
In particular, we study linear reward and log-linear policy functions based on a $d$-dimensional feature map $\psi$ satisfying $\max_{(x,y)}\|\psi(x,y)\|_2^2\leq 1$.
We also define $\Delta\psi(x,y_w,y_\ell)\triangleq (\psi(x,y_w) - \psi(x,y_\ell))$.

\subsection{Convergence of MAML-RLHF}
\label{subsec: analysis of MAML-RLHF}
In this section, we analyze the convergence rate of Algorithm~\ref{alg:FO_MAML RLHF} in the RLHF setting.
For some constant $B_\phi>0$, we consider the linear reward function class
\begin{equation}
    \mathcal F = \{r_\phi(x,y) = \phi^\top\psi(x,y) \mid \|\phi \|_2\leq B_\phi \},
\end{equation}
with the corresponding task-wise loss
\begin{align*}
    \calL(\phi;\tau) =-\E_{(x,y_w,y_\ell) \sim \calD_\tau} 
      \left[ \log \sigma \left( \phi^\top \Delta\psi(x,y_w, y_l)  \right)\right]. \label{eq: restate_loss_reward}
\end{align*}

We make the following assumptions on the data distribution.
\begin{assumption}[Uniform coverage of data distribution]\label{asn:full-coverage}
    For any task $\tau$, there exists a constant $\nu > 0$ s.t.
    \[\E_{(x,y_w, y_\ell) \sim D^\tau}[(\Delta\psi(x,y_w, y_\ell) \Delta\psi(x,y_w, y_\ell)^\top]\succcurlyeq \nu I.\]
\end{assumption}

Assumption \ref{asn:full-coverage} is the standard uniform coverage condition in offline reinforcement learning. Here it is used to ensure convexity of the meta-loss function.

\begin{assumption}[Variance of tasks]\label{asn:bounded-task-variance}
Let $\nabla_\phi \calL(\phi) = \E_{\tau \sim P_\tsk}\left[ \nabla_\phi \calL(\phi,\tau)\right]$. Then there exists $\sigma > 0$ such that 
\[\E_{\tau \sim P_\tsk}\left[ \norm{\nabla_\phi \calL(\phi) - \nabla_\phi \calL(\phi,\tau)}_2^2 \right] \le \sigma^2.\]
\end{assumption}

Assumption \ref{asn:bounded-task-variance} bounds task diversity and is standard in the MAML literature \cite{fallah2020convergence}.

\paragraph{Main results.} We state a convergence guarantee for MAML applied to the meta objective for RLHF.\looseness=-1
\begin{theorem} \label{theorem: meta-RLHF convergence}
Suppose Assumptions \ref{asn:full-coverage} and \ref{asn:bounded-task-variance} hold, and 
Algorithm \ref{alg:FO_MAML RLHF} is run for $T =  O\left( \frac{L}{\alpha^2 \sigma^2}\right)$ iterations, and the batch size $B =1/\alpha^2$, dataset sizes $D_o = \abs{\calD_o^i} = O(1), D_{\text{in}} = \abs{\calD^i_{\text{in}}} = 1/\sigma$.  Then for the linear model:
\[\mathbb E \left[ \norm{\phi^\star_M - \widehat{\phi}_M}_2 \right] \le \frac{2 \alpha \sigma}{m_M},
\]
where $m_M = \frac{c\cdot \nu}{\nu + 2(4+c)(1+\exp(2B_\phi))}$ for a constant $c > 0$.
\end{theorem}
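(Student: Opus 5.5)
The argument adapts the strongly convex MAML convergence analysis of \citet{fallah2020convergence} to the Bradley--Terry loss. The work splits into two parts: first establish the smoothness and strong convexity constants of the meta-objective $\calL_M^{\mathrm{RLHF}}$ for the linear class, then bound the distance between the SGD-type iterate and the minimizer.

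\textbf{Step 1: per-task regularity.} For $\calL(\phi;\tau)=-\E[\log\sigma(\phi^\top\Delta\psi)]$, compute
\[
\nabla^2\calL(\phi;\tau)=\E\bigl[\sigma(z)(1-\sigma(z))\,\Delta\psi\Delta\psi^\top\bigr],\qquad z=\phi^\top\Delta\psi .
\]
Since $\|\Delta\psi\|_2\le 2$ and $\|\phi\|_2\le B_\phi$, we have $|z|\le 2B_\phi$. Using $\sigma(z)(1-\sigma(z))\ge \tfrac{1}{2(1+\exp(2B_\phi))}$, Assumption~\ref{asn:full-coverage} gives $\mu$-strong convexity with $\mu\gtrsim \nu/(1+\exp(2B_\phi))$. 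The same computation gives smoothness with $L\le 1$ up to constants, and the Hessian is Lipschitz with constant $\rho=O(1)$, because the third derivative of $\log\sigma$ is bounded and the features are bounded.

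\textbf{Step 2: strong convexity and smoothness of $\calL_M$.} The Hessian of the meta-objective is
\[
\nabla^2\calL_M(\phi)=\E_\tau\!\left[(I-\alpha H_\tau)\,\nabla^2\calL(\phi^\tau;\tau)\,(I-\alpha H_\tau)-\alpha\,\nabla^3\calL(\phi;\tau)\bigl[\nabla\calL(\phi^\tau;\tau)\bigr]\right],
\]
where $H_\tau=\nabla^2\calL(\phi;\tau)$. With $\alpha\le 1/L$, the first term is $\succeq (1-\alpha L)^2\mu I$. The second term is bounded by $\alpha\rho\|\nabla\calL\|$, and $\|\nabla\calL\|\le 2$. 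Choosing $\alpha$ proportional to a constant $c$ relative to these quantities, the worst-case lower bound simplifies to the stated
\[
m_M=\frac{c\nu}{\nu+2(4+c)(1+\exp(2B_\phi))}.
\]
Obtaining exactly this form requires tracking the constants, and I expect this to be the main obstacle: the third-order term must be absorbed into the curvature margin, which forces the parametrization of $\alpha$ by $c$. Smoothness of $\calL_M$ with constant $L_M=O(L)$ follows in the same way.

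\textbf{Step 3: stochastic gradient error.} The update direction in Algorithm~\ref{alg:FO_MAML RLHF} is a biased estimate of $\nabla\calL_M(\phi_t)$. Following \citet{fallah2020convergence}:
\begin{itemize}
\item The bias comes from estimating the inner gradient with $D_{\mathrm{in}}$ samples. It is bounded by $O(\alpha L\,\tilde\sigma/\sqrt{D_{\mathrm{in}}})$, where $\tilde\sigma$ is the per-sample gradient variance, which is $O(1)$ because gradients are bounded.
\item The variance combines task variance $\sigma^2$ (Assumption~\ref{asn:bounded-task-variance}) with sample variance, divided by $B$.
\end{itemize}
Plugging in $B=1/\alpha^2$, $D_{\mathrm{in}}=1/\sigma$ and $D_o=O(1)$ makes the bias $O(\alpha\sigma)$ and the second moment $O(\alpha^2\sigma^2)$.

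\textbf{Step 4: convergence to $\phi_M^\star$.} Apply the standard biased-SGD analysis for an $m_M$-strongly convex, $L_M$-smooth objective, using a suitable step size $\eta$. This yields
\[
\E\|\nabla\calL_M(\widehat\phi_M)\|\le O(\alpha\sigma)+O\!\left(\sqrt{L/(\eta T)}\right),
\]
obtained by averaging over the uniformly random output index. With $T=O(L/(\alpha^2\sigma^2))$, the optimization term is also $O(\alpha\sigma)$, so $\E\|\nabla\calL_M(\widehat\phi_M)\|\le 2\alpha\sigma$ after fixing constants. Finally, strong convexity gives $\|\phi-\phi_M^\star\|\le \|\nabla\calL_M(\phi)\|/m_M$, which yields the claimed bound $2\alpha\sigma/m_M$.
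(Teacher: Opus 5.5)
Your Steps 1, 2 and 4 follow the paper's argument. The paper also bounds the per-task Hessian between $m_0 I$ and $I$ with $m_0=\nu/(2(1+e^{2B_\phi}))$. It splits $\nabla^2\calL_M=\E_\tau[T_1+T_2]$ exactly as you do, proves $|w^\top T_1 w|\le 8\alpha$ (using $|\sigma(1-\sigma)(1-2\sigma)|\le 1/2$, $\|\Delta\psi\|_2\le 2$, $\|\nabla\calL\|_2\le 2$) and $w^\top T_2 w\ge (1-\alpha)^2 m_0$, and finishes with strong convexity. The constant you call the main obstacle is routine: $(1-\alpha)^2 m_0-8\alpha\ge m_0-2\alpha(4+m_0)\ge c\,m_0/(4+c+m_0)$ whenever $\alpha\le m_0/(2(4+c+m_0))$. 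The quantity $c\,m_0/(4+c+m_0)$ is exactly the stated $m_M$, so the theorem silently needs this upper bound on $\alpha$. Your last step, $\|\phi-\phi_M^\star\|_2\le\|\nabla\calL_M(\phi)\|_2/m_M$, is a factor $2$ sharper than the paper's function-value argument, which is harmless.

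The genuine gap is in Step 3. Algorithm~\ref{alg:FO_MAML RLHF} is first-order MAML: the outer step uses $\nabla\widehat{\calL}(\phi^\tau_{t+1};\tau)$ and drops the Jacobian factor $I-\alpha\nabla^2\widehat{\calL}(\phi_t;\tau)$. So even with exact inner and outer gradients, the expected update direction differs from $\nabla\calL_M(\phi)$ by $\alpha\,\E_\tau[\nabla^2\calL(\phi;\tau)\,\nabla\calL(\phi^\tau;\tau)]$. No choice of $D_{\mathrm{in}}$ removes this bias. Your accounting attributes all bias to inner sampling and omits this term, yet it is the source of the $O(\alpha\sigma)$ floor. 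In the Fallah et al.\ bound the paper invokes, it corresponds to the $16\alpha^2\sigma^2$ term, the only term there that no batch size reduces. Bounding it is where Assumption~\ref{asn:bounded-task-variance} really enters. From $\|\nabla^2\calL\|\le 1$ and the variance bound you get $\E_\tau\|\nabla\calL(\phi^\tau;\tau)\|_2\le O(\|\nabla\calL_M(\phi)\|_2+\sigma)$. The resulting $O(\alpha\|\nabla\calL_M(\phi)\|_2)$ piece is then absorbed into the descent term for small $\alpha$.

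Separately, your plug-in does not close as claimed. An inner-sampling bias $O(\alpha\widetilde{\sigma}/\sqrt{D_{\mathrm{in}}})$ with $\widetilde{\sigma}=O(1)$ and $D_{\mathrm{in}}=1/\sigma$ is $O(\alpha\sqrt{\sigma})$, not $O(\alpha\sigma)$. With $D_o=O(1)$, the outer-sample noise adds $O(\alpha^2)$ to the second moment, not $O(\alpha^2\sigma^2)$, which matters for $\sigma<1$ under the stated $T$. The paper does not re-derive this step; it substitutes the parameters into the imported bound, and that substitution is equally loose. Its $\widetilde{\sigma}^2/D_{\mathrm{in}}$ term is $O(\sigma^2)$ under the square root, giving $O(\sigma)$. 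A rigorous $O(\alpha\sigma)$ needs $D_{\mathrm{in}},D_o\gtrsim 1/\sigma^2$ for $\sigma\le 1$; otherwise the conclusion must be weakened.
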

\begin{proof}
    Provided in Appendix \ref{appendix: Proof of Convergerence for Meta-RLHF}.
\end{proof}
Theorem \ref{theorem: meta-RLHF convergence} indicates that the proposed first-order method can approximate the solution of the MAML-RLHF objective up to a prescribed error. In particular, it provides a guarantee on how close the output of MAML-RLHF is to the meta-solution of \eqref{eq:meta loss RLHF} in Euclidean distance. Moreover, \citet{zou2022unraveling} shows that the meta-solution is, in expectation, close (in Euclidean distance) to the task-wise solution (see their Theorem 2). This implies that the output of MAML-RLHF also yields an initialization close to the task-wise solution.\looseness=-1

\subsection{Convergence of MAML-DPO}
\label{subsec: analysis of MAML-DPO}
We now analyze the convergence rate of Algorithm~\ref{alg:FO_MAML RLHF} in the DPO setting.
Given constant $B_\theta >0$, we define the log-linear (softmax) policy with feature map $\psi$ as
\begin{equation}
\label{eq:loglinear_policy}
\textstyle 
\Pi = \left\{\pi_\theta(y\mid x) = \frac{\exp(\theta^\top \psi(x,y))}{\sum_{y'}\exp(\theta^\top \psi(x,y'))}\, \bigg|\, \|\theta \|_2\leq B_\theta  \right\}.
\end{equation}
Using this formulation, the population DPO loss for a task $\tau$ \cite{nika2024reward} can be written as
\begin{equation}
\label{eq:logistic_form}
\begin{aligned}
    \calL(\theta;\tau) &=
\mathbb{E}_{(x,y_w,y_\ell)\sim D^\tau} \left[
\log\big(1+\exp(-z_\theta(x,y_w,y_\ell))\big)
\right],
\end{aligned}
\end{equation}
where $z_\theta(x,y_w,y_\ell) \triangleq \beta\theta^\top \Delta\psi(x,y_w,y_\ell) - J(x,y_w,y_\ell)$, and $J(x,y_w,y_\ell) \triangleq \beta \log\frac{\pi^{\sft}(y_w\mid x)}{\pi^{\sft}(y_\ell\mid x)}$.
We further make the following assumptions.
\begin{assumption}[Bounded reference log-ratio]
\label{ass:bounded_ref_ratio}
For all $\tau$ and $(x,y_w,y_\ell) \in \mathcal D^\tau$, there exists $J_0<\infty$ such that
$
\left|\log\frac{\pi^{\sft}(y_w\mid x)}{\pi^{\sft}(y_\ell\mid x)}\right|\le J_0.
$
Equivalently, $|J(x,y_w,y_\ell)|\le \beta J_0$.
\end{assumption}
For all triplet $(x,y_w,y_\ell)$, define the uniform bound on $\Pi$:
\begin{equation}
\label{eq:Zmax_theta}
|z_\theta(x,y_w,y_\ell)|
\le 2\beta B_\theta + \beta J_0
\triangleq Z_{\max}.
\end{equation}
Assumption \ref{ass:bounded_ref_ratio} is mild because, in practice, both $y_w$ and $y_\ell$ typically occur with non-negligible probability under the reference policy, so imposing a uniform bound on the log-ratio is a standard coverage condition.

Next, we assume the following bound on the variability of task-wise gradients.
\begin{assumption}
\label{ass:variance_of_task_theta}
Let $\nabla_\theta \calL(\theta) = \E_{\tau \sim P_\tsk}\left[ \nabla_\theta \calL(\theta,\tau)\right]$. Then there exists $\sigma > 0$ such that 
\[\E_{\tau \sim P_\tsk}\left[ \norm{\nabla_\theta \calL(\theta) - \nabla_\theta \calL(\theta,\tau)}_2^2 \right] \le \sigma^2.\]
\end{assumption}

\paragraph{Main result.} The following theorem gives a finite-sample convergence guarantee for first-order MAML applied to the MAML-DPO objective.
\begin{theorem} \label{theorem: meta-DPO convergence}
Assume Assumptions \ref{asn:full-coverage}, \ref{ass:bounded_ref_ratio}, \ref{ass:variance_of_task_theta} hold.
Let $\alpha \leq \frac{1}{\beta^2\left(2+\frac{8(1+\exp(Z_{\max}))}{\nu}\right)}$.
Suppose Algorithm \ref{alg:FO_MAML RLHF} is run for $T =  O\left( \frac{L}{\alpha^2 \sigma^2}\right)$ iterations, and the batch size $B =1/\alpha^2$, dataset sizes $D_o = \abs{\calD_o^i} = O(1), D_{\text{in}} = \abs{\calD^i_{\text{in}}} = \beta/\sigma$.  Then for the log-linear model, we obtain 
\[
\norm{\theta^\star_M - \widehat{\theta}_M}_2 \le \frac{2 \alpha \sigma}{m_M},
\]
where $m_M = \beta^2 \frac{c\nu^3}{(1+\exp(Z_{\max}))^3}$ for some constant $c > 0$.
\end{theorem}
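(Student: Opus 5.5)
The plan is to reduce the statement to a generic convergence result for first-order MAML on a strongly convex, smooth meta-objective, in the spirit of \citet{fallah2020convergence}. To use it, I need three properties of the DPO task losses $\calL(\theta;\tau)$ in \eqref{eq:logistic_form}, and then the induced properties of $\calL_M^{\dpo}$.

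\textbf{Step 1: per-task curvature.} For the log-linear class,
\[
\nabla_\theta \calL(\theta;\tau) = -\beta\,\E\bigl[\sigma(-z_\theta)\Delta\psi\bigr],
\qquad
\nabla^2_\theta \calL(\theta;\tau) = \beta^2\,\E\bigl[\sigma(z_\theta)\sigma(-z_\theta)\Delta\psi\Delta\psi^\top\bigr].
\]
By \eqref{eq:Zmax_theta}, $\sigma(z)\sigma(-z)\ge 1/(2(1+\exp(Z_{\max})))$. Combined with Assumption~\ref{asn:full-coverage}, this makes each task loss $\mu$-strongly convex with $\mu \gtrsim \beta^2\nu/(1+\exp(Z_{\max}))$. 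Since $\|\Delta\psi\|_2\le 2$, each task loss is also $L$-smooth with $L\le \beta^2$ (up to constants). The Hessian is $\rho$-Lipschitz with $\rho=O(\beta^3)$, because the derivative of $\sigma(1-\sigma)$ is bounded by $1$ and $\|\Delta\psi\|^3\le 8$.

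\textbf{Step 2: strong convexity of the meta-loss.} The Hessian of $\calL(\theta-\alpha\nabla\calL(\theta;\tau);\tau)$ is
\[
(I-\alpha H_\tau(\theta))\,H_\tau(\theta^\tau)\,(I-\alpha H_\tau(\theta)) \;-\; \alpha\,\nabla^3\calL(\theta;\tau)\bigl[\nabla\calL(\theta^\tau;\tau)\bigr].
\]
When $\alpha L\le 1/2$, the first term is at least $\tfrac14\mu I$. The third-derivative term is at most $\alpha\rho\,\|\nabla\calL(\theta^\tau;\tau)\|$, and this gradient norm is bounded by $2\beta$.

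The stated restriction on $\alpha$ is designed to make this perturbation at most a fraction of $\mu$. I expect this to be the main obstacle. A crude bound only gives $\alpha\beta^4\lesssim \mu$, which requires $\alpha\lesssim \nu/(\beta^2(1+e^{Z_{\max}}))$; that is exactly the form of the hypothesis. However, tracking constants so that the surviving curvature matches the claimed $m_M=c\beta^2\nu^3/(1+e^{Z_{\max}})^3$ will likely require a sharper step. My first attempt would be to bound $H_\tau(\theta^\tau)$ from below through the curvature ratio $\mu/L$ evaluated at the perturbed point, squaring the $(I-\alpha H)$ factors against the coverage constant. This should produce the cubic dependence on $\nu/(1+e^{Z_{\max}})$. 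The Lipschitz-gradient constant of $\calL_M$ then follows similarly, with $L_M\le (1+\alpha L)^2L+\alpha\rho\cdot 2\beta = O(\beta^2)$.

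\textbf{Step 3: stochastic first-order MAML analysis.} I would view the outer update of Algorithm~\ref{alg:FO_MAML RLHF} as SGD on $\calL_M$ with a biased, noisy gradient estimate. There are three error sources:
\begin{itemize}
\item the inner-gradient noise from $\calD_{\text{in}}$, whose variance is $O(\beta^2/D_{\text{in}})$ and hence $O(\beta\sigma)$ with $D_{\text{in}}=\beta/\sigma$;
\item the outer-sample noise from $\calD_o$;
\item the task-sampling variance, which is $\sigma^2/B$ by Assumption~\ref{ass:variance_of_task_theta}.
\end{itemize}
Following \citet{fallah2020convergence}, the bias is $O(\alpha\sigma)$ and, with $B=1/\alpha^2$, the variance is $O(\alpha^2\sigma^2)$. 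Strong convexity then gives the standard inequality $m_M\|\theta-\thM\|\le\|\nabla\calL_M(\theta)\|$. After $T=O(L/(\alpha^2\sigma^2))$ iterations the expected gradient norm at the output is at most $2\alpha\sigma$, so $\|\thM-\widehat\theta_M\|\le 2\alpha\sigma/m_M$. The argument is identical to the proof of Theorem~\ref{theorem: meta-RLHF convergence}, with $\beta$-rescaled constants.
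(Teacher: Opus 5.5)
Your overall structure matches the paper's: per-task curvature bounds, the split of the meta-Hessian into $(I-\alpha H_\tau(\theta))H_\tau(\theta^\tau)(I-\alpha H_\tau(\theta))$ minus a third-derivative term, then Fallah et al.'s first-order MAML bound turned into a distance bound via strong convexity. The genuine gap is in Step~2, which is exactly where the stated $m_M$ comes from.

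Write $K=1+e^{Z_{\max}}$. Then $\mu_\tau=\beta^2\nu/(2K)$, and the hypothesis reads $\alpha\beta^2\le \nu/(2\nu+8K)$.
\begin{itemize}
\item Your simplification $(1-\alpha\beta^2)^2\ge\tfrac14$ leaves curvature $\beta^2\nu/(8K)$.
\item Even the sharp perturbation bound $4\alpha\beta^4$ equals $4\beta^2\nu/(2\nu+8K)$ at the threshold. This exceeds $\beta^2\nu/(8K)$, because $\nu\le 4<12K$.
\item Your looser $\rho\le 8\beta^3$ makes the perturbation bound four times larger still.
\end{itemize}
So, as written, Step~2 does not even give positive definiteness on the stated range of $\alpha$.

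Your premise that the restriction keeps the perturbation ``a fraction of $\mu$'' is also off. At the threshold the perturbation is almost all of the curvature. The proposed repair via a sharper lower bound on $H_\tau(\theta^\tau)$ is not what is needed; $\mu_\tau I$ is the only bound used.

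The missing step is exact arithmetic with sharp constants. Use $\|\nabla^3\calL\|_{\mathrm{op}}\le 2\beta^3$ (from $|\ell'''|\le\tfrac14$ and $\|\Delta\psi\|_2\le 2$) and $\|\nabla\calL\|_2\le 2\beta$, and keep the factor $(1-\alpha\beta^2)^2$ exact. This gives $m_M(\alpha)=(1-\alpha\beta^2)^2\mu_\tau-4\alpha\beta^4$, which is decreasing in $\alpha$. At $\alpha\beta^2=\nu/(2\nu+8K)$ one has $1-\alpha\beta^2=(\nu+8K)/(2\nu+8K)$, and
\[
m_M=\frac{\beta^2\nu\bigl[(\nu+8K)^2-8K(2\nu+8K)\bigr]}{2K(2\nu+8K)^2}=\frac{\beta^2\nu^3}{2K(2\nu+8K)^2}\;\ge\;\frac{\beta^2\nu^3}{288\,K^3},
\]
using $\nu\le 4$ and $K\ge 2$. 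The $K^2$ and $\nu K$ terms cancel exactly and only $\nu^2$ survives. So the cubic dependence is a near-cancellation effect at the edge of the admissible step size, not the result of refined curvature.

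Your Step~3 follows the paper's route, including the choices $B=1/\alpha^2$, $D_{\mathrm{in}}=\beta/\sigma$, $D_o=O(1)$ and $\varepsilon=\alpha\sigma$. One caution applies equally to the paper's proof. With $\tilde\sigma^2=O(\beta^2/D_{\mathrm{in}})=O(\beta\sigma)$, as you compute, the $\tilde\sigma^2/D_{\mathrm{in}}$ term in the bound the paper quotes from Fallah et al.\ is $O(\sigma^2)$, not $O(\alpha^2\sigma^2)$. So the claim that every noise contribution is $O(\alpha^2\sigma^2)$ needs rechecking against that theorem's exact form. Also, the final distance bound holds in expectation, not deterministically.
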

\begin{proof}
    Provided in Appendix \ref{appendix: Proof of Convergerence for Meta-DPO}.
\end{proof}

\begin{remark}
Theorem~\ref{theorem: meta-DPO convergence} is the DPO counterpart of Theorem~\ref{theorem: meta-RLHF convergence}. It shows that Algorithm~\ref{alg:FO_MAML RLHF} can also approximate the minimizer of the DPO meta objective, i.e., $\widehat{\theta}_M$ converges to $\theta_M^\star$, with accuracy controlled by the inner-loop step size $\alpha$, the task-gradient variability $\sigma$, and the conditioning of the meta objective through $m_M$ (the strong-convexity constant of $\mathcal L_M$ prescribed in the theorems). Together, these results indicate that the same meta-learning pipeline can be applied to both RLHF (meta-reward learning) and DPO (meta-policy learning), yielding comparable convergence guarantees under analogous coverage and task-variance assumptions.\end{remark}

\begin{remark}
    We analyze an unconditional improvement of MAML-DPO's adaptation speed over baseline DPO in \Cref{appendix: MAML-DPO vs baseline-DPO}. As long as the inner learning rate is small enough, $\alpha \le \min\set{1/(8\beta^2),\, m_M/(2\sqrt{2}\,\mu_\tau)}$ (where $\mu_\tau$ is the strong-convexity constant of the per-task DPO loss; see \Cref{lemma:task_smooth_sc_theta}), \Cref{thm: maml-vs-baseline} shows the adaptation of MAML-DPO is faster than baseline DPO. This result further shows that MAML-DPO obtains greater acceleration when the gap between the SFT-initialized policy and preference-optimal policy is bigger, which corresponds to cases where the SFT policy is not yet good enough on the target task. The comparison of MAML-DPO to Multitask-DPO is analyzed in \Cref{thm:accelerated_MAML_DPO_wrt_Multitask}: we show that the relative improvement of MAML-DPO over Multitask-DPO depends on a task-geometry parameter $\Phi$ that measures the alignment between the MAML correction direction and the direction from the multitask solution $\theta_{\rm base}^\star$ to the unweighted task-optima centroid $\bar{\theta}^\star = \E_\tau[\theta^\star_\tau]$. When $\Phi$ is positive and $\alpha$ is small enough, MAML-DPO shows faster adaptation than Multitask-DPO.
\end{remark}
\section{Experiments}
We evaluate the proposed meta-learning frameworks in a multilingual instruction-following setting, where
each language is treated as a separate task. 
Across multiple model sizes and languages, we assess three key aspects of the proposed approach: (i) sample efficiency under limited target-language data, (ii) performance when sufficient
adaptation data is available, and (iii) robustness to linguistic distance and different meta-training language groups.\looseness=-1

\subsection{Experiment setup} \label{sec:experiment_settings}

We provide further details of our experiment settings in \Cref{appendix: Further Details of Experimental Setting}, and provide code of our experiments. \footnote{Source code is available at \href{https://github.com/geronest/maml-preferences}{https://github.com/geronest/maml-preferences}}

\textbf{Datasets.} We use two multilingual preference datasets for the experiments. We use the Okapi dataset \cite{dac2023okapi}, derived from Alpaca-64K \cite{alpaca}, which provides preference datasets of over 26 languages with over 40k preference pairs per language. We also use \texttt{multilingual/orca\_dpo\_pairs} publicly available on Huggingface, which provides at least 3.4k preference pairs across 7 languages.
\looseness=-1

{\textbf{Language models.}} 
We evaluate our methods under two different model configurations. In the first configuration, we use BLOOM \cite{bigscience_workshop_2022} with 7.1B parameters for both the RLHF and DPO settings, which follows a GPT-3-like architecture and was originally pretrained on a multilingual corpus. 
To examine performance under smaller model scales, we additionally consider a lightweight configuration based on Gemma3 \cite{team2025gemma}, which supports efficient fine-tuning for rapid adaptation. 
In this configuration, the policy model is replaced by a Gemma3 model with 270 million parameters, and a lightweight BLOOM model with 1.7B parameters is used as the reward model in RLHF. To improve training efficiency, we apply LoRA adapters for both BLOOM models during fine-tuning to reduce memory consumption. \looseness=-1

\textbf{Baselines.} In order to examine the efficiency of our approach, we compare against \textit{standard} RLHF and DPO pipelines trained with the same amount of target-language preference data as the meta-trained models. 
We additionally include a \textit{multitask} baseline that uses the same multilingual training data as the meta-learning methods, but removes the inner-loop adaptation step (i.e., setting the inner loop learning rate $\alpha=0$ in Algorithm \ref{alg:FO_MAML RLHF}). This baseline allows us to isolate the effect of meta-learning from standard multitask pretraining. To provide a stronger cross-lingual transfer baseline, we also include translation-test approaches, which uses \texttt{gpt-4o} to translate target language prompts and responses to a high-resource language (e.g. French) and translate it back to the target language. Further details of the experiment configurations are provided in \Cref{appendix: Further Details of Experimental Setting}.
\looseness=-1

\textbf{Evaluation metrics.}
\begin{figure}[t]

    \centering
    \includegraphics[width=0.9\linewidth,trim={0 0 0 0},clip]{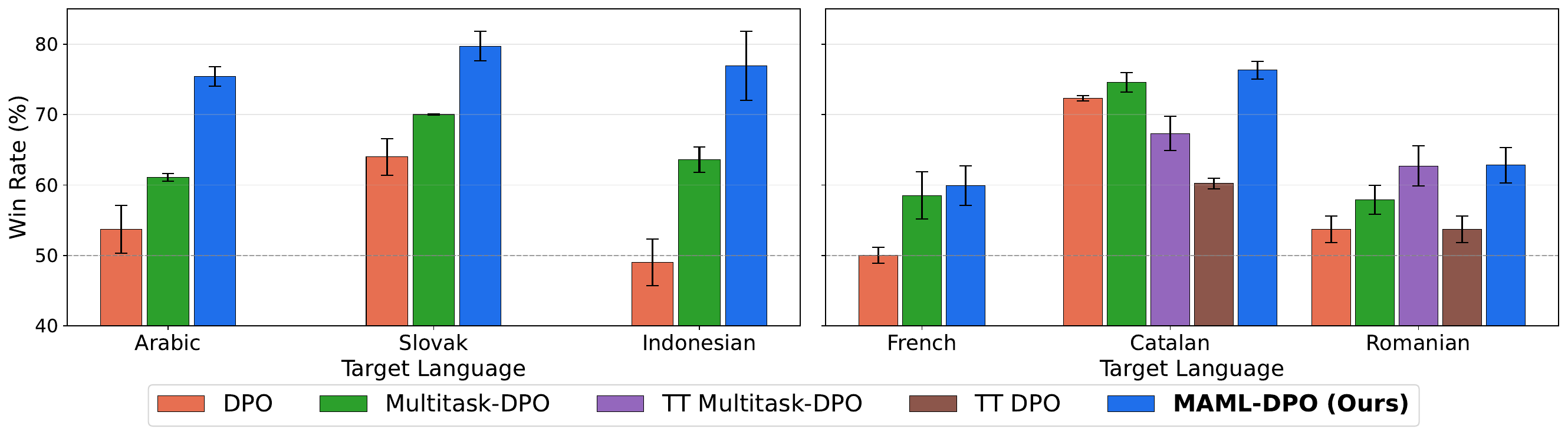}

    \caption{[Left] Win rate of Gemma 270M models adapted to 100 samples. MAML-DPO models significantly outperform baselines with higher win rates.  [Right]
    Win rate of BLOOM 7.1B models adapted to 100 samples. MAML-DPO models show comparable or higher win rates than the baselines, including Translate-Test (TT) with \texttt{gpt-4o} as translator.\looseness=-1 }
    \label{fig:mamldpo-exp1-exp2}
\end{figure}
We evaluate how quickly and robustly the models adapt to the limited target language data using separately trained and fixed reward models, referred to as the \textit{Judge reward models (RM)}. Each configuration uses its corresponding reward model architecture as the Judge RM, which is fully fine-tuned on the entire preference dataset and used exclusively for evaluation. The Judge RMs compute \textit{win rates} by comparing model-generated responses against a fixed base policy, defined as the initial policy model without any training or adaptation. During evaluation, all policy models generate responses to the same set of 1,000 prompts sampled from the test split of policy optimization dataset. 
\Cref{appendix: additional experimental results} also includes the \textit{reward accuracies} of models in the validation split of the preference dataset to examine how the performances of reward function evolve during the training.\looseness=-1

\subsection{Main results}
\label{sec:experiment results}

\textbf{How do meta-trained models perform under limited adaptation data?} 
We first evaluate adaptation performance in an extremely low-resource situation, using only 100 preference samples from
the target language. In this setting, the model repeatedly adapts to a fixed and small set of samples, resulting in both
limited data quantity and limited data diversity during adaptation.  
We evaluate model adaptation on six target languages, reporting results for three Romance languages from the same family as the meta-training tasks and for three languages from different families in \Cref{fig:mamldpo-exp1-exp2}. We compare the win rates of different models against the base model using the judge RM as described in \Cref{sec:experiment_settings}. 
We report standard errors over three random seeds and show that MAML-DPO consistently outperforms baseline DPO and translation-test baseline. Figure \ref{fig:mamldpo-exp1-exp2}-left shows that when adapting to target languages from different language families, MAML-DPO significantly outperforms the multi-task baseline.
RLHF experiment results in \Cref{appendix:RLHF_results} 
show similar pattern, where MAML-RLHF models achieve consistently higher win rates across all target languages. Figure \ref{fig:mamldpo-exp1-exp2}-right demonstrates the performance of MAML-DPO comparable or better than the baselines when adapted to European languages, despite the translate-test baseline's use of a strong model \texttt{gpt-4o}. \Cref{fig:bloom7b1-languagegroups-orca}-right shows that also in the multilingual \texttt{orca-dpo} dataset, MAML-DPO significantly outperforms baseline DPO with almost the 40\% win rate gap in Turkish.

\looseness=-1

\textbf{How do meta-trained languages affect the adaptation performance?} We investigate the effect of languages used for meta-training on the adaptation performance in \Cref{fig:bloom7b1-languagegroups-orca}. We test three different groups for meta-training: Romanian, Catalan, Italian, Spanish (Group~1, Romance); Vietnamese, Indonesian, Arabic, Hindi (Group~2, mixed non-Romance); Chinese, Bengali, Malayalam, Kannada (Group~3, mixed non-Romance). We additionally vary the number of meta-training languages by sub-sampling Group~1 to its first two members (Romanian, Catalan) for the ``Group~1 (2 languages)'' bar in \Cref{fig:bloom7b1-languagegroups-orca}~(left). The adaptation target is fixed to French. Across three random seeds, the win rates of all three groups overlap within standard error, and Group~2 shows the highest point estimate despite being linguistically distant from French. This result is consistent with our theoretical bound \eqref{eq: clean asymptotic gap}, which depends on task-gradient variance $\sigma^2$ and the SFT preference-loss gap $R_{\rm SFT}$ rather than on linguistic distance to the target. Increasing the number of meta-training languages produces a small additional gain.\looseness=-1

\begin{figure}[t!]
    \hspace{30pt}
    \includegraphics[width=0.375\linewidth]{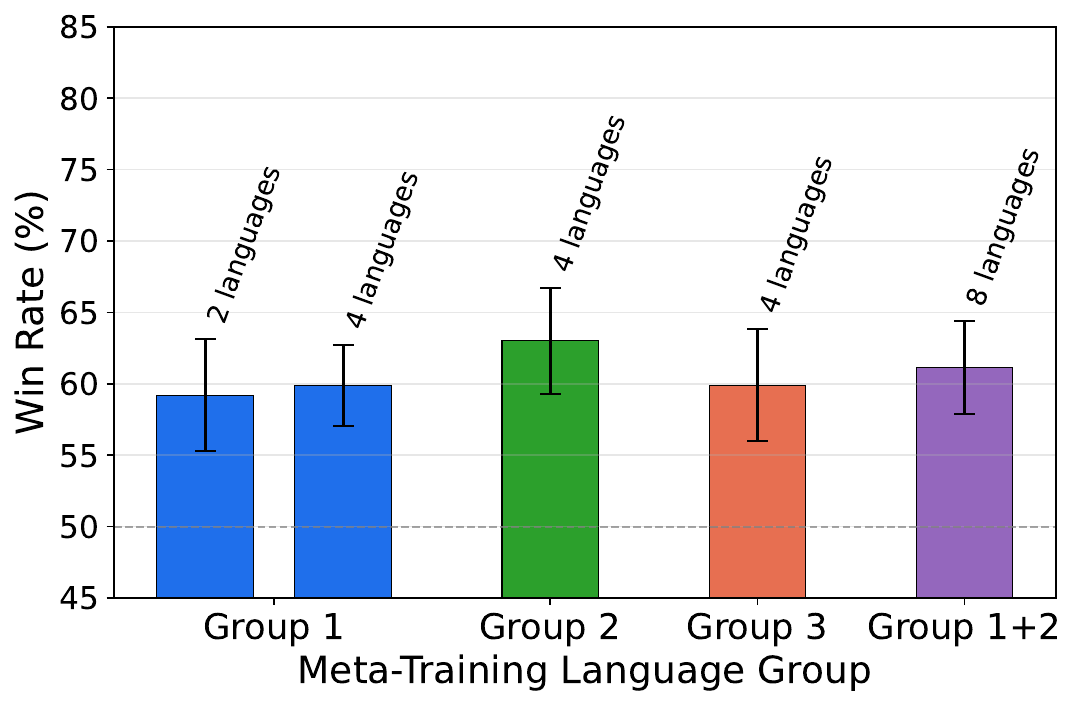}
    \hspace{10pt}
    \includegraphics[width=0.4\linewidth]{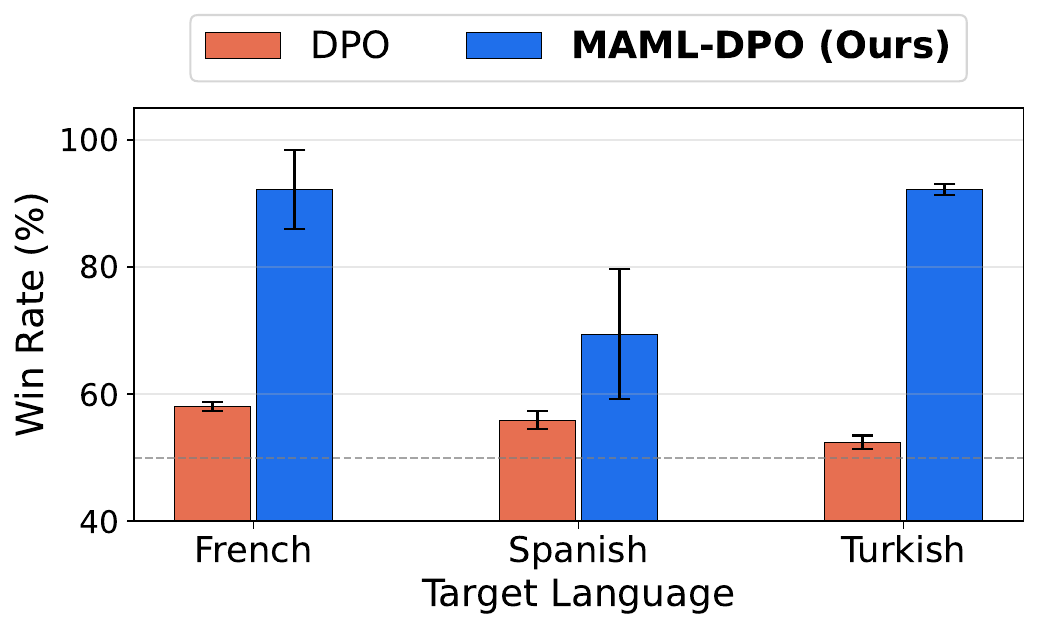}
        
    \caption{[Left] Win rates of BLOOM 7.1B models with different language groups for the MAML-DPO stage, adapted to 100 French samples. Neither the number of languages nor linguistic distance significantly affects adaptation performance. [Right] Win rate of BLOOM 7.1B models adapted to 100 samples, using multilingual \texttt{orca-dpo} dataset. MAML-DPO shows strong improvement over baseline-DPO.\looseness=-1}
    \label{fig:bloom7b1-languagegroups-orca}
\end{figure}

\textbf{Do meta-trained models still have advantage under sufficient adaptation data?} 
In this setting, we adapt models using 4K and 40K samples for MAML-DPO, while for MAML-RLHF we use 12K and 40K samples. 
As shown in \Cref{tab:WR_gemma_4k_40k_samples}, MAML-DPO achieves the highest win rates in every case and preserves its advantage as the data scale increases. 
MAML-RLHF (\Cref{tab:WR_gemma_12k_40k_RLHF}) outperforms vanilla RLHF in every setting and is competitive with multitask-RLHF, which is consistent with multitask-RLHF being a strong baseline when the target language is closely aligned with the Romance meta-training tasks. These results show that meta-training accelerates early adaptation while preserving competitive performance with sufficient target data.\looseness=-1

\section{Related work}\label{sec:related_work}
While most LLMs possess multilingual capabilities, their performance across languages is unequal \citep{bawden2023investigating, thellmann2024towards, dong2025evaluating, WANG2026104616}. Previous works have investigated improving multilingual capability of LLMs by curating data \citep{chen2023tigerbotopenmultilingualmultitask, yue2024pangea, kallappa2025krutrimllmmultilingualfoundational}, adopting a new objective during pretraining \citep{apertus2025apertusdemocratizingopencompliant}, in-context learning \citep{li2024language,cahyawijaya2024llms, li2024improving}, and fine-tuning on multilingual datasets \citep{dac2023okapi, lai2024llms}.
Improving few-shot capabilities of deep neural networks have been investigated from meta-learning perspectives \citep{vinyals2016matching, finn2017model, hospedales2021meta}. Application of meta-learning to LLMs have been studied, but most of them discuss in-context learning scenarios \citep{chen-etal-2022-meta, codaforno2023metaincontextlearninglargelanguage, sinha2024maml, li2024metaincontextlearningmakes,  hili2025llmsincontextmetalearnersmodel, li2024mendmetademonstrationdistillation}. \citet{hu2023meta} meta-train a separate autoregressive model to train importance weighting of tokens in a document, which is then used to fine-tuning an LLM. A complementary line of work addresses multilingual preference alignment via gradient-level interventions, filtering conflicting per-language gradients during joint preference training to mitigate negative interference across languages \citep{li2026congrad}.  Our approach does not modify the gradient of joint training but replace it with a meta-objective that explicitly optimises for fast \emph{adaptation} to a held-out target language. To the best of our knowledge, our work is the first approach to addressing under-representation of low-resource languages by applying meta-learning to the preference learning pipeline, providing theoretical guarantees and demonstrating improved multilingual capabilities. See \Cref{appendix: additional related work} for further discussion on related work.\looseness=-1

\section{Conclusion}

We presented a meta-learning framework for aligning multilingual LLMs using human preference data. 
By treating each language as a task, we develop MAML-RLHF and MAML-DPO as two meta-learning algorithms that leverage preference data across languages to enable efficient adaptation to low-resource languages.
We provided theoretical guarantees and empirically demonstrated substantial improvements in sample efficiency across multiple languages and model sizes. In particular, meta-trained models achieve higher win rates than standard RLHF and DPO baselines in low-resource languages, while maintaining or improving performance as more adaptation data become available. 
Importantly, these gains remain consistent across varying language distances between meta-training and target languages, and are not sensitive to the choice of meta-language combinations.
Overall, our work demonstrates meta-learning as an effective way to address data imbalance in multilingual alignment. While we focused on preference learning, extending our approach to reinforcement learning stages and reducing end-to-end pipeline compute are promising directions for future work.

\clearpage

\newpage

\bibliographystyle{plainnat}
\bibliography{main}
\newpage

\appendix
\addcontentsline{toc}{section}{Appendix}

\part{Appendix}
\parttoc
\input{appendix}


\end{document}

%% file: arxiv_preamble.tex
\usepackage{etoolbox}

\usepackage{savesym,multirow,url,xspace,graphicx,hyperref,enumitem,color}
\usepackage[toc,page,header]{appendix}
\usepackage{minitoc}

\usepackage{dsfont}
\usepackage{multicol}
\usepackage{dsfont}
\usepackage{setspace}

\usepackage{tikz}
\usetikzlibrary{automata, positioning, arrows}
\tikzset{
	->, 
	every state/.style={thick, fill=gray!10}, 
	initial text=$ $, 
}
\usetikzlibrary{arrows.meta}
\usepackage{pgfplots}
\pgfplotsset{width=10cm,compat=1.9}
\usepackage{diagbox}
\usepackage{caption}

\makeatletter
\newif\if@restonecol
\makeatother
\usepackage{amsmath,amssymb,xfrac,amsthm}
\usepackage{mathtools}
\usepackage{wrapfig}
\setitemize{noitemsep,topsep=1pt,parsep=1pt,partopsep=1pt, leftmargin=12pt}
\makeatletter

\newcommand{\Rmnum}[1]{\expandafter\@slowromancap\romannumeral #1@}
\makeatother
\usepackage{caption}

\usepackage{enumitem}
\usepackage{wasysym}

\DeclareMathOperator*{\argmin}{arg\,min}
\DeclareMathOperator*{\argmax}{arg\,max}

\usepackage{xcolor}

\usepackage{bm}
\usepackage{bbm}
\usepackage{cleveref}

\newcommand{\norm}[1]{\left\lVert#1\right\rVert}

\newcommand{\abs}[1]{\left|#1\right|}

\newcommand{\set}[1]{\left\{ #1 \right\}}

\renewcommand{\bar}{\overline}

\newcommand{\E}{\mathbb{E}}
\newcommand{\calB}{\mathcal{B}}
\newcommand{\calI}{\mathcal{I}}
\newcommand{\calL}{\mathcal{L}}

\newcommand{\calD}{\mathcal{D}}

\newcommand{\R}{\mathbb{R}}

\usepackage{xargs}                      
\PassOptionsToPackage{dvipsnames}{xcolor}  
\newcommandx{\task}[2][1=]{\todo[linecolor=red,backgroundcolor=red!25,bordercolor=red,#1]{#2}}
\newcommandx{\change}[2][1=]{\todo[linecolor=blue,backgroundcolor=blue!25,bordercolor=blue,#1]{#2}}
\newcommandx{\info}[2][1=]{\todo[linecolor=green,backgroundcolor=green!25,bordercolor=green,#1]{#2}}

\usepackage[toc,page,header]{appendix}
\usepackage{minitoc}

\newtheorem{theorem}{Theorem}[section]

\newtheorem{lemma}[theorem]{Lemma}

\theoremstyle{definition}

\newtheorem{assumption}[theorem]{Assumption}
\theoremstyle{remark}
\newtheorem{remark}[theorem]{Remark}

\newtheorem*{statement*}{Statement}

%% file: appendix.tex
\appendix
\newpage

\section{Further Details of Experimental Setting}
\label{appendix: Further Details of Experimental Setting}
In this section, we explain the remaining details of the experimental setting.

\textbf{Datasets.} For each language task, the Okapi dataset provides 52K instruction–response pairs for supervised fine-tuning, 42K preference comparisons for reward modeling or DPO, and 64K instructions for policy optimization. 
For meta-training with Okapi, we use 12K samples from each language for DPO and 16K samples from each language for reward modeling. 
For the target languages, we use 100 or more samples for each of SFT, reward modeling, and policy optimization stage. Unless specified, MAML-based or Multitask- approaches use one of French, Italian, Spanish, Catalan, Romanian as the target language, while the others are used for MAML/Multitask training stage.
For \texttt{multilingual/orca\_dpo\_pairs}, we use 3 sets of 1K samples for each of SFT, preference optimization for policies, preference learning for judge reward functions. 2 sets of 200 samples are used for each of preference learning validation and win rate evaluation using judge reward models. For MAML/Multitask training, we use Arabic, German, Russian and Chinese, while using the remaining French, Spanish, Turkish for adaptation.

\begin{wrapfigure}{r}{0.5\textwidth}
    \includegraphics[width=\linewidth]{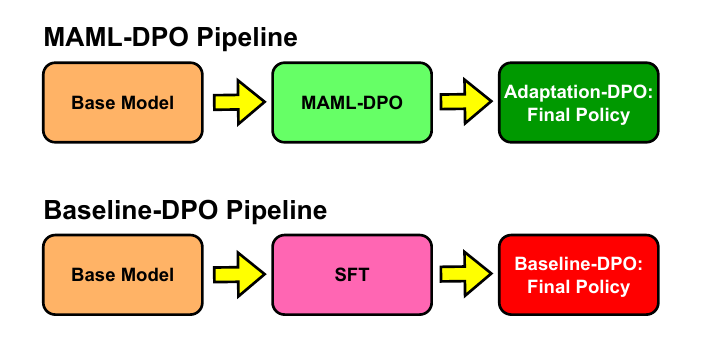}
    \caption{Visualization of DPO training pipelines. MAML-DPO pipeline does not use SFT, instead initializing the Adaptation-DPO policy with MAML-DPO algorithm on several non-target languages with sufficient data.}
    \label{fig:pipeline_DPO}
\end{wrapfigure}

\textbf{MAML-DPO.} The base model is directly trained with MAML-DPO using 4 out of 5 languages being considered in \Cref{sec:experiment_settings}. 
For BLOOM with 7.1B parameters, MAML-DPO is trained for 400 steps, while each step accumulates gradients over 40 iterations of choosing a task and sampling 6 datapoints. We use $3 \times 10^{-5}$ for learning rate, while 3 is multiplied to the learning rate for inner loop updates. For each outer loop update, we use 2 consecutive inner loop steps. Once meta-training is complete, we apply adaptation DPO. We use batch size 20 and 2 gradient accumulation steps with $1 \times 10^{-5}$ as the learning rate. For experiments with 100 and 8K samples, we train the models over 200 steps and use the model showing the lowest validation loss. For experiments with 40K samples, we train the model for 2K steps.

For Gemma-3 with 270M parameters, MAML-DPO is meta-trained for 300 steps. At each outer step, the model accumulates gradients over 40 iterations by selecting 2 tasks and sampling 2 data points per task. We use a single inner loop step for each outer loop update. The learning rate configuration is chosen from the pairs $(5 \times 10^{-5},\, 5)$ and $(1 \times 10^{-5},\, 3)$ for the outer learning rate and inner-loop learning coefficient.
After meta-training, the model is adapted using DPO. For experiments with 100 samples, adaptation is performed for 100 steps using a batch size of 2 with 20 gradient accumulation steps. For experiments with 4k and 40k samples, adaptation is trained over 1000 steps with a batch size of 2, using 2 and 20 gradient accumulation steps, respectively. The adaptation learning rate is selected between $1 \times 10^{-4}$ and $ 5 \times 10^{-6}$.

We visually describe the pipeline of our DPO training pipeline in \Cref{fig:pipeline_DPO}.

\begin{figure}[h!]
    \centering
    \includegraphics[width=0.45\linewidth]{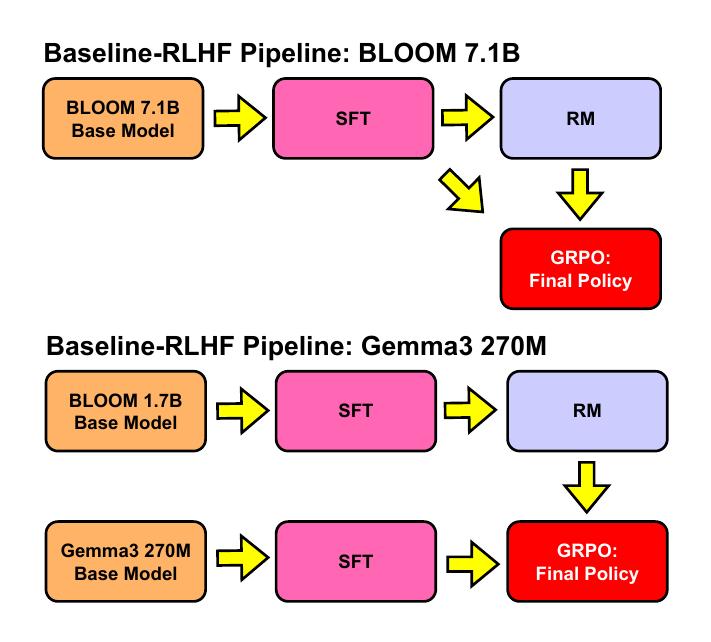}
    \caption{Visualization of baseline RLHF pipeline for both model architectures. For Gemma3 270M policies, BLOOM 1.7B is used for initializing the reward function.}
    \label{fig:pipeline_baselineRLHF}
\end{figure}

\textbf{MAML-RLHF.} Before reward modeling, MAML-SFT is first applied to the base model, followed by MAML-RM and then adaptation reward modeling.  
For BLOOM with 7.1B parameters, we use 100 outer steps for MAML-SFT with $3 \times 10^{-5}$ as the learning rate and multiplying 3 to it for a single inner loop update. Each outer step includes 40 gradient accumulation steps of sampling 2 datapoints from each of 2 tasks. MAML-RM uses the same learning rate and datapoint sampling hyperparameters as MAML-SFT, except that MAML-RM is run for 400 steps. 
Adaptation reward modeling fine-tunes the MAML-RM trained model with $1 \times 10^{-5}$ as the learning rate and 40 as the batch size without gradient accumulation. For 100 samples, we use 100 steps of reward modeling while we use 200 steps for 8K samples. 
The policy is first trained with SFT, using $1\times 10^{-4}$ as the learning rate and 40 as the batch size without gradient accumulation. For experiments with 8K samples, SFT is run for 200 steps.
The final fine-tuning of MAML-RLHF pipeline is applying GRPO to the SFT model using the adapted reward model. We use 400 prompts for training, while each of 100 training steps consists of generating 4 responses from each prompt. We use batch size 4 and 4 gradient accumulation steps with learning rate of $1\times10^{-4}$.

For Gemma 3 with 270M parameters, MAML-SFT is trained for 3000 outer steps with an outer learning rate of $5 \times 10^{-4}$, using an inner-loop learning rate scaled by a factor of 3. Each outer step samples 2 data points from each of 2 tasks per step, without extra accumulation steps. Initialized from the MAML-SFT model, the SFT policy is fine-tuned with a learning rate of $1 \times 10^{-4}$ and a batch size of 4 in 3000 steps, without extra accumulation steps. MAML-RM is trained for 300 steps with 40 gradient accumulation steps, sampling 4 data points from each of 2 tasks per accumulation step. The learning rate configuration is selected from $(5 \times 10^{-4},\, 3)$ and $(3 \times 10^{-4},\, 1)$ for the outer learning rate and inner-loop coefficient. We use 8 data points with 5 gradient accumulation steps for adapting reward models. It is trained for 100, 300, and 1000 steps in the 100-, 12k-, and 40k-sample experiments, respectively, with adaptation learning rates selected between $5 \times 10^{-4}$ and $5 \times 10^{-5}$. Using the adapted reward model for the target language, the SFT policy is further optimized via GRPO for 1000 steps with batch size 4 and no gradient accumulation, where the GRPO learning rate is selected between $5 \times 10^{-5}$ and $5 \times 10^{-6}$.

Visualization of the baseline RLHF pipeline for both models is presented in \Cref{fig:pipeline_baselineRLHF}. See \Cref{fig:pipeline_MRLHF} for the MAML-RLHF pipelines for each model architecture.

\begin{figure}
    \centering
    \includegraphics[width=0.55\linewidth]{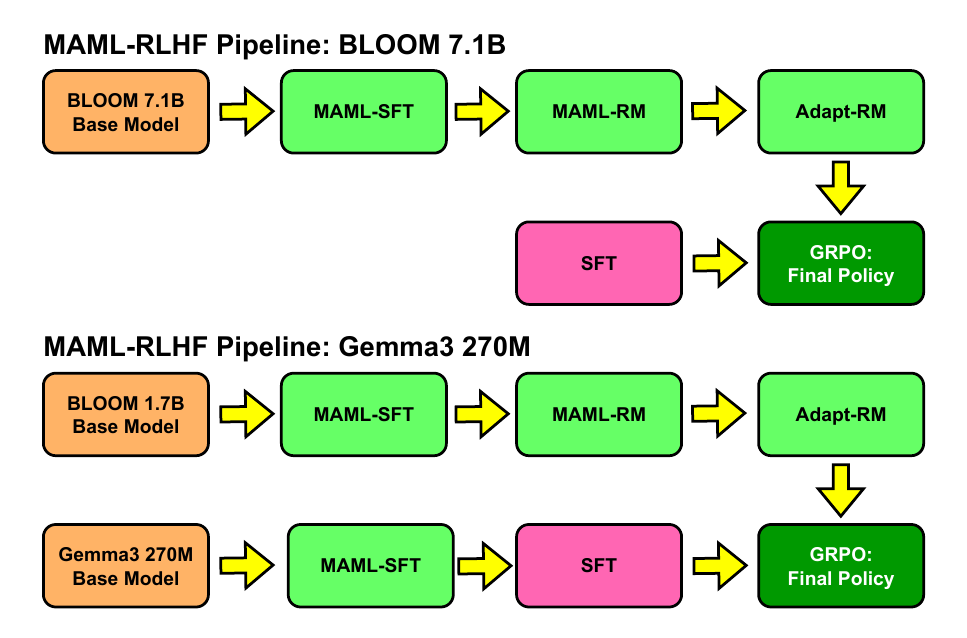}
    \caption{Visualization of MAML-RLHF pipeline for BLOOM 7.1B model and Gemma3 270M model. The main difference between the models is that for Gemma3 270M model, we use a separate BLOOM 1.7B model as the base model of reward functions.}
    \label{fig:pipeline_MRLHF}
\end{figure}

\textbf{Compute Resources.} For experiments with BLOOM 7.1B models, we use one NVIDIA H200 GPU. For experiments with Gemma3 270M models, we use NVIDIA Quadro RTX 6000 GPU.

\newpage
\section{Additional Experimental Results}
\label{appendix: additional experimental results}

\subsection{Additional Results of MAML-DPO with LoRA adapters}
\paragraph{Limited adaptation data.} Based on Gemma3 270M, \Cref{tab:winrate_DPO_100samples_gemma_cross_language_family} provides the numerical results of adaptation performance for MAML-DPO using only 100 samples from non-Romance target languages, where MAML-DPO substantially outperforms both the multitask and standard DPO baselines. Since our meta-DPO policies are trained with 4 Romance languages, these results suggest that the proposed meta-learning approach can retain its adaptation advantage even when the target languages are linguistically distant from the meta-training tasks. This is particularly important because larger linguistic distances can reduce the transferability of shared multilingual representations, thereby weakening the effectiveness of standard multitask training.

\begin{table}[H]
\centering
    \begin{small}
    \centering
    \caption{Win rate of Gemma3 270M adapted to 100 samples from each non-Romance target language in the DPO pipeline. }
\label{tab:winrate_DPO_100samples_gemma_cross_language_family}
    \begin{tabular}{cccc}
        \toprule
        Target & Algorithm & Samples & WR (\%) \\
        \midrule
        \multirow{3}{*}{Arabic} & MAML-DPO & 100 & $\mathbf{75.36} \pm 1.39$ \\
         & Multitask & 100 & $61.07 \pm 0.54$ \\
         & DPO & 100 & $53.72 \pm 3.35$ \\
         \midrule
        \multirow{3}{*}{Slovak} & MAML-DPO & 100 & $\mathbf{79.68} \pm 2.11$ \\
         & Multitask & 100 & $70.00 \pm 0.12$ \\
         & DPO & 100 & $64.03 \pm 2.55$ \\
         \midrule
        \multirow{3}{*}{Indonesian} & MAML-DPO & 100 & $\mathbf{76.91} \pm 4.90$ \\
         & Multitask & 100 & $63.57 \pm 1.76$ \\
         & DPO & 100 & $48.95 \pm 3.31$ \\
         \bottomrule
    \end{tabular}
    \end{small}
\end{table}

\Cref{tab:winrate_DPO_100samples_multitask_gemma} presents the adaptation performance of MAML-DPO on Romance target languages based on Gemma. Across all three target languages, MAML-DPO consistently demonstrates strong adaptation ability and substantially outperforms conventional DPO. Compared with the multitask baseline, MAML-DPO achieves either comparable or better performance. This indicates that our method remains competitive even when the target language is linguistically close to the meta-training languages, where multitask training is expected to be a relatively strong baseline.

\begin{table}[H]
    \centering
    \small
    \captionof{table}{Win rate of Gemma3 270M adapted to 100 samples from each target language in the DPO pipeline.}
    \label{tab:winrate_DPO_100samples_multitask_gemma}
    \begin{tabular}{cccc}
        \toprule
        Target & Algorithm & Samples & WR (\%) \\
        \midrule
        \multirow{3}{*}{French} & MAML-DPO & 100 & $\mathbf{66.88} \pm 1.47$ \\
         & Multitask & 100 & $66.28 \pm 0.84$ \\
         & DPO & 100 & $53.81 \pm 5.76$ \\
         \midrule
        \multirow{3}{*}{Catalan} & MAML-DPO & 100 & $\mathbf{70.55} \pm 0.72$ \\
         & Multitask & 100 & $69.75 \pm 0.29$ \\
         & DPO & 100 & $63.40 \pm 1.76$ \\
         \midrule
        \multirow{3}{*}{Romanian} & MAML-DPO & 100 & $\mathbf{79.83} \pm 0.26$ \\
         & Multitask & 100 & $59.51 \pm 0.94$ \\
         & DPO & 100 & $51.56 \pm 2.97$ \\
         \bottomrule
    \end{tabular}
\end{table}

\paragraph{Sufficient adaptation data.} Due to the reduced sample diversity and substantially higher computational cost associated with larger amounts of adaptation data, we report standard errors only for the 100-sample experiments. Results in \Cref{tab:WR_bloom_DPO_8k_40k_samples} show the performances of BLOOM 7.1B DPO models when sufficient amount of adaptation samples are provided, from 8K to 40K. While MAML-DPO outperforms the baselines across all target languages, we note that the performance gap between the MAML-DPO models and baseline models decrease as the number of available adaptation samples increases. This is expected because the number of meta-training samples is less than 20K, whose advantageous effect on the adaptation stage is supposed to diminish as the number of adaptation datapoints exceeds it.

\begin{table}[h]
    \centering
    \small
    \caption{Win rate evaluation of BLOOM 7.1B models adapted to 8k/40k samples from each target language. Not only does MAML training stage help models obtain a higher performance with a small amount of samples, but also shows better performance improvement with a larger amount of data.}
    \begin{tabular}{cccc}
        \toprule
        Target & Algorithm & Samples &  WR(\%) \\
        \midrule
        \multirow{5}{*}{French} 
        & MAML-DPO & 8000 & $\textbf{66.2}$ \\
        & Multitask-DPO & 8000 & $56.9$\\
          & DPO & 8000 &  $57.8$\\ 
        \cmidrule(lr){2-4}
        & MAML-DPO & 40000 & $\textbf{71.4}$ \\ 
        & DPO & 40000 & $67.4$ \\
        \midrule
         \multirow{4}{*}{Catalan}  
        & MAML-DPO & 8000 & $\textbf{63.4}$\\
        & DPO & 8000 & $52.9$ \\
        \cmidrule(lr){2-4}
        & MAML-DPO & 40000 & $\textbf{64.1}$ \\ 
        & DPO & 40000 & $60.7$ \\
    \midrule
        \multirow{4}{*}{Romanian}  
     & MAML-DPO & 8000 & $\textbf{62.0}$\\
        & DPO & 8000 & $55.3$ \\
        \cmidrule(lr){2-4}
        & MAML-DPO & 40000 & $\textbf{67.0}$ \\ 
        & DPO & 40000 & $66.8$ \\
        \bottomrule
    \end{tabular}
    \label{tab:WR_bloom_DPO_8k_40k_samples}
\end{table}

\Cref{tab:WR_gemma_4k_40k_samples} further evaluates MAML-DPO based on Gemma3 under settings with sufficient adaptation data, where MAML-DPO remains stable as the number of adaptation samples increases to 4k or 40k, and continues to maintain advantages over both baselines. 

\paragraph{Out-of-distribution evaluation.} To examine robustness beyond the training distribution, we follow the Okapi evaluation protocol in \citet{dac2023okapi} and report zero-shot performance on ARC \cite{clark2018think}, HellaSwag \cite{zellers-etal-2019-hellaswag}, and MMLU \cite{hendryckstest2021}, which are multiple-choice benchmarks and assess robustness to out-of-distribution shifts in format and content.

In \Cref{tab:OOD_benchmark_results}, we evaluate BLOOM 7.1B models trained with DPO algorithms in OOD benchmarks. While both MAML-DPO and baseline DPO models have used 10K samples for adaptation to the target language, baseline DPO models show higher accuracy than the base model in fewer columns than MAML-DPO. This shows the robustness of MAML-DPO training pipeline to OOD evaluations.

We further provide OOD benchmark evaluation of DPO models based on Gemma3 270M architecture in \Cref{tab:OOD_benchmark_results_DPO_gemma}. Similarly to \Cref{tab:OOD_benchmark_results}, MAML-DPO models show improvements in more benchmarks and metrics than baseline DPO models. This indicates that MAML-DPO provides stable improvement across different model architectures and data domains.

\begin{table}[H]
    \centering
    \tiny
    \caption{Out-of-distribution benchmark evaluation results of BLOOM 7.1B models. Bold numbers indicate where the trained model outperformed the base model. With 10k samples given for adaptation, MAML-DPO maintains its robustness to OOD datapoints and gains improvement in more diverse benchmarks and metrics than the baseline method.}
    \begin{tabular}{cccc c c c c c}
        \toprule
        \multirow{2}{*}{Language} & \multirow{2}{*}{Algorithm} & \multirow{2}{*}{Samples} & \multicolumn{2}{c}{ARC} & \multicolumn{2}{c}{Hellaswag} & \multicolumn{2}{c}{MMLU} \\
        \cmidrule(lr){4-9}
         &  &  & Acc. (\%) & Norm. Acc. & Acc. (\%) & Norm. Acc.& Acc. (\%) & Norm. Acc. \\
        \midrule
        \multirow{3}{*}{French} & Base model & - & 0.3293 & 0.3644 & 0.4259 & 0.5452 & 0.2884 & 0.297 \\
        & MAML-DPO & 10000 & \textbf{0.3362} & \textbf{0.3704} & \textbf{0.434} & \textbf{0.5464} & \textbf{0.2885} & \textbf{0.2977} \\
        & DPO & 10000 & 0.3251 & \textbf{0.3721} & 0.4339 & \textbf{0.5477} & 0.2820 & 0.2887 \\
        \midrule
        \multirow{3}{*}{Catalan} & Base model & - & 0.3199 & 0.3482 & 0.3956 & 0.4992 & 0.2796 & 0.2887 \\
        & MAML-DPO & 10000 & \textbf{0.3242} & \textbf{0.3611} & \textbf{0.401} & 0.4981 & \textbf{0.2811} & \textbf{0.2898} \\
        & DPO & 10000 & \textbf{0.3344} & \textbf{0.3670} & \textbf{0.3981} & 0.4958 & 0.2730 & 0.2829 \\
        \midrule
        \multirow{3}{*}{Romanian} & Base model & - & 0.2091 & 0.2708 & 0.2814 & 0.3186 & 0.2557 & 0.2736 \\
        & MAML-DPO & 10000 & \textbf{0.2134} & 0.2639 & \textbf{0.2864} & \textbf{0.3254} & \textbf{0.2599} & \textbf{0.2739} \\
        & DPO & 10000 & 0\textbf{.2125} & 0.2699 & \textbf{0.2864} & \textbf{0.3266} & \textbf{0.2596} & 0.2719 \\
        \bottomrule
    \end{tabular}
    \label{tab:OOD_benchmark_results}
\end{table}

\begin{table}[h]
    \onecolumn
    \centering
    \tiny
    \caption{Out-of-distribution (OOD) benchmark evaluation results of Gemma 270M models under baseline and MAML-DPO. Bold numbers indicate where the trained model outperformed the base model.}
    \begin{tabular}{cccc c  c c  c c}
        \toprule
        \multirow{2}{*}{Language} & \multirow{2}{*}{Algorithm} & \multirow{2}{*}{Samples} & \multicolumn{2}{c}{ARC} & \multicolumn{2}{c}{Hellaswag} & \multicolumn{2}{c}{MMLU} \\
        \cmidrule(lr){4-9}
         &  &  & Acc. (\%) & Norm. Acc. & Acc. (\%) & Norm. Acc.& Acc. (\%) & Norm. Acc. \\
        \midrule
        \multirow{3}{*}{French} & Base model & - & 0.210 & 0.249 & 0.271 & 0.261 & 0.251 & 0.266 \\
        & MAML-DPO & 40000 & \textbf{0.222} & \textbf{0.253} & \textbf{0.274 }&\textbf{ 0.263 }& 0.250 &0.261\\
        & DPO & 40000 & \textbf{0.212 }& 0.243 & 0.271 & 0.259 & 0.249 & 0.262 \\
        \midrule
        \multirow{3}{*}{Catalan} & Base model & - & 0.212 & 0.239 & 0.265 & 0.281 & 0.242 & 0.252 \\
        & MAML-DPO & 40000 & \textbf{0.214} & 0.225 & \textbf{0.282} & \textbf{0.29} & \textbf{0.257} & \textbf{0.257}\\
        & DPO & 40000 & 0.200 & 0.239 & \textbf{0.274} & 0.281 & 0.24 & 0.244 \\
        \midrule
        \multirow{3}{*}{Romanian} & Base model & - & 0.191 & 0.240 &0.283  & 0.276 & 0.260 & 0.268 \\
        & MAML-DPO & 40000 & \textbf{0.194 }& \textbf{0.246} & \textbf{0.294} & 0.259 & \textbf{0.262} & 0.264 \\
        & DPO & 40000 & 0.186 & 0.233 &\textbf{0.284} &0.263 &0.256 &0.259 \\
        \bottomrule
    \end{tabular}
    \label{tab:OOD_benchmark_results_DPO_gemma}
\end{table}

\subsection{Additional Results of MAML-RLHF with LoRA adapters}\label{appendix:RLHF_results}

\paragraph{Limited adaptation data.}

\Cref{tab:winrate_RLHF_100samples_multitask_gemma} reports the win rates of the corresponding policies optimized with these reward models in the RLHF pipeline. MAML-RLHF achieves the highest mean win rate across all three Romance target languages, suggesting that the benefits of our approach are not limited to reward modeling and can also be reflected to policy. At the same time, the margins over the multitask-RLHF are relatively small in Catalan and French. In both cases, the multitask-RLHF substantially improves over conventional RLHF, suggesting that pretraining on Romance preference data provides a strong basis for adapting to these targets. Against this competitive baseline, MAML-RLHF still performs best on average.

\begin{figure*}[t!]
    \centering
    \includegraphics[width=0.3\linewidth]{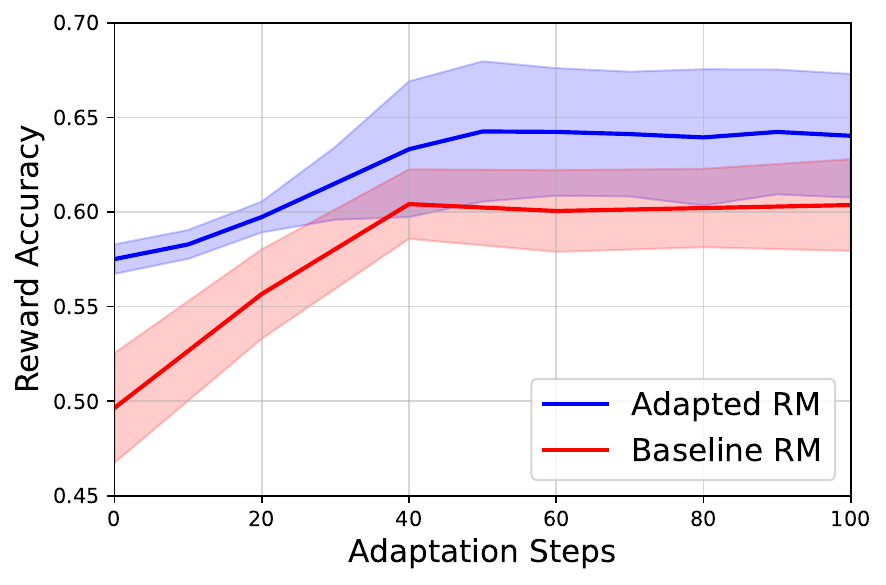}
    \includegraphics[width=0.3\linewidth]{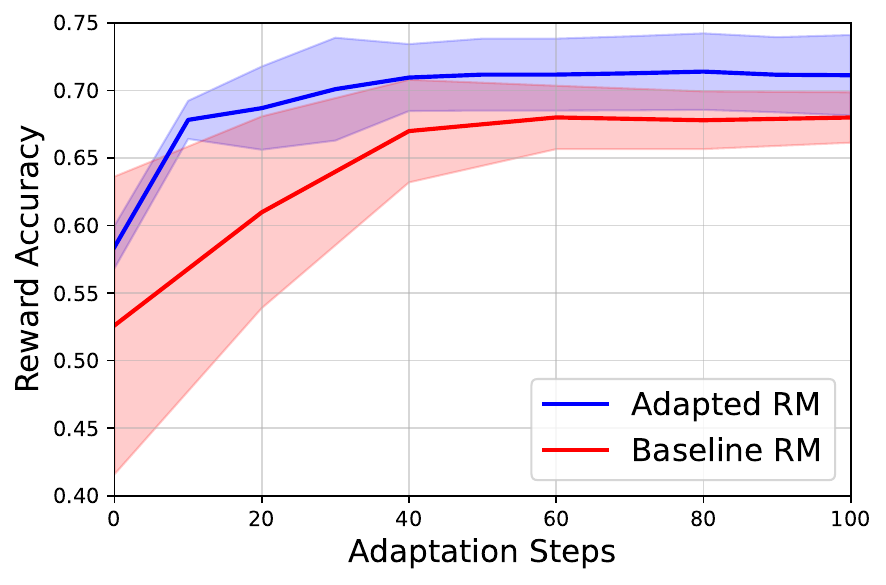}
    \includegraphics[width=0.3\linewidth]{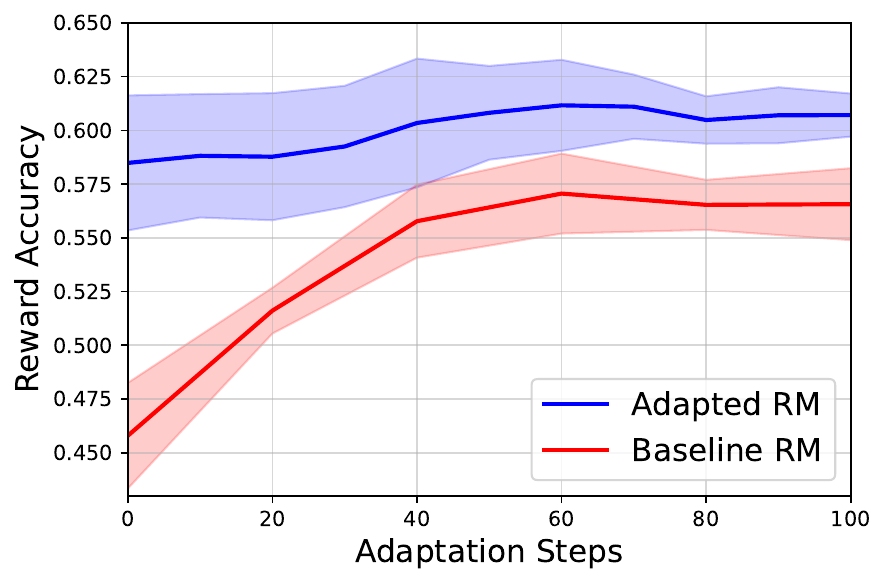}
    \caption{Average reward accuracies with standard deviation bands of Gemma 270M reward functions over 3 random seeds during adaptation on the target language with 100 samples. [Left] Arabic. [Center] Slovak. [Right] Indonesian. In all settings, MAML reward functions show consistently higher reward accuracy than the baseline reward functions in adaptation.\looseness=-1}
    \label{fig:Gemma_RM100_avg_accuracies_ar_id_sk}
\end{figure*}

\begin{table}[H]
    \centering
    \small
    \captionof{table}{Win rate of Gemma3 270M adapted to 100 samples from each target language in the RLHF pipeline.}   
    \label{tab:winrate_RLHF_100samples_multitask_gemma}
    \begin{tabular}{cccc}
        \toprule
        Target & Algorithm & Samples & WR (\%) \\
        \midrule
        \multirow{3}{*}{French} & MAML-RLHF & 100 & $\mathbf{66.23} \pm 1.32$ \\
        & Multitask-RLHF & 100 & $64.55 \pm 1.40$ \\
        & RLHF & 100 & $49.12 \pm 2.21$ \\
        \midrule
        \multirow{3}{*}{Catalan} & MAML-RLHF & 100 & $\mathbf{63.28} \pm 1.35$ \\
        & Multitask-RLHF & 100 & $62.97 \pm 1.07$ \\
        & RLHF & 100 & $50.03 \pm 3.30$ \\
        \midrule
        \multirow{3}{*}{Romanian} & MAML-RLHF & 100 & $\mathbf{85.23} \pm 0.32$ \\
        & Multitask-RLHF & 100 & $63.41 \pm 2.34$ \\
        & RLHF & 100 & $60.28 \pm 6.17$ \\
        \bottomrule
    \end{tabular}
\end{table}

In \Cref{fig:Gemma_RM100_avg_accuracies_ar_id_sk}, we evaluate Gemma 270M settings on Arabic, Slovak, and Indonesian, which come from language families different from the Romance meta-training languages. The averaged curves with standard deviation bands show that MAML-RM continues to outperform the baseline across all three targets. This suggests that the benefit of meta-training is not limited to closely related Romance languages, but also transfers to linguistically distant target languages while preserving a similar adaptation trend.

\Cref{tab:winrate_RLHF_100samples_multitask_gemma_ar_sk_id} reports the corresponding policy evaluation results for these non-Romance languages. Overall, MAML-RLHF improves over both baselines on Arabic and Indonesian, with larger margins over multitask RLHF than those observed for the Romance targets. Similar to MAML-DPO on non-Romance languages, this comparison suggests that our approach yields more robust transfer across language families, whereas multitask appears more sensitive to the language composition during preference pretraining. The improvement is more modest in Slovak, where conventional RLHF already achieves a strong win rate, resulting limited room for further improvement through GRPO even when MAML-RLHF provides stronger reward-model adaptation.

\begin{table}[H]
    \centering
    \small
    \captionof{table}{Win rate of Gemma3 270M adapted to 100 samples from each non-Romance target language in the RLHF pipeline.}   \label{tab:winrate_RLHF_100samples_multitask_gemma_ar_sk_id}
    \begin{tabular}{cccc}
        \toprule
        Target & Algorithm & Samples & WR (\%) \\
        \midrule
        \multirow{3}{*}{Arabic} & MAML-RLHF & 100 & $\mathbf{76.90} \pm 6.29$ \\
        & Multitask-RLHF & 100 & $59.30 \pm 6.80$\\
        
        & RLHF & 100 & $50.71 \pm 4.35$ \\
        
        \midrule
        \multirow{3}{*}{Slovak} & MAML-RLHF & 100 & $\mathbf{77.30} \pm 3.81$\\
        & Multitask-RLHF & 100 & $75.60 \pm 0.92$ \\
        & RLHF & 100 & $76.22 \pm 0.65$ \\
        \midrule
        \multirow{3}{*}{Indonesian} & MAML-RLHF & 100 & $\mathbf{82.07} \pm 1.66$ \\
        & Multitask-RLHF & 100 & $44.17 \pm 0.42$ \\
        & RLHF & 100 & $48.67 \pm 1.47$ \\
        \bottomrule
    \end{tabular}
\end{table}

\begin{table}[H]
\centering
\begin{minipage}[t]{0.47\linewidth}
    \centering
    \vspace{0pt}
    \begin{small}
    \begin{minipage}[t][7em][t]{\linewidth}
    \caption{Win rates of Gemma-270M models adapted to 4k and 40k samples from each target language. MAML-DPO models consistently outperform the baselines, followed by multitask method.\looseness=-1}
    \label{tab:WR_gemma_4k_40k_samples}
    \end{minipage}
    \begin{tabular}{cccc}
        \toprule
        Target & Algorithm & Samples &  WR(\%) \\
        \midrule
        \multirow{6}{*}{French} 
        
        & MAML-DPO & 4000 & $\mathbf{66.60}$ \\
        & Multitask-DPO & 4000 & $56.25$\\
          & DPO & 4000 &  $56.20$\\ 
         
        \cmidrule(lr){2-4}
        
        & MAML-DPO & 40000 & $\mathbf{75.40}$ \\ 
        & Multitask-DPO & 40000 &  $60.405$\\
          & DPO & 40000 & $54.00$ \\
         \midrule

         \multirow{6}{*}{Catalan}  
        
     & MAML-DPO & 4000 & $\mathbf{71.95}$\\
     & Multitask-DPO & 4000 & $70.20$ \\
          & DPO & 4000 & $61.35$ \\
         
        \cmidrule(lr){2-4}
        
         & MAML-DPO & 40000 & $\mathbf{74.50}$ \\
         & Multitask-DPO & 40000 & $70.30$ \\
          & DPO & 40000 & $62.90$ \\
    \midrule

         \multirow{6}{*}{Romanian}  
        
     & MAML-DPO & 4000 & $\mathbf{86.95}$ \\
     & Multitask-DPO & 4000 & $84.50$ \\
          & DPO & 4000 & $71.35$ \\
         
        \cmidrule(lr){2-4}
        
         & MAML-DPO & 40000 & $\mathbf{88.20}$ \\
         & Multitask-DPO & 40000 & $80.15$\\
          & DPO & 40000 & $75.30$ \\
        \bottomrule
    \end{tabular}
    \end{small}
\end{minipage}
\hfill
\begin{minipage}[t]{0.47\linewidth}
    \centering
    \vspace{0pt}
    \begin{small}
    \begin{minipage}[t][7em][t]{\linewidth}
    \caption{Win rate evaluation of Gemma 270M models adapted with 12k and 40k samples from each target language under baseline, multi-task, and MAML RLHF settings.}
    \label{tab:WR_gemma_12k_40k_RLHF}
    \end{minipage}
    \begin{tabular}{cccc}
        \toprule
        Target & Algorithm & Samples &  WR(\%) \\
        \midrule
        \multirow{6}{*}{French} 
        
        & MAML-RLHF & 12000 & $72.05$ \\
        & Multitask-RLHF & 12000 & $\mathbf{73.45}$\\
          & RLHF & 12000 &  $69.25$\\ 
         
        \cmidrule(lr){2-4}
        
        & MAML-RLHF & 40000 & $72.30$\\ 
        & Multitask-RLHF & 40000 &  $\mathbf{74.90}$\\
          & RLHF & 40000 & $64.90$ \\
         \midrule

         \multirow{6}{*}{Catalan}  
        
     & MAML-RLHF & 12000 & $\mathbf{63.00}$\\
     & Multitask-RLHF & 12000 & $57.70$ \\
          & RLHF & 12000 & $50.30$ \\
         
        \cmidrule(lr){2-4}
        
         & MAML-RLHF & 40000 & $\mathbf{69.65}$ \\
         & Multitask-RLHF & 40000 & $65.80$ \\
          & RLHF & 40000 & $58.20$ \\
         \midrule       
    \multirow{6}{*}{Romanian}  
        
     & MAML-RLHF & 12000 & $\mathbf{87.25}$ \\
     & Multitask-RLHF & 12000 & $85.10$ \\
          & RLHF & 12000 & $76.15$ \\
         
        \cmidrule(lr){2-4}
        
         & MAML-RLHF & 40000 & $\mathbf{88.95}$  \\
         & Multitask-RLHF & 40000 & $87.00$ \\
          & RLHF & 40000 & $82.60$ \\
         \bottomrule
    \end{tabular} 
    \end{small}
\end{minipage}
\end{table}

\paragraph{Sufficient adaptation data.} \Cref{tab:WR_gemma_12k_40k_RLHF} and \Cref{tab:WR_bloom_RLHF_8k_40k_samples} report the win-rate results for models fine-tuned using the RLHF pipeline using 12k/8k and 40k target language samples. With both model configurations, MAML-RLHF consistently outperforms conventional RLHF in all cases, showing the effectiveness of the meta-learned initialization for RLHF-based adaptation. With Gemma3 270M, MAML-RLHF achieves higher win rates than multitask RLHF on Catalan and Romanian under both data settings, while multitask RLHF performs better on French. This suggests that MAML-RLHF provides a stronger adaptation advantage in most target-language settings, although multitask training can remain competitive when the target language is closely aligned with the meta-training languages.

\begin{table}[h]
    \centering
    \small
    \caption{Win rate evaluation of BLOOM 7.1B models trained with RLHF pipelines. MAML-RLHF models achieve at least 9\% in performance improvement over baseline models.}
    \begin{tabular}{cccc}
        \toprule
        Target & Algorithm & Samples &  WR(\%) \\
        \midrule
        \multirow{2}{*}{French} 
          & MAML-RLHF & 8000 & $\textbf{69.6}$\\
          & RLHF & 8000 &  $60.1$\\ 
        \midrule
         \multirow{2}{*}{Catalan}  
         & MAML-RLHF & 8000 & $\textbf{73.2}$\\
          & RLHF & 8000 &  $56.8$\\ 
    \midrule
        \multirow{2}{*}{Romanian}  
         & MAML-RLHF & 8000 & $\textbf{65.0}$ \\
          & RLHF & 8000 &  $52.7$ \\ 
        \bottomrule
    \end{tabular}  \label{tab:WR_bloom_RLHF_8k_40k_samples}
\end{table}

\paragraph{Out-of-distribution evaluation. } We evaluate the OOD benchmark performances of Gemma 3 270M models trained with the RLHF pipeline in \Cref{tab:OOD_benchmark_results_RLHF_gemma}. While MAML-RLHF models show improvements in more benchmarks and metrics than the baseline RLHF models in French and Catalan, the same pattern is not present in Romanian. We attribute this to the diminishing advantage  of MAML-RLHF as the amount of adaptation dataset becomes larger. We also note that compared to DPO models, RLHF models are more unstable, due to training with reward functions increasing the possibility of overfitting and losing generalization.

\begin{table}[h]
    \onecolumn
    \centering
    \tiny
    \caption{Out-of-distribution (OOD) benchmark evaluation results of Gemma 270M models under baseline and MAML-RLHF. Bold numbers indicate where the trained model outperformed the base model.}
    \begin{tabular}{cccc c  c c  c c}
        \toprule
        \multirow{2}{*}{Language} & \multirow{2}{*}{Algorithm} & \multirow{2}{*}{Samples} & \multicolumn{2}{c}{ARC} & \multicolumn{2}{c}{Hellaswag} & \multicolumn{2}{c}{MMLU} \\
        \cmidrule(lr){4-9}
         &  &  & Acc. (\%) & Norm. Acc. & Acc. (\%) & Norm. Acc.& Acc. (\%) & Norm. Acc. \\
        \midrule
        \multirow{3}{*}{French} & Base model & - & 0.210 & 0.249 & 0.271 & 0.261 & 0.251 & 0.266 \\
        & MAML-RLHF & 40000 & \textbf{0.213} & \textbf{0.250} & \textbf{0.273} & \textbf{0.291} & 0.239 & 0.251\\
        & RLHF & 40000 & 0.210 & 0.248 & 0.270 & \textbf{0.264} & \textbf{0.260} & 0.258 \\
        \midrule
        \multirow{3}{*}{Catalan} & Base model & - & 0.212 & 0.239 & 0.265 & 0.281 & 0.242 & 0.252 \\
        & MAML-RLHF & 40000 & \textbf{0.214} & \textbf{0.257} & \textbf{0.279} & \textbf{0.287} & \textbf{0.247} & \textbf{0.253}\\
        & RLHF & 40000 & \textbf{0.230} & \textbf{0.271} & 0.263 & 0.265 & 0.238 & \textbf{0.268} \\
        \midrule
        \multirow{3}{*}{Romanian} & Base model & - & 0.191 & 0.240 &0.283  & 0.276 & 0.260 & 0.268 \\
        & MAML-RLHF & 40000 &  0.188 & \textbf{0.244} & 0.255 & 0.251 & 0.230 & 0.246 \\
        & RLHF & 40000 &\textbf{0.207} & \textbf{0.241}& 0.271 &0.263 &0.241 & 0.247\\
        \bottomrule
    \end{tabular}   \label{tab:OOD_benchmark_results_RLHF_gemma}
\end{table}

\begin{table}[H]
\centering
\caption{Resource consumption of different training stages based on Bloom 7.1B model. Note that the \textit{Adaptation-DPO/RM} is also used for conventional DPO/RM. All the processes run with a single H200 GPU.
}

\label{tab:resource_consumption}
\begin{tabular}{lcccc}
\toprule
Method & Training steps & Samples & Wall-clock time & GPU memory consumption \\
\midrule
MAML-DPO        & 400 & 12,000 & 23 hours   & max. 137.2GB \\
Multitask-DPO   & 400 & 12,000 & 23 hours   & max. 128.8GB \\
Adaptation-DPO  & 100 & 100    & 0.5 hour   & max. 121.8GB \\
\midrule
MAML-RM         & 400 & 16,000 & 10 hours   & max. 135.8GB \\
Multitask-RM    & 400 & 16,000 & 8 hours    & max. 131.6GB \\
Adaptation-RM   & 100 & 100    & 0.25 hour  & max. 114.8GB \\
\bottomrule
\end{tabular}
\end{table}

\subsection{Resource Consumption of MAML-DPO} \label{sec:resource_consumption}
We further report the resource consumption of different approaches in \Cref{tab:resource_consumption}. With the first-order approximation to the MAML objective, the additional cost of our MAML-based methods remains moderate compared with the corresponding multitask models trained on the same amount of multilingual preference data. Specifically, MAML-DPO mainly increases peak GPU memory usage, while MAML-RM incurs a small increase in both wall-clock time and memory consumption. These results suggest that the cost of our approaches remains manageable with respect to both training time and GPU memory requirements.

\newpage
\section{Convergence of Meta-RLHF}
\label{appendix: Proof of Convergerence for Meta-RLHF}
\subsection{Proof of Theorem \ref{theorem: meta-RLHF convergence}}
We first show that the task-specific loss function has Lipschitz-continuous gradient. Recall the definition of $\calL(\phi;\tau)$ and its gradient $\nabla_\phi \calL(\phi;\tau)$.
\begin{align*}
\calL(\phi;\tau) &= -\E_{(x,y_w,y_\ell) \sim \calD^\tau} \left[ \log \sigma \left( \phi^\top \Delta \psi(x, y_w,y_\ell) \right)\right]\\
\nabla_\phi \calL(\phi; \tau) &= -\E_{(x,y_w,y_\ell) \sim \calD^\tau} \left[ 
        \sigma(-\phi^\top \Delta \psi(x, y_w,y_\ell)) \cdot \Delta \psi(x, y_w,y_\ell)
    \right]
\end{align*}
Now,
\begin{align*}
    \abs{\nabla_\phi \calL(\phi;\tau) - \nabla_\phi \calL(\phi';\tau)} \le \mathop{\E}_{(x,y_w,y_\ell) \sim \calD^\tau} &\left[ \left \lVert \sigma(-\phi^\top \Delta \psi(x, y_w,y_\ell)) \cdot\Delta \psi(x, y_w,y_\ell) \right. \right.\\
    &-\left. \left. \sigma(-\phi'^\top \Delta \psi(x, y_w,y_\ell)) \cdot \Delta \psi(x, y_w,y_\ell) \right \rVert \right]
\end{align*}
\begin{align*}
    &\le \mathop{\E}_{(x,y_w,y_\ell) \sim \calD^\tau} \left[ \abs{ \sigma(-\phi^\top \Delta \psi(x, y_w,y_\ell)) - \sigma(-\phi'^\top \Delta \psi(x, y_w,y_\ell))} \cdot \norm{\Delta \psi(x, y_w,y_\ell) } \right]\\
    &\le 2 \mathop{\E}_{(x,y_w,y_\ell) \sim \calD^\tau} \left[ \norm{ (\phi-\phi')^\top \Delta \psi(x, y_w,y_\ell)) } \right]\\
    &\le 2 \mathop{\E}_{(x,y_w,y_\ell) \sim \calD^\tau} \left[ \norm{ \phi-\phi'} \norm{ \Delta \psi(x, y_w,y_\ell) } \right]\\
    &\le 4 \norm{ \phi-\phi'}.
\end{align*}

 Additionally, let $\widetilde{\nabla}_\phi \calL(\phi; \tau) = \frac{1}{\abs{\calD^\tau_{\text{in}}} }\sum_{j\in \calD^\tau_{\text{in}}} \nabla_\phi \calL(\phi; \tau, j)$ be a stochastic gradient estimator for task $\tau$. Since $\norm{\nabla_\phi \calL(\phi;\tau)}_2 \le 2$, standard concentration inequality gives us,
 \[
 {\E}_{\calD^\tau_{\text{in}}} \left[ \norm{\widetilde{\nabla}_\phi \calL(\phi; \tau) - {\nabla}_\phi \calL(\phi; \tau)}_2^2 \right] \le O\left( \frac{1}{\abs{\calD^\tau_{\text{in}}}}\right) \triangleq \widetilde{\sigma}^2.
 \]
 We can now apply Theorem 4.15 from \cite{fallah2020convergence} to conclude that as long as the number of iterations $T \ge \Delta \min \set{\frac{4}{\varepsilon^2}, \frac{4}{\sigma^2(16\alpha^2 + B^{-1})} + \frac{4(BD_o + D_{\text{in}})}{\widetilde{\sigma}^2}}$ with $\Delta = \calL_M(\phi_1) - \min_\phi \calL_M(\phi)$, we have
 \[
 \E\left[ \norm{\nabla_\phi \calL_M(\widehat{\phi}_M)}_2 \right] \le O\left(\sqrt{\sigma^2 \left(16\alpha^2 + \frac{1}{B}\right) + \frac{\widetilde{\sigma}^2}{B D_o} + \frac{\widetilde{\sigma}^2}{D_{\text{in}}}} \right) + \varepsilon
 \]
 Now we substitute the value of $\widetilde{\sigma}$ and choose $D_{\text{in}} = 1/\sigma,  D_o = O(1), B = 1/\alpha^2$, and $\varepsilon = \alpha \sigma$. Then we obtain \[\E\left[ \norm{\nabla_\phi \calL_M(\widehat{\phi}_M)}_2 \right] \le O(\alpha \sigma).\] Moreover, $\Delta = \calL_M(\phi_1) - \min_\phi \calL_M(\phi) \le O(L)$ as feature norms are bounded by $L$. Then the number of iterations $T$ is bounded by $O\left( \frac{L}{\alpha^2 \sigma^2}\right)$.

By lemma~\ref{lem:strong-convexity-MAML-Reward}, $\calL_M(\phi)$ is a $m_M$-strongly convex function for $m_M = \frac{c\cdot m_0}{4+c+m_0}$.
\[
\calL_M(\phi^\star_M) \ge \calL_M(\widehat{\phi}_M) + \left \langle \nabla_\phi \calL_M(\widehat{\phi}_M), \phi^\star_M  - \widehat{\phi}_M \right \rangle + \frac{m_M}{2} \norm{\phi^\star_M  - \widehat{\phi}_M }_2^2
\]
After rearranging and using Cauchy-Schwartz inequality, we obtain the following bound.
\begin{align*}
    \frac{m_M}{2} \norm{\phi^\star_M  - \widehat{\phi}_M }_2^2 &\le \calL_M(\phi^\star_M) - \calL_M(\widehat{\phi}_M) + \norm{\nabla_\phi \calL_M(\widehat{\phi}_M)}_2 \norm{\phi^\star_M  - \widehat{\phi}_M}_2\\
    &\le \norm{\nabla_\phi \calL_M(\widehat{\phi}_M)}_2 \norm{\phi^\star_M  - \widehat{\phi}_M}_2
\end{align*}
This implies $\mathbb E\left[\norm{\phi^\star_M  - \widehat{\phi}_M}_2 \right] \le \mathbb E\left[ \frac{2}{m_M} \norm{\nabla_\phi \calL_M(\widehat{\phi}_M)}_2 \right]\le \frac{2\alpha \sigma}{m_M}$. \qed

\subsection{Technical Lemmas}
We dedicate this section to proving that the meta-objective $\mathcal L_M(\phi)$ is strongly convex under the conditions of Theorem~\ref{theorem: meta-RLHF convergence}. This property is formalized in the following lemma.
\begin{lemma}\label{lem:strong-convexity-MAML-Reward}
    Suppose Assumption \ref{asn:full-coverage} hold and $m_0 = \frac{\nu}{2(1+\exp(2B_\phi))}$. If $\alpha \le \alpha_0 = \frac{m_0}{2(4+m_0)}$, then $\nabla^2_\phi \calL_M(\phi)$ is positive semidefinite. Moreover, if $\alpha \le \frac{m_0}{2(4+c+m_0)}$ for some constant $c > 0$ then $\nabla^2_\phi \calL_M(\phi) \succcurlyeq \frac{c\cdot m_0}{4+ c + m_0}  I$.
\end{lemma}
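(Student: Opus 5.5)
We compute the Hessian of the meta-objective directly. For the linear reward model, write $H_\tau(\phi) \triangleq \nabla^2_\phi \calL(\phi;\tau)$ and $\phi^\tau = \phi - \alpha\nabla_\phi\calL(\phi;\tau)$. Differentiating $\calL(\phi^\tau;\tau)$ twice by the chain rule gives
\[
\nabla^2_\phi \calL(\phi^\tau;\tau) = (I-\alpha H_\tau(\phi))\,H_\tau(\phi^\tau)\,(I-\alpha H_\tau(\phi)) \;-\; \alpha\, \nabla^3_\phi\calL(\phi;\tau)\big[\nabla_\phi \calL(\phi^\tau;\tau)\big],
\]
where the last term is the third-derivative tensor contracted with the outer gradient. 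Taking $\E_{\tau\sim P_\tsk}$ yields $\nabla^2_\phi\calL_M(\phi)$. The plan is to lower bound the first (positive) term, upper bound the second (error) term in operator norm, and then choose $\alpha$ so that the first dominates.

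\emph{Step 1: per-task bounds.} We have
\[
H_\tau(\phi) = \E_{\calD^\tau}\big[\sigma(z)\sigma(-z)\,\Delta\psi\Delta\psi^\top\big], \qquad z=\phi^\top\Delta\psi .
\]
Since $\norm{\Delta\psi}\le 2$ and $\norm{\phi}\le B_\phi$, we get $|z|\le 2B_\phi$. This gives $\sigma(z)\sigma(-z)\ge \frac{1}{2(1+\exp(2B_\phi))}$, so together with Assumption \ref{asn:full-coverage} we obtain $H_\tau \succcurlyeq m_0 I$. From the Lipschitz computation at the start of this appendix, $H_\tau \preccurlyeq 4I$ (smoothness constant $L=4$). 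The third derivative is
\[
\E\big[\sigma'' (z)\,\Delta\psi\otimes\Delta\psi\otimes\Delta\psi\big], \qquad |\sigma''|\le 1/(6\sqrt3)\le 1,
\]
so its contraction with a vector $v$ has operator norm $O(\norm{v})$; we absorb the constant into the ``$4$'' appearing in the statement. The outer gradient satisfies $\norm{\nabla\calL(\phi^\tau;\tau)}\le 2$.

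\emph{Step 2: combine.} For $\alpha\le 1/4$, $I-\alpha H_\tau \succcurlyeq (1-4\alpha)I$, so the first term is $\succcurlyeq (1-4\alpha)^2 m_0 I \succcurlyeq (1-8\alpha)m_0 I$. This needs $\phi^\tau$ to remain in the region where the lower bound on $\sigma'$ holds; I would handle this by evaluating the curvature lower bound on the ball of radius $B_\phi$ (enlarging it if necessary, with the constant absorbed) or by assuming the iterates are projected. The error term is bounded in operator norm by $C\alpha$, with $C$ absorbed into $4$. Hence
\[
\nabla^2\calL_M \succcurlyeq \big(m_0 - \alpha(8m_0 + 4\cdot\text{const})\big) I \approx \big(m_0 - 2\alpha(4+m_0)\big) I .
\]
This is $\succcurlyeq 0$ when $\alpha\le \frac{m_0}{2(4+m_0)}$. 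With $\alpha\le\frac{m_0}{2(4+c+m_0)}$ it is at least
\[
m_0 - \frac{m_0(4+m_0)}{4+c+m_0} = \frac{c\,m_0}{4+c+m_0},
\]
which is the claimed constant.

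\emph{Main obstacle.} The hard part is the third-order (Hessian-of-inner-step) error term and tracking its constants so that they collapse to exactly the form $2(4+m_0)$; the cross terms between $\alpha H_\tau$ and the error must be grouped carefully. A secondary issue is justifying the curvature lower bound at the adapted point $\phi^\tau$, which may leave the $B_\phi$-ball. I would first try bounding $\norm{\phi^\tau}\le B_\phi+2\alpha$ and note that for small $\alpha$ this only changes the exponent negligibly, which is absorbed into $c$.
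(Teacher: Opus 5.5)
Your decomposition of the meta-Hessian is exactly the paper's: the sandwiched term $(I-\alpha H_\tau(\phi))H_\tau(\phi^\tau)(I-\alpha H_\tau(\phi))$ is its $T_2$, and the contracted third-derivative term is its $T_1$. The gap is in Step~2: it does not deliver the stated constants, and the step ``$\approx m_0-2\alpha(4+m_0)$'' is false as written.

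Using the smoothness bound $H_\tau\preccurlyeq 4I$ gives $(1-4\alpha)^2m_0\ge(1-8\alpha)m_0$, so the coefficient of $\alpha m_0$ is $8$, not $2$. With your coarsening $|\sigma''|\le 1$, together with $\|\Delta\psi\|^3\le 8$ and $\|\nabla\calL(\phi^\tau;\tau)\|\le 2$, the third-derivative term is only bounded by $16\alpha$. Your lower bound is therefore $m_0-\alpha(8m_0+16)$. It is nonnegative only for $\alpha\le m_0/(8m_0+16)$, which is strictly smaller than the claimed $\alpha_0=m_0/(8+2m_0)$.

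Because the lemma has explicit constants, nothing can be ``absorbed into the $4$''. In $2(4+m_0)$ the $4$ and the $m_0$ have separate sources. There are also no cross terms to regroup: the meta-Hessian is exactly $\E_\tau[T_1+T_2]$, and you only need $\lambda_{\min}(T_2)-\|T_1\|$.

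The missing ingredient is the sharp per-task smoothness bound $\nabla^2_\phi\calL(\phi;\tau)\preccurlyeq I$. It follows from $\sigma(z)\sigma(-z)\le 1/4$ and $\|\Delta\psi\|^2\le 4$; the Lipschitz constant $4$ computed at the start of the appendix is loose. With it, $\|(I-\alpha H_\tau(\phi))w\|\ge 1-\alpha$ for unit $w$, so $w^\top T_2w\ge(1-\alpha)^2m_0\ge(1-2\alpha)m_0$. Combined with the explicit bound $\|T_1\|\le 8\alpha$ (from $|\sigma''|\le 1/2$), this gives exactly $(1-\alpha)^2m_0-8\alpha\ge m_0-2\alpha(4+m_0)$, and your final algebra then yields both claims.

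Your $L=4$ route can also be rescued if you keep the sharp bound $|\sigma''|\le 1/(6\sqrt3)$. That gives $\|T_1\|\le \tfrac{8}{3\sqrt3}\alpha<1.6\,\alpha$, and you then check $8m_0+1.6\le 8+2m_0$ using $m_0\le\nu/4\le 1$ (as $\nu\le\E\|\Delta\psi\|^2\le 4$). This check has to be carried out, not approximated.

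On the adapted point, the paper simply applies $H_\tau(\phi^\tau)\succcurlyeq m_0I$ at $\phi^\tau$, implicitly treating it as lying in the class $\|\phi\|\le B_\phi$. Your proposed fix of enlarging the ball to radius $B_\phi+2\alpha$ would replace $m_0$ by $\nu/(2(1+e^{2B_\phi+4\alpha}))$. That change cannot be absorbed into $c$ for the positive-semidefinite claim at $\alpha=\alpha_0$, which has no $c$. To obtain the statement with its exact $m_0$, you need the same convention as the paper.
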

\begin{proof}
We first derive the third derivative of the task specific loss function. Moreover, we will also write $z = \phi^T\Delta\psi$. 
\begin{align*}
    \nabla^3_\phi\mathcal{L}(\phi; \tau) &= \nabla_\phi\left(\mathbb{E}_{(x,y_w,y_\ell)}[\sigma(z)(1-\sigma(z))\Delta\psi\Delta\psi^T]\right) \\
    &= \mathbb{E}_{(x,y_w,y_\ell)}\left[\frac{d}{dz}(\sigma(z)(1-\sigma(z)))\nabla_\phi z \cdot \Delta\psi\Delta\psi^T\right]
\end{align*}

Since $\nabla_\phi z = \Delta\psi$ and $\frac{d}{dz}(\sigma(z)(1-\sigma(z))) = \sigma(z)(1-\sigma(z))(1-2\sigma(z))$, we have:
\begin{equation}
    \nabla^3_\phi\mathcal{L}(\phi; \tau) = \mathbb{E}_{(x,y_w,y_\ell)}\left[\sigma(z)(1-\sigma(z))(1-2\sigma(z))\Delta\psi \otimes \Delta\psi\Delta\psi^T\right]
\end{equation}

Where $\otimes$ represents the tensor product.

Let $\phi_\tau = \phi - \alpha\nabla_\phi\mathcal{L}(\phi; \tau)$ denote the adapted parameter for task $\tau$. The gradient of the meta-loss is given as follows.

\begin{equation}
    \nabla_\phi\mathcal{L}_M(\phi) = \mathbb{E}_{\tau\sim P_{\text{TASK}}}\left[(I - \alpha\nabla^2_\phi\mathcal{L}(\phi; \tau))\nabla_\phi\mathcal{L}(\phi_\tau; \tau)\right]
\end{equation}

Differentiating the gradient with respect to $\phi$ we get the Hessian of the meta-loss. 
\begin{align*}
    \nabla^2_\phi\mathcal{L}_M(\phi) &= \mathbb{E}_{\tau}\left[\nabla_\phi\left((I - \alpha\nabla^2_\phi\mathcal{L}(\phi; \tau))\nabla_\phi\mathcal{L}(\phi_\tau; \tau)\right)\right] \\
    &= \mathbb{E}_{\tau}\left[\nabla_\phi(I - \alpha\nabla^2_\phi\mathcal{L}(\phi; \tau)) \cdot \nabla_\phi\mathcal{L}(\phi_\tau; \tau) + (I - \alpha\nabla^2_\phi\mathcal{L}(\phi; \tau)) \cdot \nabla_\phi(\nabla_\phi\mathcal{L}(\phi_\tau; \tau))\right] \\
    &= \mathbb{E}_{\tau}\left[-\alpha\nabla_\phi(\nabla^2_\phi\mathcal{L}(\phi; \tau)) \cdot \nabla_\phi\mathcal{L}(\phi_\tau; \tau) + (I - \alpha\nabla^2_\phi\mathcal{L}(\phi; \tau)) \cdot \nabla_\phi(\nabla_\phi\mathcal{L}(\phi_\tau; \tau))\right]
\end{align*}

The first term involves the third derivative tensor $\nabla^3_\phi\mathcal{L}(\phi; \tau)$, and for the second term, we calculate $\nabla_\phi(\nabla_\phi\mathcal{L}(\phi_\tau; \tau))$.
\begin{align}
    \nabla_\phi(\nabla_\phi\mathcal{L}(\phi_\tau; \tau)) &= \nabla^2_\phi\mathcal{L}(\phi_\tau; \tau) \cdot \nabla_\phi\phi_\tau \\
    &= \nabla^2_\phi\mathcal{L}(\phi_\tau; \tau) \cdot \nabla_\phi(\phi - \alpha\nabla_\phi\mathcal{L}(\phi; \tau)) \\
    &= \nabla^2_\phi\mathcal{L}(\phi_\tau; \tau) \cdot (I - \alpha\nabla^2_\phi\mathcal{L}(\phi; \tau))
\end{align}

Substituting back we obtain.
\begin{align*}
    \nabla^2_\phi\mathcal{L}_M(\phi) &= \mathbb{E}_{\tau}\left[-\alpha\nabla^3_\phi\mathcal{L}(\phi; \tau) \cdot \nabla_\phi\mathcal{L}(\phi_\tau; \tau) \right. \\
    &\quad\quad \left. + (I - \alpha\nabla^2_\phi\mathcal{L}(\phi; \tau)) \cdot \nabla^2_\phi\mathcal{L}(\phi_\tau; \tau) \cdot (I - \alpha\nabla^2_\phi\mathcal{L}(\phi; \tau))\right]
\end{align*}

The following is the complete expression for the Hessian, which consists of two terms.
\begin{equation}
    \nabla^2_\phi\mathcal{L}_M(\phi) = \mathbb{E}_{\tau}[T_1(\tau) + T_2(\tau)]
\end{equation}
where
\begin{align}
    T_1(\tau) &= -\alpha\nabla^3_\phi\mathcal{L}(\phi; \tau) \cdot \nabla_\phi\mathcal{L}(\phi_\tau; \tau) \label{defn:T1}\\
    T_2(\tau) &= (I - \alpha\nabla^2_\phi\mathcal{L}(\phi; \tau)) \cdot \nabla^2_\phi\mathcal{L}(\phi_\tau; \tau) \cdot (I - \alpha\nabla^2_\phi\mathcal{L}(\phi; \tau)) \label{defn:T2}
\end{align}

Since $\nabla^2_\phi \calL_M(\phi) = \E_{\tau}[T_1(\tau) + T_2(\tau)]$, for the Hessian to be positive semidefinite, we need.
\begin{align*}
    \mathbb{E}[w^\top T_2(\tau) w] - |\mathbb{E}[w^\top T_1(\tau) w]| \geq 0
\end{align*}
For any unit vector $w$, Lemma \ref{lem: bound T_1, T_2 for RLHF} shows $|w^T T_1(\tau) w| \leq 8 \alpha$, and $w^\top T_2(\tau) w \ge (1-\alpha)^2 m_0 $.

 Then the sufficient condition is the following.
\begin{align}\label{eq: alpha_sufficient_condition_PSD_Hessian}
(1-\alpha )^2 m_0 > 8\alpha.
\end{align}
Therefore, a sufficient condition is $\alpha < \frac{m_0}{2(4+m_0)}$.

In order to show positive definiteness, the required condition is 
\begin{align}\label{eq: alpha_sufficient_condition_PD_Hessian}
(1-\alpha )^2 m_0 > 8\alpha + 2c  \alpha,
\end{align}
for some $c>0$.
The rest of the proof is almost identical to the positive-semidefinite setting.
\end{proof}

\begin{lemma} \label{lem: bound T_1, T_2 for RLHF}
Assume Assumption~\ref{asn:full-coverage}. Fix any task $\tau$ and stepsize $\alpha > 0$. Let $T_1(\tau)$ and $T_2(\tau)$ be defined as in the proof of Lemma~\ref{lem:strong-convexity-MAML-Reward}. Then,
\[
\|T_1(\tau)\|_2 \le 8\alpha,\quad\text{and},\quad  \lambda_{\min} (T_2(\tau))\geq (1-\alpha)^2 m_0,
\]
where $m_0 = \frac{\nu}{2(1+\exp(2B_\phi))}$.
\end{lemma}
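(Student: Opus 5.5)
The plan is to bound the two terms separately, working with the explicit derivative formulas from the proof of Lemma~\ref{lem:strong-convexity-MAML-Reward}. Throughout, write $z=\phi^\top\Delta\psi$ and use the feature bound $\|\Delta\psi\|_2\le \|\psi(x,y_w)\|_2+\|\psi(x,y_\ell)\|_2\le 2$.

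\textbf{Bound on $T_1$.} Fix unit vectors $u,w$. Then
\[
u^\top T_1(\tau) w=-\alpha\,\E\bigl[\sigma(z)(1-\sigma(z))(1-2\sigma(z))\,(\Delta\psi^\top g)\,(u^\top\Delta\psi)(\Delta\psi^\top w)\bigr],
\]
where $g=\nabla_\phi\calL(\phi_\tau;\tau)$. I will bound each factor in turn:
\begin{itemize}
\item $|\sigma(1-\sigma)(1-2\sigma)|\le 1/4$;
\item $|u^\top\Delta\psi|,\,|\Delta\psi^\top w|\le 2$;
\item $\|g\|_2\le \E[\sigma(-z')\|\Delta\psi\|_2]\le 2$, so $|\Delta\psi^\top g|\le 4$.
\end{itemize}
Multiplying gives $\alpha\cdot\tfrac14\cdot 4\cdot 4=4\alpha\le 8\alpha$, which proves the first claim. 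If a cruder constant is preferred, dropping the $1/4$ and using only $|\sigma(1-\sigma)(1-2\sigma)|\le 1/2$ still gives at most $8\alpha$.

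\textbf{Bound on $T_2$.} First, the per-task Hessian is
\[
\nabla^2\calL(\phi';\tau)=\E[\sigma(z')(1-\sigma(z'))\Delta\psi\Delta\psi^\top].
\]
For $\|\phi'\|\le B_\phi$ we have $|z'|\le 2B_\phi$, so $\sigma(z')(1-\sigma(z'))=\frac{e^{z'}}{(1+e^{z'})^2}\ge \frac{1}{2(1+e^{2B_\phi})}$. Combined with Assumption~\ref{asn:full-coverage}, this gives $\nabla^2\calL(\phi';\tau)\succeq m_0 I$ on the ball.

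This lower bound must be applied at the adapted point $\phi_\tau$, which need not lie in the ball of radius $B_\phi$. This is the step I expect to be the main obstacle. I would first try to handle it by assuming, as is implicit in the restriction to the class $\mathcal F$, that the analysis is carried out on the constrained parameter set, i.e.\ that $\phi_\tau$ is projected onto or stays within the ball. If that is not available, I would enlarge the radius to $B_\phi+2\alpha$, which only changes constants, and absorb the change into $m_0$.

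Second, $\nabla^2\calL(\phi;\tau)\preceq \E[\tfrac14\|\Delta\psi\|^2]\,I\preceq I$. Hence $A=I-\alpha\nabla^2\calL(\phi;\tau)$ is symmetric with eigenvalues in $[1-\alpha,1]$, so $A\succeq(1-\alpha)I$ provided $\alpha\le1$.

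Finally, for a unit vector $w$,
\[
w^\top A H A w\ge m_0\|Aw\|_2^2\ge m_0(1-\alpha)^2,
\]
where $H=\nabla^2\calL(\phi_\tau;\tau)$. Thus $\lambda_{\min}(T_2)\ge(1-\alpha)^2m_0$, which proves the second claim.
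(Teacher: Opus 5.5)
Your proposal is correct and follows essentially the same route as the paper. You bound $T_1$ factor by factor using $\|\Delta\psi\|_2\le 2$, $\|\nabla_\phi\calL(\phi_\tau;\tau)\|_2\le 2$ and a bound on $\sigma(1-\sigma)(1-2\sigma)$; the paper goes through the Frobenius norm with the cruder constant $1/2$ to land exactly on $8\alpha$, while your bilinear-form bound gives $4\alpha$. You bound $T_2$, as the paper does, via $m_0 I\preceq\nabla^2_\phi\calL\preceq I$ and $\|(I-\alpha\nabla^2_\phi\calL(\phi;\tau))w\|_2\ge 1-\alpha$.

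The paper applies the lower Hessian bound at $\phi_\tau$ without comment and tacitly needs $\alpha\le 1$. So your first resolution (staying in the ball $\|\phi\|_2\le B_\phi$) and your $\alpha\le 1$ caveat are exactly the paper's unstated assumptions. Your radius-enlargement fallback would only give the smaller constant $\nu/(2(1+e^{2B_\phi+4\alpha}))$, not the stated $m_0$.
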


\begin{proof}
\textit{Bound for $T_1(\tau)$.} We first expand $T_1(\tau)$ as follows:
\[
T_1(\tau)
= -\alpha f(z)\,(\Delta\psi^\top v)\,\Delta\psi\Delta\psi^\top,
\quad
v\triangleq \nabla_\phi \calL(\phi_\tau;\tau),
\quad
f(z)=\sigma(z)(1-\sigma(z))(1-2\sigma(z)),
\quad
z=\phi^\top \Delta\psi .
\]
Using $\|\Delta\psi\Delta\psi^\top\|_F=\|\Delta\psi\|_2^2$, and Cauchy--Schwarz,
\[
\|T_1(\tau)\|_F
\le \alpha|f(z)||\Delta\psi^\top v|\|\Delta\psi\|_2^2
\le \alpha |f(z)|\|\Delta\psi\|_2^3\|v\|_2.
\]
Moreover, from the gradient expression
$\nabla_\phi \calL (\cdot;\tau)= -\mathbb{E}[\sigma(-z)\Delta\psi]$, we have
\[
\|v\|_2=\|\nabla_\phi L(\phi_\tau;\tau)\|_2 \le \mathbb{E}\|\Delta\psi\|_2 \le 2.
\]
Finally, $|f(z)|\le 1/2$. Therefore,
\[
\|T_1(\tau)\|_2\leq \|T_1(\tau)\|_F \le \alpha \cdot \frac12 \cdot 2^3 \cdot 2 = 8\alpha,
\]
which completes the proof.

\textit{Bound for $T_2(\tau)$.}
By Lemma \ref{prop:psd-task-loss}, we have that $\nabla^2_\phi \calL(\phi;\tau) \succcurlyeq m_0$ and $\nabla^2_\phi \calL(\phi;\tau) \preccurlyeq I$.
For any unit $w$ let $u=(I-\alpha \nabla_\phi^2 \calL(\phi;\tau))w$, we have that $\|u \|_2^2 \leq (1-\alpha)^2$. Then
\[
w^\top T_2(\tau) w 
= u^\top \nabla_\phi^2 \calL(\phi_\tau;\tau) u
\geq m_0 \|u\|_2^2 \geq m_0(1-\alpha)^2.
\]
In other words, $\| T_2(\tau)\|_2 \geq m_0(1-\alpha)^2$ as claimed.
\end{proof}

\begin{lemma}\label{prop:psd-task-loss}
    For any task $\tau$, the loss function $\calL(\phi;\tau)$ is positive semidefinite i.e. $\nabla^2_\phi \calL(\phi;\tau) \succcurlyeq 0$. Additionally, if Assumption \ref{asn:full-coverage} hold, then it is positive definite i.e. 
    \[ \nabla^2_\phi \mathcal L(\phi)\|_2 \preccurlyeq I \quad \text{and}\quad \nabla^2_\phi \calL(\phi;\tau) \succcurlyeq \frac{\nu}{2(1+\exp(2B_\phi))} I,\]
    for any $\phi$ with $\norm{\phi}_2 \le B_\phi$.
\end{lemma}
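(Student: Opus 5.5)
Write $z = \phi^\top \Delta\psi$ and $\Delta\psi = \Delta\psi(x,y_w,y_\ell)$. The plan is to compute the Hessian of $\calL(\phi;\tau)$ in closed form and bound the scalar weight that multiplies $\Delta\psi\Delta\psi^\top$ from above and below on the ball $\norm{\phi}_2 \le B_\phi$. Differentiating the gradient $-\E[\sigma(-z)\Delta\psi]$ once more, and using $\frac{d}{dz}\sigma(-z) = -\sigma(z)(1-\sigma(z))$, gives
\[
\nabla^2_\phi \calL(\phi;\tau) = \E_{(x,y_w,y_\ell)\sim\calD^\tau}\left[\sigma(z)\bigl(1-\sigma(z)\bigr)\,\Delta\psi\Delta\psi^\top\right].
\]
This is the same expression already used in the proof of Lemma~\ref{lem:strong-convexity-MAML-Reward}. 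Each summand is a nonnegative scalar times a rank-one PSD matrix, so $\nabla^2_\phi\calL(\phi;\tau) \succcurlyeq 0$ for every $\phi$. This gives the first claim without any assumption.

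\textbf{Upper bound.} For any unit $w$,
\[
w^\top \nabla^2_\phi\calL\, w = \E\left[\sigma(z)(1-\sigma(z))\,(w^\top\Delta\psi)^2\right].
\]
I will use $\sigma(1-\sigma) \le 1/4$ together with $\norm{\Delta\psi}_2 \le \norm{\psi(x,y_w)}_2 + \norm{\psi(x,y_\ell)}_2 \le 2$. Then $(w^\top\Delta\psi)^2 \le 4$, and the quadratic form is at most $1$, so $\nabla^2_\phi\calL \preccurlyeq I$.

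\textbf{Lower bound.} Here we need a uniform lower bound on $\sigma(z)(1-\sigma(z))$. By Cauchy--Schwarz,
\[
|z| \le \norm{\phi}_2\,\norm{\Delta\psi}_2 \le 2B_\phi .
\]
The function $\sigma(z)(1-\sigma(z)) = \frac{e^{z}}{(1+e^{z})^2}$ is even and decreasing in $|z|$, so it is at least
\[
\frac{e^{2B_\phi}}{(1+e^{2B_\phi})^2} = \frac{1}{(1+e^{2B_\phi})(1+e^{-2B_\phi})} \ge \frac{1}{2(1+\exp(2B_\phi))},
\]
where the last step uses $1+e^{-2B_\phi} \le 2$. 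Pulling this constant out of the expectation and applying Assumption~\ref{asn:full-coverage} gives $\nabla^2_\phi\calL(\phi;\tau) \succcurlyeq \frac{\nu}{2(1+\exp(2B_\phi))} I$.

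The only delicate step is the constant bookkeeping in the lower bound. We need the $|z| \le 2B_\phi$ bound, which uses $\norm{\Delta\psi}_2 \le 2$, and the elementary inequality above, so that the constant matches $m_0$ exactly. The rest is direct.
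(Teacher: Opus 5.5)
Your proposal is correct and follows the paper's proof essentially step for step: you write the Hessian as $\E[\sigma(z)\sigma(-z)\,\Delta\psi\Delta\psi^\top]$, get the upper bound from $\tfrac14\cdot\|\Delta\psi\|_2^2\le 1$, and get the lower bound from $\sigma(z)\sigma(-z)\ge \frac{1}{2(1+e^{|z|})}$ with $|z|\le 2B_\phi$ plus the coverage assumption. Your bound $\|\Delta\psi\|_2\le 2$ (so $\|\Delta\psi\|_2^2\le 4$) is in fact the right one; the paper's stated $\|\Delta\psi\|_2^2\le 2$ is a slip, though it does not change the conclusion.
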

\begin{proof}
First, as $\|\Delta\psi(\cdot)\|_2^2 \leq 2$, one can show that $\| \nabla^2_\phi \mathcal L(\phi)\|_2\leq 1$.

Since $\sigma(u) \ge 0 \ \forall u$, the multiplication of two sigmoid functions satisfy $\sigma(u)\sigma(-u) \ge 0$. The matrix $\psi\psi^\top$ is positive semidefinite. Hence,  $\nabla^2_\phi\calL(\phi;\tau) \succcurlyeq 0$.

Additionally, for any $u$, $\sigma(u) \sigma(-u) = \frac{1}{1+\exp(u)} \frac{1}{1+\exp(-u)} = \frac{1}{2+\exp(u) + \exp(-u)} \ge \frac{1}{2+2\exp(\abs{u})}$. Moreover, $\abs{z_\phi(x,y_w,y_\ell)} = \abs{\phi^\top (\psi(x,y_w) - \psi(x,y_\ell)} \le 2B_\phi$. 
 Moreover, under assumption \ref{asn:full-coverage} the matrix $\psi \psi^\top$ is positive-definite with constant $\nu$. Therefore, $\nabla^2_\phi \calL(\phi;\tau) \succcurlyeq \frac{\nu}{2(1+\exp(2B_\phi))} I$.
\end{proof}

\newpage
\section{Convergence of Meta-DPO} \label{appendix: Proof of Convergerence for Meta-DPO}
\subsection{Proof of Theorem \ref{theorem: meta-DPO convergence}}
The key step of the proof of Theorem \ref{theorem: meta-DPO convergence} is similar to that of Theorem \ref{theorem: meta-RLHF convergence}.

By Lemma \ref{lemma:task_smooth_sc_theta}, for all $\tau$, we have that the smoothness constant of task-wise gradient as
\begin{equation}
    \|\nabla_\theta \mathcal L(\theta,\tau) - \nabla_\theta \mathcal L(\theta',\tau) \|_2 \leq \beta^2 \|\theta -\theta' \|_2.
\end{equation}

Moreover, by Proposition \ref{lem:third_deriv_bound_theta}, we have that $\|\nabla_\theta \mathcal L(\theta,\tau) \|_2 \leq 2\beta$.

Additionally, let $\widetilde{\nabla}_\phi \calL(\theta; \tau) = \frac{1}{\abs{\calD^\tau_{\text{in}}} }\sum_{j\in \calD^\tau_{\text{in}}} \nabla_\theta \calL(\theta; \tau, j)$ be a stochastic gradient estimator for task $\tau$. Since by Proposition \ref{lem:third_deriv_bound_theta}, we have that $\|\nabla_\theta \mathcal L(\theta,\tau) \|_2 \leq 2\beta$, standard concentration inequality gives us,
 \[
 {\E}_{\calD^\tau_{\text{in}}} \left[ \norm{\widetilde{\nabla}_\theta \calL(\theta; \tau) - {\nabla}_\theta \calL(\theta; \tau)}_2^2 \right] \le O\left( \frac{\beta^2}{\abs{\calD^\tau_{\text{in}}}}\right) \triangleq \widetilde{\sigma}^2.
 \]

 We can now apply Theorem 4.15 from \cite{fallah2020convergence} to conclude that as long as the number of iterations $T \ge \Delta \min \set{\frac{4}{\varepsilon^2}, \frac{4}{\sigma^2(16\alpha^2 + B^{-1})} + \frac{4(BD_o + D_{\text{in}})}{\widetilde{\sigma}^2}}$ with $\Delta = \calL_M(\theta_1) - \min_\theta \calL_M(\theta)$, we have
 \[
 \E\left[ \norm{\nabla_\theta \calL_M(\widehat{\theta}_M)}_2 \right] \le O\left(\sqrt{\sigma^2 \left(16\alpha^2 + \frac{1}{B}\right) + \frac{\widetilde{\sigma}^2}{B D_o} + \frac{\widetilde{\sigma}^2}{D_{\text{in}}}} \right) + \varepsilon.
 \]
 
 Now we substitute the value of $\widetilde{\sigma}$ and choose $D_{\text{in}} = \beta/\sigma,  D_o = O(1), B = 1/\alpha^2$, and $\varepsilon = \alpha \sigma$. Then we obtain \[\E\left[ \norm{\nabla_\theta \calL_M(\widehat{\theta}_M)}_2 \right] \le O(\alpha \sigma).\] Moreover, $\Delta = \calL_M(\theta_1) - \min_\theta \calL_M(\theta) \le O(L)$ as feature norms are bounded by $L$. Then the number of iterations $T$ is bounded by $O\left( \frac{L}{\alpha^2 \sigma^2}\right)$.

By Theorem~\ref{theorem:meta_pl_theta}, $\calL_M(\theta)$ is $m_M$-strongly convex with $m_M = (1-\alpha\beta^2)^2\,\mu_\tau - 4\alpha\beta^4$, where $\mu_\tau$ is the per-task strong-convexity constant of \Cref{lemma:task_smooth_sc_theta}.
Standard calculation gives us that, with $\alpha = \frac{1}{\beta^2\left(2+\frac{8(1+\exp(Z_{\max}))}{\nu}\right)}$, we have $m_M \geq \beta^2 \frac{c\nu^3}{(1+\exp(Z_{\max}))^3}$ for some constant $c>0$. 
We have the following
\[
\calL_M(\theta^\star_M) \ge \calL_M(\widehat{\theta}_M) + \left \langle \nabla_\theta \calL_M(\widehat{\theta}_M), \theta^\star_M  - \widehat{\theta}_M \right \rangle + \frac{m_M}{2} \norm{\theta^\star_M  - \widehat{\theta}_M }_2^2.
\]
After rearranging and using Cauchy-Schwartz inequality, we obtain the following bound.
\begin{align*}
    \frac{m_M}{2} \norm{\theta^\star_M  - \widehat{\theta}_M }_2^2 &\le \calL_M(\theta^\star_M) - \calL_M(\widehat{\theta}_M) + \norm{\nabla_\theta \calL_M(\widehat{\theta}_M)}_2 \norm{\theta^\star_M  - \widehat{\theta}_M}_2\\
    &\le \norm{\nabla_\theta \calL_M(\widehat{\theta}_M)}_2 \norm{\theta^\star_M  - \widehat{\theta}_M}_2.
\end{align*}
This implies $\mathbb E\left[\norm{\theta^\star_M  - \widehat{\theta}_M}_2\right] \le \frac{2}{m_M} \mathbb E \left[\norm{\nabla_\theta \calL_M(\widehat{\theta}_M)}_2 \right] \le \frac{2\alpha \sigma}{m_M}$. \qed

\subsection{Technical Lemmas}
We first show the the meta loss function $\calL_M(\theta)$ is strongly convex under condition of Theorem \ref{theorem: meta-DPO convergence}.
\begin{theorem}[Strong convexity of DPO loss]
\label{theorem:meta_pl_theta}
Let $\mu_\tau$ be the per-task strong-convexity constant of \Cref{lemma:task_smooth_sc_theta}. Define
\begin{equation}
\label{eq:mM_theta}
m_M \triangleq (1-\alpha\beta^2)^2\,\mu_\tau - 4\alpha\beta^4.
\end{equation}
If $m_M>0$, then
\[
\nabla_\theta^2 \mathcal L_M(\theta)\succcurlyeq m_M I.
\]
\end{theorem}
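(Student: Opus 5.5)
We follow the route of \Cref{lem:strong-convexity-MAML-Reward}, now for the DPO loss \eqref{eq:logistic_form}. The meta-Hessian decomposes exactly as in the RLHF case. Write $\theta_\tau=\theta-\alpha\nabla_\theta\calL(\theta;\tau)$ and differentiate $\nabla_\theta\calL_M(\theta)=\E_\tau[(I-\alpha\nabla^2_\theta\calL(\theta;\tau))\nabla_\theta\calL(\theta_\tau;\tau)]$. This gives $\nabla^2_\theta\calL_M(\theta)=\E_\tau[T_1(\tau)+T_2(\tau)]$, where
\[
T_1(\tau)=-\alpha\nabla^3_\theta\calL(\theta;\tau)\cdot\nabla_\theta\calL(\theta_\tau;\tau),
\]
\[
T_2(\tau)=(I-\alpha\nabla^2_\theta\calL(\theta;\tau))\,\nabla^2_\theta\calL(\theta_\tau;\tau)\,(I-\alpha\nabla^2_\theta\calL(\theta;\tau)).
\]
For any unit $w$ it then suffices to show $w^\top T_2(\tau)w\ge(1-\alpha\beta^2)^2\mu_\tau$ and $|w^\top T_1(\tau)w|\le4\alpha\beta^4$. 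Averaging over $\tau$ then gives $\nabla^2_\theta\calL_M\succcurlyeq m_M I$.

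\textbf{Derivatives.} With $z=\beta\theta^\top\Delta\psi-J$ we have $\nabla_\theta z=\beta\Delta\psi$. This gives:
\begin{itemize}
\item $\nabla_\theta\calL=-\beta\E[\sigma(-z)\Delta\psi]$;
\item $\nabla^2_\theta\calL=\beta^2\E[\sigma(z)\sigma(-z)\Delta\psi\Delta\psi^\top]$;
\item $\nabla^3_\theta\calL=\beta^3\E[\sigma(z)\sigma(-z)(1-2\sigma(z))\,\Delta\psi\otimes\Delta\psi\Delta\psi^\top]$.
\end{itemize}
From \Cref{lemma:task_smooth_sc_theta} we take $\mu_\tau I\preccurlyeq\nabla^2_\theta\calL(\cdot;\tau)\preccurlyeq\beta^2 I$; the lower bound comes from Assumption~\ref{asn:full-coverage} and $|z|\le Z_{\max}$. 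From Proposition~\ref{lem:third_deriv_bound_theta} we take $\|\nabla_\theta\calL\|_2\le2\beta$ together with a third-derivative bound.

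\textbf{$T_2$ bound.} Set $u=(I-\alpha\nabla^2_\theta\calL(\theta;\tau))w$. Since $\alpha\beta^2\le1$ (which holds under the step-size condition of Theorem~\ref{theorem: meta-DPO convergence}), the eigenvalues of $I-\alpha\nabla^2\calL$ lie in $[1-\alpha\beta^2,\,1-\alpha\mu_\tau]\subset[0,1]$. Hence $\|u\|_2\ge1-\alpha\beta^2$, and $w^\top T_2w=u^\top\nabla^2_\theta\calL(\theta_\tau;\tau)u\ge\mu_\tau(1-\alpha\beta^2)^2$.

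This step needs $\theta_\tau$ to stay in the domain $\|\theta\|\le B_\theta$ where the curvature bound $\mu_\tau$ is valid. I would handle this as the paper does in the RLHF case: take the curvature bound to hold uniformly along the relevant iterates (or over the $Z_{\max}$-bounded region). I would note this as the one point needing care.

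\textbf{$T_1$ bound (the main obstacle).} Contracting the tensor gives
\[
w^\top T_1w=-\alpha\beta^3\E[f(z)(\Delta\psi^\top v)(\Delta\psi^\top w)^2],\qquad v=\nabla_\theta\calL(\theta_\tau;\tau),\quad f(z)=\sigma(z)\sigma(-z)(1-2\sigma(z)).
\]
Here $|f|\le1/4$ (in fact $\le 1/(6\sqrt3)$) and $\|v\|\le2\beta$. The delicate part is the powers of $\|\Delta\psi\|$. A naive bound $\|\Delta\psi\|\le2$ produces a constant $8$ rather than $4$ in front of $\alpha\beta^4$. To get exactly $4\alpha\beta^4$, I would use the sharper $|f|\le1/(6\sqrt3)$, or equivalently read off the third-derivative constant stated in Proposition~\ref{lem:third_deriv_bound_theta}: $\|\nabla^3_\theta\calL\|\le2\beta^3$ in operator norm, which combined with $\|v\|\le2\beta$ yields $4\alpha\beta^4$.

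Combining the two bounds gives $w^\top\nabla^2_\theta\calL_M w\ge(1-\alpha\beta^2)^2\mu_\tau-4\alpha\beta^4=m_M$, which is positive by hypothesis.
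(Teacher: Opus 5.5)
Your proposal is correct and follows the paper's proof essentially line by line: the same $T_1+T_2$ decomposition, $T_2$ bounded via $\mu_\tau I\preccurlyeq\nabla^2_\theta\calL\preccurlyeq\beta^2 I$, and $T_1$ via $\|\nabla^3_\theta\calL\|_{\mathrm{op}}\le 2\beta^3$ and $\|\nabla_\theta\calL\|_2\le 2\beta$; your two caveats---$\alpha\beta^2\le 1$, which $m_M>0$ alone does not guarantee for very large $\alpha$, and $\theta_\tau$ staying where $|z|\le Z_{\max}$---are genuine hypotheses the paper uses silently. The only slip is your remark about a factor $8$: with $|f|\le\tfrac14$ and $\|\Delta\psi\|_2\le 2$ the naive estimate already gives $\alpha\beta^3\cdot\tfrac14\cdot 8\cdot 2\beta=4\alpha\beta^4$, which is exactly where the $2\beta^3$ constant comes from, so no sharpening is needed.
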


\begin{proof}
Let $\theta_\tau = \theta - \alpha \nabla_\theta \mathcal L(\theta,\tau)$. Then for any unit $w$,
\begin{equation}
    \begin{aligned}
        \nabla_\theta \mathcal L_M(\theta) &=
\mathbb{E}_\tau\big[(I-\alpha \nabla_\theta^2  \mathcal L(\theta,\tau))\cdot\nabla_\theta \mathcal L(\theta_\tau,\tau)\big], \\
    \nabla_\theta^2 \mathcal L_M(\theta)&=\mathbb{E}_\tau[T_1(\tau)+T_2(\tau)],
    \end{aligned}
\end{equation}
where
\begin{align}
    T_1(\tau) &= -\alpha\nabla^3_\theta\mathcal{L}(\theta; \tau) \cdot \nabla_\theta\mathcal{L}(\theta_\tau; \tau) \label{defn:T1-new},\\
    T_2(\tau) &= (I - \alpha\nabla^2_\theta\mathcal{L}(\theta; \tau)) \cdot \nabla^2_\theta\mathcal{L}(\theta_\tau; \tau) \cdot (I - \alpha\nabla^2_\theta\mathcal{L}(\theta; \tau)).
\end{align}
Now, we have that
\[
w^\top\nabla^2 \mathcal L_M(\theta)w
=
\mathbb{E}_\tau\big[w^\top(T_1(\tau)+T_2(\tau))w\big]
\ge \mathbb{E}_\tau[w^\top T_2(\tau)w] - \mathbb{E}_\tau|w^\top T_1(\tau)w|.
\]
Apply Lemma~\ref{lem:T1T2_bounds_theta} to obtain
\[
w^\top\nabla^2 \mathcal L_M(\theta)w
\ge (1-\alpha\beta^2)^2 \mu_\tau - 4\alpha\beta^4
= m_M.
\]
Thus $\nabla^2 \mathcal L_M(\theta)\succcurlyeq m_M I$, i.e. $\mathcal L_M(\theta)$ is $m_M$-strongly convex.
\end{proof}

\begin{lemma}[Bounds on the meta-Hessian terms]
\label{lem:T1T2_bounds_theta}
Under Assumption \ref{asn:full-coverage}, we have that
\[
|w^\top T_1(\tau) w| \le 4\alpha\beta^4,
\qquad
w^\top T_2(\tau) w \ge (1-\alpha\beta^2)^2\,\mu_\tau.
\]
\end{lemma}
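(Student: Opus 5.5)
Write $s(z)=\sigma(z)(1-\sigma(z))$ and $g = \beta\,\Delta\psi(x,y_w,y_\ell)$, so that $z_\theta = \theta^\top g - J$ and $\nabla_\theta z_\theta = g$. The plan is to mirror the RLHF argument of Lemma~\ref{lem: bound T_1, T_2 for RLHF}, with the factor $\beta$ tracked through every derivative.

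\textbf{Derivatives of the per-task loss.} Differentiating \eqref{eq:logistic_form} gives
\[
\nabla_\theta\calL(\theta;\tau) = -\E[\sigma(-z_\theta)\,g],\qquad
\nabla^2_\theta\calL(\theta;\tau) = \E[s(z_\theta)\,gg^\top],
\]
\[
\nabla^3_\theta\calL(\theta;\tau) = \E\big[s(z_\theta)(1-2\sigma(z_\theta))\, g\otimes gg^\top\big].
\]
The needed norm bounds follow from $\|\Delta\psi\|_2\le 2$ (since $\|\psi\|_2\le 1$):
\begin{itemize}
\item $\|\nabla_\theta\calL(\theta;\tau)\|_2\le 2\beta$ for every $\theta$, in particular at $\theta_\tau$;
\item $\|\nabla^2_\theta\calL\|_2 \le \tfrac14\cdot 4\beta^2=\beta^2$, using $s\le 1/4$;
\item $|s(z)(1-2\sigma(z))|\le c_0$, a small absolute constant ($c_0 \le 1/(6\sqrt3)$, so in particular $\le 1/4$).
\end{itemize}
The Hessian bound is the smoothness constant from Lemma~\ref{lemma:task_smooth_sc_theta}, and I take as given from that lemma that $\mu_\tau I\preccurlyeq \nabla^2_\theta\calL(\theta;\tau)\preccurlyeq \beta^2 I$.

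\textbf{Bound on $T_1$.} Set $v=\nabla_\theta\calL(\theta_\tau;\tau)$. Then for any unit $w$,
\[
w^\top T_1 w = -\alpha\,\E\big[s(z)(1-2\sigma(z))\,(g^\top v)\,(g^\top w)^2\big].
\]
Bound each factor: $|g^\top v|\le \|g\|\,\|v\|\le 2\beta\cdot 2\beta$ and $(g^\top w)^2\le \|g\|^2\le 4\beta^2$. This gives $|w^\top T_1 w|\le \alpha\, c_0\cdot 16\beta^4$.

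The target constant $4\alpha\beta^4$ then needs $c_0\le 1/4$, which holds. If the paper instead normalizes $\|\Delta\psi\|\le 1$, the bound is even tighter. The main bookkeeping risk is this constant, i.e.\ whether one uses $\|\Delta\psi\|\le 2$ or $\sqrt2$. I would first try the crude bound $c_0\le 1/4$ together with $\|g\|^2\le 4\beta^2$ and check that the product closes at exactly $4\alpha\beta^4$. Otherwise I would tighten $c_0$ to $\tfrac{1}{6\sqrt3}$, which leaves ample slack.

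\textbf{Bound on $T_2$.} Let $H=\nabla^2_\theta\calL(\theta;\tau)$ and $u=(I-\alpha H)w$. Since $0\preccurlyeq H\preccurlyeq \beta^2 I$ and $\alpha\beta^2\le 1$, the matrix $I-\alpha H$ is symmetric with eigenvalues in $[1-\alpha\beta^2,\,1]$. Hence $\|u\|_2\ge 1-\alpha\beta^2$.

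The condition $\alpha\beta^2\le 1$ holds under the step-size restriction of Theorem~\ref{theorem: meta-DPO convergence}, which gives $\alpha\le 1/(2\beta^2)$. Then
\[
w^\top T_2 w = u^\top \nabla^2_\theta\calL(\theta_\tau;\tau)\,u \ge \mu_\tau\|u\|_2^2\ge (1-\alpha\beta^2)^2\mu_\tau,
\]
using strong convexity at the adapted point $\theta_\tau$.

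The one subtlety here is that Lemma~\ref{lemma:task_smooth_sc_theta} may establish $\mu_\tau$ only on the ball $\|\theta\|\le B_\theta$, whereas $\theta_\tau$ could leave it. I would handle this by assuming iterates and adapted points stay in the ball, as is implicit in the definition of $Z_{\max}$. Alternatively, one can enlarge the ball by $2\alpha\beta$ and absorb the change into the $\exp(Z_{\max})$ constant.
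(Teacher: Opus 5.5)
Your proposal is correct and follows essentially the same route as the paper. For $T_2$ you use $\|(I-\alpha H)w\|_2\ge 1-\alpha\beta^2$ together with strong convexity at $\theta_\tau$. For $T_1$, your per-sample bound $c_0\,\|g\|_2^3\|v\|_2\le \tfrac14\cdot 8\beta^3\cdot 2\beta$ is exactly the paper's $\|\nabla^3_\theta\calL\|_{\mathrm{op}}\le 2\beta^3$ multiplied by $\|\nabla_\theta\calL\|_2\le 2\beta$.

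The two side conditions you flag are used only implicitly in the paper: $\alpha\beta^2\le 1$, and $\theta_\tau$ staying where $|z_\theta|\le Z_{\max}$. Of your two fixes for the second, only assuming the adapted point stays in the ball gives the constant $\mu_\tau$ exactly as stated. Enlarging the ball weakens it.
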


\begin{proof}
For $T_2$, by \Cref{lemma:task_smooth_sc_theta} we have that $\nabla_\theta^2\mathcal L(\theta,\tau)\preccurlyeq L_\tau I = \beta^2 I$ and $\nabla_\theta^2\mathcal L(\theta_\tau,\tau)\succcurlyeq \mu_\tau I$; thus,
\[
w^\top T_2(\tau) w
=
\big((I-\alpha \nabla_\theta^2\mathcal L(\theta, \tau))w\big)^\top \nabla_\theta^2\mathcal L(\theta_\tau, \tau)\big((I-\alpha \nabla_\theta^2\mathcal L(\theta, \tau))w\big)
\ge \mu_\tau\|(I-\alpha \nabla_\theta^2\mathcal L(\theta, \tau))w\|^2
\ge (1-\alpha\beta^2)^2 \mu_\tau.
\]
For $T_1$, by Cauchy-Schwarz for tensor contraction,
\[
|w^\top T_1(\tau) w|
\le \alpha\,\|\nabla^3_\theta L(\theta;\tau)\|_{\mathrm{op}}\;\|\nabla_\theta \calL(\theta_\tau;\tau)\|\;\|w\|^2.
\]
Apply Lemma~\ref{lem:third_deriv_bound_theta}: $\|\nabla^3 \calL(\cdot)\|_{\mathrm{op}}\le 2\beta^3$ and
$\|\nabla \calL(\theta_\tau;\tau)\|\le 2\beta$, hence $|w^\top T_1(\tau)w|\le 4\alpha\beta^4$.
\end{proof}

\begin{lemma}[Task-wise smoothness and strong convexity]
\label{lemma:task_smooth_sc_theta}
Define the task-wise smoothness and strong-convexity constants as
\[
L_\tau \;\triangleq\; \beta^2,
\qquad
\mu_\tau \;\triangleq\; \frac{\beta^2\,\nu}{2\bigl(1+e^{Z_{\max}}\bigr)}.
\]
Then $\nabla_\theta^2 \calL(\theta;\tau) \preccurlyeq L_\tau I$, and under Assumption~\ref{ass:bounded_ref_ratio} (so $|z_\theta|\le Z_{\max}$) and Assumption~\ref{asn:full-coverage} we also have $\nabla_\theta^2 \calL(\theta;\tau) \succcurlyeq \mu_\tau I$.
\end{lemma}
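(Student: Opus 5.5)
The plan is to compute the Hessian of the task-wise DPO loss in closed form and bound its two scalar factors separately. With the logistic form \eqref{eq:logistic_form}, write $z=z_\theta(x,y_w,y_\ell)=\beta\theta^\top\Delta\psi - J$. Since $J$ does not depend on $\theta$, we have $\nabla_\theta z=\beta\Delta\psi$. The derivative of $\ell(z)=\log(1+e^{-z})$ is $\ell'(z)=-\sigma(-z)$, and $\ell''(z)=\sigma(z)\sigma(-z)$. The chain rule then gives
\[
\nabla_\theta\calL(\theta;\tau) = -\beta\,\E_{\calD^\tau}\bigl[\sigma(-z)\Delta\psi\bigr],\qquad
\nabla^2_\theta\calL(\theta;\tau)=\beta^2\,\E_{\calD^\tau}\bigl[\sigma(z)\sigma(-z)\,\Delta\psi\Delta\psi^\top\bigr].
\]
This is exactly the RLHF Hessian of Lemma~\ref{prop:psd-task-loss}, rescaled by $\beta^2$ and with $z$ shifted by the reference term.

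For the upper bound, I will use $\sigma(z)\sigma(-z)\le 1/4$ together with $\Delta\psi\Delta\psi^\top\preccurlyeq\|\Delta\psi\|_2^2 I$. Since $\|\psi\|_2\le 1$, we get $\|\Delta\psi\|_2\le 2$, so $\|\Delta\psi\|_2^2\le 4$. Hence for any unit $w$, $w^\top\nabla^2_\theta\calL\,w\le \beta^2\cdot\tfrac14\cdot 4=\beta^2=L_\tau$. This step only needs $\sigma(z)\sigma(-z)\le 1/4$; no assumptions on coverage or the reference ratio enter.

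For the lower bound, I will reuse the elementary inequality from Lemma~\ref{prop:psd-task-loss}:
\[
\sigma(z)\sigma(-z)=\frac{1}{2+e^{z}+e^{-z}}\ge \frac{1}{2(1+e^{|z|})}.
\]
Since $u\mapsto 1/(2(1+e^{u}))$ is decreasing, the uniform bound \eqref{eq:Zmax_theta} gives $\sigma(z)\sigma(-z)\ge 1/(2(1+e^{Z_{\max}}))$ for every triplet. That bound comes from $\|\theta\|_2\le B_\theta$, $\|\Delta\psi\|_2\le 2$ and Assumption~\ref{ass:bounded_ref_ratio}. Pulling this deterministic scalar out of the expectation preserves the Loewner order, because each $\Delta\psi\Delta\psi^\top\succcurlyeq 0$. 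Therefore
\[
\nabla^2_\theta\calL(\theta;\tau)\succcurlyeq\frac{\beta^2}{2(1+e^{Z_{\max}})}\,\E\bigl[\Delta\psi\Delta\psi^\top\bigr]\succcurlyeq \frac{\beta^2\nu}{2(1+e^{Z_{\max}})}I=\mu_\tau I,
\]
where the last step is Assumption~\ref{asn:full-coverage}.

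The main point requiring care is that the lower bound needs $|z_\theta|\le Z_{\max}$ at every point where it is applied. In Lemma~\ref{lem:T1T2_bounds_theta} it is applied at the adapted point $\theta_\tau$, so I will state the lemma for all $\theta$ in the ball $\{\|\theta\|_2\le B_\theta\}$ and note that the argument holds verbatim at any such $\theta$. The upper bound involves no such restriction, since it is uniform in $z$.
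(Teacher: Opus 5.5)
Your proposal is correct and follows essentially the same route as the paper. The paper uses the same closed-form Hessian $\beta^2\,\E[\sigma(z)\sigma(-z)\Delta\psi\Delta\psi^\top]$, gets $L_\tau$ from $\sigma(z)\sigma(-z)\le 1/4$ with $\|\Delta\psi\|_2^2\le 4$, and gets $\mu_\tau$ from $\sigma(z)\sigma(-z)\ge 1/(2(1+e^{Z_{\max}}))$ together with Assumption~\ref{asn:full-coverage}. Your closing remark goes slightly beyond the paper; note, however, that restricting the lemma to $\|\theta\|_2\le B_\theta$ does not by itself keep the adapted point $\theta_\tau=\theta-\alpha\nabla\calL(\theta;\tau)$ in that ball, which matters for the downstream use in Lemma~\ref{lem:T1T2_bounds_theta} but not for the statement proved here.
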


\begin{proof}
\textit{Upper bound.}
Note that $\sigma(u)\sigma(-u)\le 1/4$,
\[
\|\nabla^2 \calL(\theta;\tau)\|_2 = \beta^2\,\mathbb{E}\big[\sigma(z_\theta)\sigma(-z_\theta)\Delta\psi\Delta\psi^\top\big]
\le
\beta^2\,\mathbb{E}\big[\sigma(z_\theta)\sigma(-z_\theta)\,\|\Delta\psi\Delta\psi^\top\|\big]
\le \beta^2\cdot \frac14 \cdot 4 = \beta^2.
\]

\textit{Lower bound.} For any scalar $u$,
\[
\sigma(u)\sigma(-u)
=
\frac{1}{2+e^u+e^{-u}}
\ge
\frac{1}{2+2e^{|u|}}
=
\frac{1}{2(1+e^{|u|})}.
\]
Using \eqref{eq:Zmax_theta}, $|z_\theta|\le Z_{\max}$, hence
$\sigma(z_\theta)\sigma(-z_\theta)\ge \frac{1}{2(1+e^{Z_{\max}})}$.
Therefore
\[
\nabla^2 \calL(\theta;\tau)
\succcurlyeq
\beta^2\cdot \frac{1}{2(1+e^{Z_{\max}})}\;\mathbb{E}[\Delta\psi\Delta\psi^\top]
\succcurlyeq
\beta^2\cdot \frac{1}{2(1+e^{Z_{\max}})}\;\nu I,
\]
where the last step uses Assumption~\ref{asn:full-coverage}.
\end{proof}

\begin{lemma}[Uniform bound on the third derivative of the task loss]
\label{lem:third_deriv_bound_theta}
For all $\tau$,
\begin{align*}
    \|\nabla_\theta^3 \mathcal L(\theta;\tau)\|_{\mathrm{op}} & \le 2\beta^3, \\
    \|\nabla_\theta \mathcal L(\theta;\tau)\|_2 &\le 2\beta.
\end{align*}
\end{lemma}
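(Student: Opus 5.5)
We prove both bounds by differentiating the logistic form \eqref{eq:logistic_form} directly, exploiting that $z_\theta$ is affine in $\theta$ with $\nabla_\theta z_\theta = \beta\,\Delta\psi$ and $\nabla^2_\theta z_\theta = 0$. Because the reference term $J$ does not depend on $\theta$, every derivative of $\calL(\theta;\tau)$ is a scalar function of $z_\theta$ multiplied by tensor powers of $\beta\Delta\psi$. The bounds should therefore follow from uniform bounds on the first three derivatives of the softplus $\ell(z)=\log(1+e^{-z})$, together with $\|\Delta\psi\|_2 \le \|\psi(x,y_w)\|_2+\|\psi(x,y_\ell)\|_2 \le 2$.

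For the gradient, note that $\ell'(z) = -\sigma(-z)$. This gives
\[
\nabla_\theta \calL(\theta;\tau) = -\beta\,\E\bigl[\sigma(-z_\theta)\Delta\psi\bigr].
\]
Moving the norm inside the expectation (Jensen) and using $0\le\sigma\le 1$ yields $\|\nabla_\theta\calL(\theta;\tau)\|_2 \le \beta\,\E\|\Delta\psi\|_2 \le 2\beta$.

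For the third derivative, differentiate again. The second derivative is $\ell''(z)=\sigma(z)\sigma(-z)$, and the third is
\[
\ell'''(z) = \sigma(z)\sigma(-z)\bigl(1-2\sigma(z)\bigr),
\]
exactly as in the RLHF computation in the proof of Lemma~\ref{lem:strong-convexity-MAML-Reward}. By the chain rule,
\[
\nabla^3_\theta\calL(\theta;\tau) = \beta^3\,\E\bigl[\ell'''(z_\theta)\,\Delta\psi\otimes\Delta\psi\otimes\Delta\psi\bigr].
\]
The operator norm of a rank-one symmetric tensor $a\otimes a\otimes a$, taken as a trilinear form on unit vectors, is $\|a\|_2^3$. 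Hence
\[
\|\nabla^3_\theta\calL\|_{\mathrm{op}} \le \beta^3\,\sup_z|\ell'''(z)|\,\E\|\Delta\psi\|_2^3 \le 8\beta^3\sup_z|\ell'''(z)|.
\]

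The main obstacle is the constant. To obtain $2\beta^3$ we need $\sup_z|\ell'''(z)| \le 1/4$, whereas the crude estimate used in Lemma~\ref{lem: bound T_1, T_2 for RLHF} only gives $1/2$. We sharpen it by writing $s=\sigma(z)\in(0,1)$, so that $|\ell'''| = s(1-s)|1-2s|$, and maximizing over $s$. The maximum is attained at $s=\tfrac12\pm\tfrac{1}{2\sqrt3}$ with value $\frac{1}{6\sqrt3}\approx 0.096$. This is below $1/4$, so the bound closes with room to spare:
\[
\|\nabla^3_\theta\calL\|_{\mathrm{op}} \le 8\beta^3\cdot\frac{1}{6\sqrt3} \le 2\beta^3.
\]

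If one prefers to avoid the exact maximization, the cruder chain $s(1-s)\le 1/4$ and $|1-2s|\le 1$ gives $|\ell'''|\le 1/4$ directly, which is already enough. Neither argument needs Assumptions~\ref{asn:full-coverage} or \ref{ass:bounded_ref_ratio}, since the bounds hold uniformly in $z$.
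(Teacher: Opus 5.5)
Your proof is correct and matches the paper's argument: differentiate the softplus through the affine map $z_\theta$ with $\nabla_\theta z_\theta=\beta\Delta\psi$, bound $|\ell'''|\le 1/4$, and use $\|\Delta\psi\|_2\le 2$ to get $\tfrac14\beta^3\cdot 8=2\beta^3$, with the gradient bound handled the same way. Your exact maximization giving $\sup_z|\ell'''(z)|=1/(6\sqrt{3})$ is a harmless sharpening (the paper uses the crude $1/4$, which already suffices), and your sign $\ell'''(z)=\sigma(z)\sigma(-z)(1-2\sigma(z))$ is the correct one; the paper's extra minus sign is immaterial under absolute values.
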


\begin{proof}
For the scalar loss $\ell(z)=\log(1+e^{-z})$, we have $\ell'''(z)=-\sigma(z)\sigma(-z)(1-2\sigma(z))$ so $|\ell'''(z)|\le 1/4$.
For a single sample, $z_\theta=\beta\,\theta^\top\Delta\psi - J$ is affine in $\theta$ with $\nabla_\theta z_\theta=\beta\Delta\psi$.
Hence, for unit vectors $a,b,c$,
\[
|\nabla_\theta^3 \ell(z_\theta)[a,b,c]|
=
|\ell'''(z_\theta)|\cdot |\langle \beta\Delta\psi,a\rangle|\cdot|\langle \beta\Delta\psi,b\rangle|\cdot|\langle \beta\Delta\psi,c\rangle|
\le \frac14\,\beta^3\,\|\Delta\psi\|^3
\le \frac14\,\beta^3\cdot 8 = 2\beta^3,
\]
using $\|\Delta\psi\|\le 2$. Taking expectation over $\calD^\tau$ preserves the bound.

For the gradient, $\nabla_\theta \mathcal L(\theta;\tau)= -\mathbb{E}[\beta\sigma(-z_\theta)\Delta\psi]$, so
$\|\nabla_\theta \mathcal L(\theta;\tau)\|\le \beta\,\mathbb{E}\|\Delta\psi\|\le 2\beta$.
\end{proof}

\section{Convergence Analysis of the Adaptation Phase for MAML-DPO}\label{appendix:convergence_adaptation_MAML_DPO}

In this section, we provide a comparative convergence analysis of the adaptation phase. We prove that the meta-learned initialization $\hat{\theta}_M$ yields a faster expected convergence rate to a target task's optimal policy $\theta^*_{\tau}$ compared to a baseline multitask initialization $\theta_{\rm base}$.

\subsection{Preliminaries and Task Geometry}

Let the target task $\tau \sim P_{\rm TASK}$ be drawn from the distribution of languages. During adaptation, we optimize the empirical DPO loss using gradient descent with step size $\eta$:
\begin{equation}
    \theta_{t+1} = \theta_t - \eta \nabla_{\theta} \mathcal{L}^{\rm DPO}(\theta_t; \tau)
\end{equation}

From \Cref{lemma:task_smooth_sc_theta}, the task-wise DPO loss $\mathcal{L}^{\rm DPO}(\theta; \tau)$ is $L_{\tau}$-smooth and $\mu_{\tau}$-strongly convex.

We define the baseline initialization as the minimizer of the joint expected risk across all tasks:
\begin{equation}
    \theta_{\rm base}^* = \arg\min_\theta \mathbb{E}_{\tau \sim P_{\rm TASK}} [\mathcal{L}^{\rm DPO}(\theta; \tau)]
\end{equation}
We define the optimal MAML initialization as the minimizer of the meta-objective:
\begin{equation}
    \theta_{M}^* = \arg\min_\theta \mathbb{E}_{\tau \sim P_{\rm TASK}} [\mathcal{L}^{\rm DPO}(\theta - \alpha \nabla_{\theta}\mathcal{L}^{\rm DPO}(\theta; \tau); \tau)]
\end{equation}
By \Cref{theorem: meta-DPO convergence}, our gradient-based algorithm produces $\hat{\theta}_M$ such that $||\theta_M^* - \hat{\theta}_M||_2 \le \epsilon_{meta}$.

\subsection{Deriving the Initialization Advantage}

To prove faster convergence, we must first bound the expected squared distance of both initializations to the task-specific optimum $\theta_\tau^* = \arg\min_\theta \mathcal{L}^{\rm DPO}(\theta; \tau)$.

\begin{lemma}[Baseline Initialization Distance]
Let \Cref{ass:variance_of_task_theta}  hold, such that the variance of task gradients is bounded by $\sigma^2$. The expected distance from the baseline initialization $\theta_{\rm base}^*$ to a sampled task optimum $\theta_\tau^*$ is bounded by:
\begin{equation}
    \mathbb{E}_{\tau}[||\theta_{\rm base}^* - \theta_\tau^*||_2^2] \le \frac{\sigma^2}{\mu_{\tau}^2} \triangleq D_{\rm base}^2
\end{equation}
\end{lemma}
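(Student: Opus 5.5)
The plan is to combine first-order optimality of the two minimizers with strong convexity of the per-task loss from \Cref{lemma:task_smooth_sc_theta}. Write $\calL(\cdot;\tau)$ for $\calL^{\mathrm{DPO}}(\cdot;\tau)$.

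\textbf{Step 1 (first-order conditions).} Since $\theta_\tau^*$ minimizes $\calL(\cdot;\tau)$, we have $\nabla_\theta \calL(\theta_\tau^*;\tau)=0$. Since $\theta^*_{\rm base}$ minimizes the averaged loss $\calL(\theta)=\E_\tau[\calL(\theta;\tau)]$, we have $\nabla_\theta\calL(\theta^*_{\rm base})=0$. Differentiating under the expectation is harmless here, because the gradients are uniformly bounded by $2\beta$ (\Cref{lem:third_deriv_bound_theta}).

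\textbf{Step 2 (strong convexity, per task).} By \Cref{lemma:task_smooth_sc_theta}, $\calL(\cdot;\tau)$ is $\mu_\tau$-strongly convex, so its gradient is strongly monotone:
\[
\mu_\tau\|\theta^*_{\rm base}-\theta^*_\tau\|_2^2 \le \inner{\nabla_\theta\calL(\theta^*_{\rm base};\tau)-\nabla_\theta\calL(\theta^*_\tau;\tau)}{\theta^*_{\rm base}-\theta^*_\tau}.
\]
The second gradient vanishes. Applying Cauchy--Schwarz and dividing by the distance gives
\[
\|\theta^*_{\rm base}-\theta^*_\tau\|_2\le \frac{1}{\mu_\tau}\|\nabla_\theta\calL(\theta^*_{\rm base};\tau)\|_2 .
\]
Since $\nabla_\theta\calL(\theta^*_{\rm base})=0$, we may subtract it inside the norm for free:
\[
\|\theta^*_{\rm base}-\theta^*_\tau\|_2\le \frac{1}{\mu_\tau}\|\nabla_\theta\calL(\theta^*_{\rm base};\tau)-\nabla_\theta\calL(\theta^*_{\rm base})\|_2 .
\]

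\textbf{Step 3 (average over tasks).} Square the last inequality and take $\E_\tau$. \Cref{ass:variance_of_task_theta}, evaluated at $\theta=\theta^*_{\rm base}$, bounds the expected squared gradient deviation by $\sigma^2$. This yields $\E_\tau\|\theta^*_{\rm base}-\theta^*_\tau\|_2^2\le\sigma^2/\mu_\tau^2$. Here $\mu_\tau$ is uniform in $\tau$: it depends only on $\beta,\nu,Z_{\max}$, so it can be pulled out of the expectation.

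\textbf{Main obstacle.} The delicate point is that the strong-convexity bound of \Cref{lemma:task_smooth_sc_theta} holds only on the ball $\|\theta\|_2\le B_\theta$, where $|z_\theta|\le Z_{\max}$. So both $\theta^*_{\rm base}$ and $\theta^*_\tau$ must lie in $\Pi$'s parameter ball, and so must the segment between them, which follows by convexity of the ball. I would handle this by interpreting the minimizers as constrained to the ball and assuming they are interior, so the gradients vanish. If they are not interior, I would replace the equalities in Step 1 by variational inequalities. The same monotonicity argument then still goes through, since the extra terms have the favorable sign.
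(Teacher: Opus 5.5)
Your proposal is correct and matches the paper's proof step for step. Both use strong monotonicity of $\nabla\calL(\cdot;\tau)$ with $\nabla\calL(\theta_\tau^*;\tau)=0$, then Cauchy--Schwarz, then squaring and averaging over $\tau$, and both use $\E_\tau[\nabla\calL(\theta_{\rm base}^*;\tau)]=0$ so that the second moment is the variance bounded by $\sigma^2$. Your remark about the ball constraint is a sound refinement that the paper glosses over: in the non-interior case, subtract the mean gradient inside the inner product before applying Cauchy--Schwarz, and both variational-inequality terms then have the favorable sign, as you claim.
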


\begin{proof}
By the strong convexity of the task-specific loss $\mathcal{L}^{\rm DPO}(\theta; \tau)$, the gradient is co-coercive. For any $\theta$, the following inequality holds:
\begin{equation*}
    \langle \nabla \mathcal{L}^{\rm DPO}(\theta; \tau) - \nabla \mathcal{L}^{\rm DPO}(\theta_\tau^*; \tau), \theta - \theta_\tau^* \rangle \ge \mu_\tau ||\theta - \theta_\tau^*||_2^2
\end{equation*}
Because $\theta_\tau^*$ is the exact minimizer for task $\tau$, its gradient is zero ($\nabla \mathcal{L}^{\rm DPO}(\theta_\tau^*; \tau) = 0$). Evaluating this at the baseline initialization $\theta_{\rm base}^*$ yields:
\begin{equation*}
    \langle \nabla \mathcal{L}^{\rm DPO}(\theta_{\rm base}^*; \tau), \theta_{\rm base}^* - \theta_\tau^* \rangle \ge \mu_\tau ||\theta_{\rm base}^* - \theta_\tau^*||_2^2
\end{equation*}
Applying the Cauchy-Schwarz inequality to the left side:
\begin{equation*}
    ||\nabla \mathcal{L}^{\rm DPO}(\theta_{\rm base}^*; \tau)||_2 ||\theta_{\rm base}^* - \theta_\tau^*||_2 \ge \mu_\tau ||\theta_{\rm base}^* - \theta_\tau^*||_2^2
\end{equation*}
Assuming $\theta_{\rm base}^* \neq \theta_\tau^*$, we divide by $\mu_\tau ||\theta_{\rm base}^* - \theta_\tau^*||_2$ to isolate the distance:
\begin{equation}
    ||\theta_{\rm base}^* - \theta_\tau^*||_2 \le \frac{1}{\mu_\tau} ||\nabla \mathcal{L}^{\rm DPO}(\theta_{\rm base}^*; \tau)||_2
\end{equation}
Squaring both sides and taking the expectation over the task distribution $P_{TASK}$:
\begin{equation}
    \mathbb{E}_\tau [||\theta_{\rm base}^* - \theta_\tau^*||_2^2] \le \frac{1}{\mu_\tau^2} \mathbb{E}_\tau [||\nabla \mathcal{L}^{\rm DPO}(\theta_{\rm base}^*; \tau)||_2^2]
\end{equation}
By definition, $\theta_{\rm base}^*$ minimizes the expected joint loss, meaning the expected gradient across tasks is zero ($\mathbb{E}_\tau[\nabla \mathcal{L}^{\rm DPO}(\theta_{\rm base}^*; \tau)] = 0$). Therefore, the uncentered second moment is exactly equal to the variance of the task gradients, which is bounded by $\sigma^2$ (\Cref{ass:variance_of_task_theta}):
\begin{equation*}
    \mathbb{E}_\tau [||\theta_{\rm base}^* - \theta_\tau^*||_2^2] \le \frac{\sigma^2}{\mu_\tau^2} \triangleq D_{\rm base}^2
\end{equation*}
\end{proof}

\begin{lemma}[MAML Initialization Advantage]\label{lem:MAML_initialization_advantage}
Let us define $\bar{\theta}^* = \E_\tau[\theta_\tau^*]$, $G(\theta) = \E_\tau\left[ \norm{\nabla \calL(\theta;\tau)}_2^2 \right]$ and $\Hbase = \E_{\tau}[\nabla^2 \calL(\thbase;\tau)]$. There exists a deterministic vector
$\delta \in \R^{d}$ and a scalar $\Phi \in \R$, both independent of
$\alpha$, such that for all sufficiently small $\alpha > 0$,
\begin{equation}
\E_{\tau}\bigl\|\thM - \tht\bigr\|_{2}^{2}
\;=\;
\E_\tau\bigl\|\thbase - \tht\bigr\|_2^2 \;-\; 2\alpha\,\Phi \;+\; O(\alpha^{2}),
\label{eq:lemma-main}
\end{equation}
where
\begin{equation}
\delta \;=\; \Hbase^{-1}\, \nabla G(\thbase),
\qquad
\Phi \;=\; -\,\inner{\thbase - \thbar}{\Hbase^{-1}\,\nabla G(\thbase)}.
\label{eq:delta-Phi}
\end{equation}
In particular, combining \eqref{eq:lemma-main} with the variance bound $\E_\tau\|\thbase - \tht\|_2^2 \le D_{\rm base}^2 = \sigma^2/\mu_\tau^2$ from \Cref{ass:variance_of_task_theta} yields the inequality $\E_\tau\|\thM - \tht\|_2^2 \le D_{\rm base}^2 - 2\alpha\Phi + O(\alpha^2)$.
\end{lemma}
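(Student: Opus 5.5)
The plan is a first-order perturbation expansion of the meta-minimizer $\thM$ around the multitask minimizer $\thbase$ in the small parameter $\alpha$, using the implicit function theorem, and then expanding the squared distance. Write $\Fbase(\theta) = \E_\tau[\calL(\theta;\tau)]$, so that $\thbase = \arg\min \Fbase$ and $\nabla^2 \Fbase(\thbase) = \Hbase$. By \Cref{lemma:task_smooth_sc_theta}, $\Hbase \succcurlyeq \mu_\tau I$, so $\Hbase$ is invertible. Throughout I treat $\thbase$, $\thM$ and $\tht$ as interior (unconstrained) stationary points, ignoring the norm constraint $\|\theta\|_2\le B_\theta$.

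\textbf{Step 1: stationarity map.} Define $\Psi(\theta,\alpha) \triangleq \nabla \calL_M(\theta) = \E_\tau\bigl[(I-\alpha\nabla^2\calL(\theta;\tau))\,\nabla\calL(\theta-\alpha\nabla\calL(\theta;\tau);\tau)\bigr]$. For the log-linear DPO loss, $\ell(z)=\log(1+e^{-z})$ is $C^\infty$ with all derivatives bounded. Also $z_\theta$ is affine in $\theta$ with $\|\Delta\psi\|\le 2$. Hence every derivative of $\calL(\cdot;\tau)$ is bounded uniformly in $\theta$ and $\tau$, extending \Cref{lem:third_deriv_bound_theta} to fourth order. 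So $\Psi$ is $C^2$ jointly in $(\theta,\alpha)$ with derivatives bounded uniformly over tasks.

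\textbf{Step 2: implicit function theorem.} At $\alpha=0$ we have $\Psi(\theta,0)=\nabla\Fbase(\theta)$. Therefore $\Psi(\thbase,0)=0$ and $\partial_\theta\Psi(\thbase,0)=\Hbase$, which is invertible. Differentiating in $\alpha$ at $\alpha=0$ gives
\[
\partial_\alpha\Psi(\thbase,0) = \E_\tau\bigl[-\nabla^2\calL\,\nabla\calL - \nabla^2\calL\,\nabla\calL\bigr](\thbase) = -\nabla G(\thbase),
\]
using $\nabla G(\theta) = 2\E_\tau[\nabla^2\calL(\theta;\tau)\nabla\calL(\theta;\tau)]$.

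The implicit function theorem then yields a $C^1$ branch $\theta(\alpha)$ of stationary points with $\theta(0)=\thbase$. Its derivative is $\theta'(0) = -\Hbase^{-1}\partial_\alpha\Psi = \Hbase^{-1}\nabla G(\thbase) = \delta$. Since $\calL_M$ is strongly convex for small $\alpha$ (\Cref{theorem:meta_pl_theta}), this stationary point is the unique minimizer $\thM$. Hence
\[
\thM = \thbase + \alpha\delta + r(\alpha), \qquad \|r(\alpha)\|\le C\alpha^2,
\]
where $C$ depends on bounds for the second derivatives of $\Psi$ and on $\mu_\tau^{-1}$.

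\textbf{Step 3: expand the distance.} For each task,
\[
\|\thM-\tht\|^2 = \|\thbase-\tht\|^2 + 2\alpha\inner{\delta}{\thbase-\tht} + O(\alpha^2),
\]
where the $O(\alpha^2)$ term collects $\|\alpha\delta+r\|^2$ and $2\inner{r}{\thbase-\tht}$. These are uniformly bounded because all iterates lie in a bounded set. Taking $\E_\tau$ and using linearity, $\E_\tau\inner{\delta}{\thbase-\tht} = \inner{\delta}{\thbase-\thbar} = -\Phi$. This gives \eqref{eq:lemma-main}. The final inequality follows by substituting the Baseline Initialization Distance lemma.

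\textbf{Main obstacle.} The delicate point is making the $O(\alpha^2)$ remainder uniform and legitimate. The implicit-function neighbourhood must contain all $\alpha$ in the admissible range, which requires quantitative Lipschitz control of $\partial_\theta\Psi$ via the bounded third and fourth derivatives. It also needs the lower bound $\partial_\theta\Psi\succcurlyeq m_M I$ along the whole path, which \Cref{theorem:meta_pl_theta} supplies. Finally, the constraint set $\|\theta\|\le B_\theta$ must not be active at $\thbase$, $\thM$ or $\tht$; I would state this interiority as an implicit assumption.
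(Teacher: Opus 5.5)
Your proposal is correct and follows essentially the same route as the paper: an implicit-function-theorem expansion $\thM(\alpha)=\thbase+\alpha\delta+O(\alpha^2)$ around $(\thbase,0)$ with $\delta=\Hbase^{-1}\nabla G(\thbase)$, followed by expanding the squared distance and pulling the deterministic $\delta$ out of $\E_\tau$ to obtain $-2\alpha\Phi$. The only cosmetic difference is how the first-order term is obtained. You read off $\partial_\alpha\nabla\calL_M(\thbase)|_{\alpha=0}=-\nabla G(\thbase)$ directly from the exact chain-rule gradient and use $C^2$ regularity for the $O(\alpha^2)$ remainder, whereas the paper Taylor-expands the meta-objective as $\E_\tau\calL-\alpha G+O(\alpha^2)$ and then differentiates; both yield the same $\delta$ and $\Phi$.
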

\begin{proof}
    \textbf{Step 1: First-order expansion of the meta objective.}
 
 By the integral form of Taylor's theorem
applied to $s \mapsto  \!\calL(\theta - s\nabla\calL(\theta;\tau);\tau\bigr)$
on $[0,\alpha]$,
\begin{align}
\calL\!\bigl(\theta - \alpha\nabla\calL(\theta;\tau);\tau\bigr)
&= \calL(\theta;\tau)
   - \alpha\,\norm{\nabla\calL(\theta;\tau)}_{2}^{2} \notag\\
&\quad + \alpha^{2}\!\int_{0}^{1}\!(1-s)\,
   \nabla\calL(\theta;\tau)^{\top}\,
   \nabla^{2}\calL\!\bigl(\theta - s\alpha\nabla\calL(\theta;\tau);\tau\bigr)\,
   \nabla\calL(\theta;\tau)\,ds.
\label{eq:taylor-task}
\end{align}
By \Cref{lemma:task_smooth_sc_theta}, $\norm{\nabla^{2}\calL(\,\cdot\,;\tau)}_{\mathrm{op}} \le \beta^2$,
and by \Cref{lem:third_deriv_bound_theta} the paper, $\norm{\nabla\calL(\theta;\tau)}_{2} \le 2\beta$
uniformly in $\tau$ on the relevant region. Hence the remainder in
\eqref{eq:taylor-task} is $O(\alpha^{2})$ uniformly in $\tau$. Taking
expectation with respect to $\tau$,
\begin{equation}
\calL_{\mathrm{MAML}}(\theta) \;=\; \E_\tau \left[\calL(\theta;\tau)\right] \;-\; \alpha\, G(\theta) \;+\; R(\theta;\alpha),
\qquad
\sup_{\theta \in \mathcal{B}}\,|R(\theta;\alpha)| = O(\alpha^{2}).
\label{eq:Fmaml-expand}
\end{equation}
Differentiating once more under the expectation (the integrand and its
$\theta$-derivative are bounded uniformly in $\tau$),
\begin{equation}
\nabla\calL_{\mathrm{MAML}}(\theta)
\;=\;
\E_\tau \left[\nabla \calL(\theta;\tau)\right] \;-\; \alpha\,\nabla G(\theta) \;+\; r(\theta;\alpha),
\qquad
\sup_{\theta \in \mathcal{B}}\,\norm{r(\theta;\alpha)}_{2} = O(\alpha^{2}).
\label{eq:grad-Fmaml-expand}
\end{equation}
 Note that when $\alpha = 0$, $\nabla \calL_{\mathrm{MAML}}(\thbase) = 0$ as $\thbase$ minimizes $\E_{\tau}[\nabla \calL(\theta;\tau)]$.
 
\textbf{Step 2: Implicit-function expansion of $\thM(\alpha)$.}
 
By \Cref{theorem:meta_pl_theta}, $\calL_{\mathrm{MAML}}$ is $m_{M}$-strongly convex for $\alpha$ small,
so $\thM(\alpha)$ is well-defined and unique. The map
$F\colon (\theta,\alpha) \mapsto \nabla\Fmaml(\theta)$ is $C^{1}$ on a
neighbourhood of $\thbase$ for small $\alpha$, and
$\partial_{\theta}F(\thbase,0) = \Hbase \succeq \mu I$ is invertible.
The implicit function theorem yields a $C^{1}$ branch
$\alpha \mapsto \thM(\alpha)$ with $\thM(0) = \thbase$.
 
Write $\thM(\alpha) = \thbase + \alpha\,\delta + O(\alpha^{2})$.
Inserting into the stationarity condition $\nabla\Fmaml(\thM(\alpha)) = 0$,
expanding $\nabla\Fbase(\thM(\alpha)) = \Hbase\,(\alpha\delta) + O(\alpha^{2})$,
and using \eqref{eq:grad-Fmaml-expand}:
\begin{equation*}
0 \;=\; \alpha\Hbase\,\delta \;-\; \alpha\,\nabla G(\thbase) \;+\; O(\alpha^{2}).
\end{equation*}
Dividing by $\alpha$ and sending $\alpha \to 0$,
\begin{equation}
\delta \;=\; \Hbase^{-1}\,\nabla G(\thbase).
\label{eq:delta-formula}
\end{equation}
Crucially, $\delta$ is a single deterministic vector, independent of $\tau$.
 
\textbf{Step 3: Expected squared distance to task optima.}
 
Expanding the squared norm,
\begin{equation*}
\norm{\thM(\alpha) - \tht}_{2}^{2}
\;=\;
\norm{\thbase - \tht}_{2}^{2}
\;+\;
2\alpha\,\inner{\thbase - \tht}{\delta}
\;+\;
O(\alpha^{2}),
\end{equation*}
where the $O(\alpha^{2})$ remainder is uniform in $\tau$ since $\delta$
is deterministic. Taking $\E_{\tau}$ and pulling $\delta$ out of the
expectation,
\begin{align}
\E_{\tau}\!\bigl\|\thM(\alpha) - \tht\bigr\|_{2}^{2}
&\;=\;
\Dbase
\;+\;
2\alpha\,\inner{\thbase - \thbar}{\delta}
\;+\;
O(\alpha^{2}) \notag\\
&\;=\;
\Dbase
\;-\;
2\alpha\,\Phi
\;+\;
O(\alpha^{2}),
\label{eq:dist-expand}
\end{align}
with
$\Phi = -\inner{\thbase - \thbar}{\delta} = -\inner{\thbase - \thbar}{\Hbase^{-1}\nabla G(\thbase)}$.
This establishes \eqref{eq:lemma-main}--\eqref{eq:delta-Phi}.
\end{proof}

\subsection{Comparative Convergence Rate}

\begin{theorem}[Accelerated Adaptation via MAML-DPO]\label{thm:accelerated_MAML_DPO_wrt_Multitask}
Let $\eta \le 1/L_{\tau}$ and let $\epsilon > 0$ be the target accuracy threshold. Under Assumptions \ref{asn:full-coverage}, \ref{ass:bounded_ref_ratio}, \ref{ass:variance_of_task_theta}, the expected number of gradient steps $t$ required to reach $\mathbb{E}[||\theta_t - \theta_\tau^*||_2^2] \le \epsilon^2$ for the baseline ($t_{\base}$) and MAML-DPO ($t_{\maml}$) pipelines satisfies:
\begin{equation}
    \mathbb{E}[t_{\base}] - \mathbb{E}[t_{\maml}] \ge \frac{1}{\eta \mu_{\tau}} \ln\left(\frac{D_{\base}^2}{D_{\base}^2 - 2 \alpha \Phi + c_1 \cdot \alpha^2} \right). 
\end{equation}
for a universal constant $c_1 > 0$.
\end{theorem}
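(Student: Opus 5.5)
The plan is to combine the standard linear convergence of gradient descent on a smooth, strongly convex function with the initialization-distance expansion of \Cref{lem:MAML_initialization_advantage}.

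First, by \Cref{lemma:task_smooth_sc_theta}, the target-task loss $\calL(\cdot;\tau)$ is $L_\tau$-smooth and $\mu_\tau$-strongly convex. With $\eta \le 1/L_\tau$, the usual contraction argument gives
\[
\norm{\theta_t - \tht}_2^2 \le (1-\eta\mu_\tau)^t \norm{\theta_0 - \tht}_2^2 .
\]
To prove it, expand $\norm{\theta_{t+1}-\tht}^2$. Use co-coercivity/strong monotonicity, $\inner{\nabla\calL(\theta;\tau)}{\theta-\tht} \ge \mu_\tau\norm{\theta-\tht}^2$, together with $\norm{\nabla\calL(\theta;\tau)}^2 \le L_\tau\inner{\nabla\calL(\theta;\tau)}{\theta-\tht}$. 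Since the contraction factor is deterministic, taking $\E_\tau$ gives $\E\norm{\theta_t-\tht}^2 \le e^{-\eta\mu_\tau t}\,\E_\tau\norm{\theta_0-\tht}^2$.

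Setting the right-hand side to $\epsilon^2$ gives a sufficient step count $t(\theta_0) = \frac{1}{\eta\mu_\tau}\ln\bigl(\E_\tau\norm{\theta_0-\tht}^2/\epsilon^2\bigr)$. I would read this as the expected iteration count of each pipeline:
\begin{itemize}
\item for the baseline, initialize at $\thbase$ and bound the distance term by $D_{\base}^2$;
\item for MAML-DPO, initialize at $\thM$ (or $\widehat\theta_M$, with the $\epsilon_{meta}$ error folded into the $O(\alpha^2)$ term via the bound $2\alpha\sigma/m_M$ of \Cref{theorem: meta-DPO convergence}) and use the distance term from \Cref{lem:MAML_initialization_advantage}.
\end{itemize}

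\Cref{lem:MAML_initialization_advantage} gives $\E_\tau\norm{\thM-\tht}^2 = \E_\tau\norm{\thbase-\tht}^2 - 2\alpha\Phi + O(\alpha^2)$. I make the $O(\alpha^2)$ term explicit as at most $c_1\alpha^2$. The constant comes from the uniform bounds $\norm{\nabla^2\calL}\le\beta^2$, $\norm{\nabla\calL}\le 2\beta$, $\norm{\nabla^3\calL}\le 2\beta^3$ (\Cref{lem:third_deriv_bound_theta}) and $\Hbase\succeq\mu_\tau I$. These control the second-order term of the implicit-function expansion of $\thM(\alpha)$ and the term $\alpha^2\norm{\delta}^2$ in the squared-distance expansion.

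Subtracting the two step counts, the $\ln\epsilon^2$ terms cancel and we obtain
\[
\E[t_{\base}]-\E[t_{\maml}] \ge \frac{1}{\eta\mu_\tau}\ln\frac{D_{\base}^2}{D_{\base}^2-2\alpha\Phi+c_1\alpha^2}.
\]

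The main obstacle is the direction of the inequalities. For the baseline I need a lower bound on the required steps, but contraction only gives an upper bound, while $D_{\base}^2$ is itself an upper bound on the initial distance. I would handle this by interpreting $t_{\base}$ and $t_{\maml}$ as the iteration counts given by the contraction bound, so the comparison is between guaranteed rates. Within that reading, the inequality $\E_\tau\norm{\thbase-\tht}^2\le D_{\base}^2$ only enters through the MAML numerator expression. Since both pipelines share the same $\E_\tau\norm{\thbase-\tht}^2$, the clean way is to write the ratio in terms of that quantity and then note that $x\mapsto \ln\bigl(x/(x-2\alpha\Phi+c_1\alpha^2)\bigr)$ is monotone in $x$. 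This monotonicity is what lets $D_{\base}^2$ replace $x$ in the stated direction, with the sign depending on the sign of $c_1\alpha^2-2\alpha\Phi$. Checking that monotonicity step, and that $\alpha$ is small enough that the argument of the logarithm is positive, is where I expect the most care to be needed.
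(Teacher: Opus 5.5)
Your route is the paper's: geometric contraction of gradient descent on the $\mu_\tau$-strongly convex, $L_\tau$-smooth task loss, then the expansion of \Cref{lem:MAML_initialization_advantage}, then subtraction of the two logarithmic step counts. Your handling of inequality directions is more careful than the paper's. The paper writes $t_{\base}\ge\frac{1}{\eta\mu_\tau}\ln(D_{\base}^2/\epsilon^2)$ from an \emph{upper} bound on the initial distance and then subtracts two lower bounds. Reading $t$ as the contraction-implied count and comparing through the true distance $x=\E_\tau\norm{\thbase-\tht}_2^2$ is the right repair.

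\textbf{Main gap.} The step that fails is folding the error of $\widehat\theta_M$ into $c_1\alpha^2$. Exactly,
\[
\E_\tau\norm{\widehat\theta_M-\tht}_2^2=\E_\tau\norm{\thM-\tht}_2^2+2\inner{\widehat\theta_M-\thM}{\thM-\thbar}+\norm{\widehat\theta_M-\thM}_2^2,
\]
and only the last term is $O(\epsilon_{\mathrm{meta}}^2)=O(\alpha^2)$. The cross term is controlled only by $2\epsilon_{\mathrm{meta}}\norm{\thM-\thbar}_2$, where $\epsilon_{\mathrm{meta}}=2\alpha\sigma/m_M$ and $\norm{\thM-\thbar}_2\to\norm{\thbase-\thbar}_2$ as $\alpha\to0$. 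So it is first order in $\alpha$, exactly like the gain $2\alpha\Phi$. It is also proportional to the same length $\norm{\thbase-\thbar}_2$ that bounds $|\Phi|$, so it can cancel the advantage entirely. You cannot shrink it by running the meta-algorithm longer, because the $\alpha\sigma$ floor in \Cref{theorem: meta-DPO convergence} does not decrease with $T$.

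The paper's proof has the same slip: it writes $D_{\maml}^2+\epsilon_{\mathrm{meta}}^2$ where the triangle inequality gives $(D_{\maml}+\epsilon_{\mathrm{meta}})^2$, as its own \Cref{lem: MAML distance} later concedes. For a sound statement you have two options:
\begin{enumerate}
\item prove it for the pipeline initialized at $\thM$ (your first option, where \Cref{lem:MAML_initialization_advantage} applies verbatim); or
\item keep the extra linear term, which amounts to replacing $\Phi$ by $\Phi-(2\sigma/m_M)\norm{\thbase-\thbar}_2$.
\end{enumerate}

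\textbf{Monotonicity step.} The step you flagged works in only one regime. With $a=2\alpha\Phi-c_1\alpha^2$ one has $\frac{d}{dx}\ln\bigl(x/(x-a)\bigr)=-a/\bigl(x(x-a)\bigr)$. Replacing $x$ by $D_{\base}^2\ge x$ therefore yields the stated bound exactly when $2\alpha\Phi\ge c_1\alpha^2$. For $a<0$ the inequality reverses, and the stated bound need not hold under your reading. That is the only informative regime anyway (cf.\ the remark after the theorem), so add $\Phi>0$ and $\alpha\le 2\Phi/c_1$ as hypotheses rather than leaving the step open. Positivity of the logarithm's argument, by contrast, is automatic: $x-2\alpha\Phi+c_1\alpha^2\ge\E_\tau\norm{\thM-\tht}_2^2>0$ unless all task optima coincide.

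\textbf{The constant $c_1$.} Making $c_1$ explicit over a non-infinitesimal range of $\alpha$ needs the $m_M$-strong convexity of $\calL_M$ from \Cref{theorem:meta_pl_theta}, not just $\Hbase\succeq\mu_\tau I$. The resulting $c_1$ depends on $\beta,\mu_\tau,m_M$, so it is not universal; the paper's unproved claim has the same issue.
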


\begin{proof}
For a $\mu_{\tau}$-strongly convex and $L_{\tau}$-smooth function optimized via gradient descent, the distance to the optimum contracts geometrically:
\begin{equation}
    ||\theta_t - \theta_\tau^*||_2^2 \le (1 - \eta \mu_{\tau})^t ||\theta_{init} - \theta_\tau^*||_2^2
\end{equation}
Taking expectations over the task distribution and solving for the minimum steps $t$ to achieve $\epsilon^2$ error:
\begin{equation}
    t \ge \frac{1}{\eta \mu_{\tau}} \ln\left(\frac{\mathbb{E}_\tau[||\theta_{init} - \theta_\tau^*||_2^2]}{\epsilon^2}\right)
\end{equation}

For the baseline pipeline, substituting $\theta_{init} = \theta_{\base}^*$ yields:
\begin{equation}
    t_{base} \ge \frac{1}{\eta \mu_{\tau}} \ln\left(\frac{D_{\base}^2}{\epsilon^2}\right)
\end{equation}

For the MAML-DPO pipeline, initializing at $\hat{\theta}_M$, we bound the distance using the triangle inequality and \Cref{theorem: meta-DPO convergence}.
\begin{equation}
    \mathbb{E}_\tau[||\hat{\theta}_M - \theta_\tau^*||_2^2] \le D_{\maml}^2 + \epsilon_{\mathrm{meta}}^2
\end{equation}
\begin{equation}
    t_{\maml} \ge \frac{1}{\eta \mu_{\tau}} \ln\left(\frac{D_{\maml}^2 + \epsilon_{\mathrm{meta}}^2}{\epsilon^2}\right)
\end{equation}

Taking the difference yields:
\begin{equation}
    t_{\base} - t_{\maml} \ge \frac{1}{\eta \mu_{\tau}} \ln\left(\frac{D_{\base}^2}{D_{\maml}^2 + \epsilon_{\mathrm{meta}}^2}\right)
\end{equation}

By \Cref{lem:MAML_initialization_advantage} we have
$D^2_{\maml} = D^2_{\base} - 2\alpha \Phi + O(\alpha^2)$ and by \Cref{theorem: meta-DPO convergence} we have, $\epsilon_{\mathrm{meta}} = \frac{2\alpha \sigma}{m_M}$. Substituting these values we obtain the desired bound. 
\end{proof}

\textbf{Remark}: If $\Phi > 0$ and $\alpha$ is sufficiently small, then $2\alpha \Phi - c_1 \cdot \alpha^2 < 0$ and we have $\E[t_{\base}] - \E[t_{\maml}] > 0$. Recall that $\Phi = - \left \langle \theta^*_\base - \bar{\theta}^* , \delta\right \rangle$ where $\delta$ is the MAML update direction, $\theta^*_\base$ is the curvature-weighted centroid of the task optima, and $\bar{\theta}^* = \E[\theta^*_\tau]$ is the unweighted centroid. MAML update usually corrects for curvature-weighting bias and tends to push $\theta^*_M$ away from $\theta^*_\base$ and towards solutions with large task-specific gradient. This means we should expect the MAML update $\delta$ to align with the direction $\bar{\theta}^* - \theta^*_\base$, which makes $\Phi > 0$.

\newpage
\section{Comparative Convergence Analysis: MAML-DPO vs.\ Baseline-DPO}
\label{appendix: MAML-DPO vs baseline-DPO}

The analysis in \Cref{appendix:convergence_adaptation_MAML_DPO} compares MAML-DPO with multitask-DPO, which is recovered from \Cref{alg:FO_MAML RLHF} by setting the inner-loop learning rate $\alpha=0$. Multitask-DPO is a strong reference because it consumes the \emph{same} multilingual preference data as MAML-DPO; consequently, the rigorous gap is at most $O(\alpha)$, in agreement with prior MAML theory in the mixed-linear-regression setting~\citep{zou2022unraveling}. In our experiments, however, the most natural and most informative comparison is against \emph{baseline-DPO}: a standard DPO pipeline that does not see any preference data outside the target language and is initialized solely from the target-language SFT model. We now provide a dedicated convergence analysis for this comparison. The resulting bound captures the benefit of \emph{any} cross-lingual preprocessing---it applies equally to MAML-DPO and to multitask-DPO over baseline-DPO---and is controlled by the SFT preference-loss gap $R_{\rm SFT}$, a quantity directly measurable from data and reflecting how far the SFT-initialized policy is from preference-optimal on the target task. The MAML-specific refinement over multitask-DPO is then the additional $O(\alpha)$ term established in \Cref{appendix:convergence_adaptation_MAML_DPO}; the two analyses are complementary.

\subsection{Setup and Baseline-DPO Initialization}

\paragraph{Baseline-DPO.} Let $\pi^{\sft}_{\tau_{\text{new}}}$ be the policy obtained by running supervised fine-tuning on target-language data only, and let $\theta_0^{\rm BL} \in \R^d$ be its log-linear parameter, i.e.\ $\pi_{\theta_0^{\rm BL}} = \pi^{\sft}_{\tau_{\text{new}}}$. Baseline-DPO performs gradient descent on the empirical DPO loss for $\newtask$ starting from $\theta_0^{\rm BL}$:
\begin{equation}\label{eq:bl-dpo iteration}
    \theta_{t+1}^{\rm BL} = \theta_t^{\rm BL} - \eta\,\nabla_\theta \widehat{\calL}^{\dpo}(\theta_t^{\rm BL};\newtask), \qquad \theta_0^{\rm BL} = \pi^{\sft}_{\tau_{\text{new}}}\text{-parameter}.
\end{equation}
Crucially, $\theta_0^{\rm BL}$ is a function only of target-language SFT data and is independent of the preference data $\bigcup_\tau \calD^\tau$ used for meta-training. By the constraint $\|\theta\|_2\le B_\theta$ defining the log-linear policy class $\Pi$ in \eqref{eq:loglinear_policy}, both $\theta_0^{\rm BL}$ and every task optimum $\theta_\tau^\star = \arg\min_{\theta\in\R^d}\calL^{\dpo}(\theta;\tau)$ satisfy $\|\theta\|_2\le B_\theta$ on the relevant feasible set.

\paragraph{Why this comparison is harder for MAML.} Multitask-DPO, by definition, lies on the trajectory of MAML at $\alpha=0$, which forces the gap to scale with $\alpha$. Baseline-DPO instead lies \emph{off} this trajectory: it is constructed without using any cross-lingual preference signal. The resulting gap can therefore be controlled by the difference between an SFT-only initialization and the manifold of task optima, a quantity that does not vanish as $\alpha\to 0$.

\subsection{Initialization Distance Bounds}

We separately bound the expected squared distance to the task optimum from each initialization.

Define the \emph{SFT preference-loss gap} on the target task as
\begin{equation}\label{eq: SFT loss gap}
    R_{\rm SFT} \;\triangleq\; \E_\tau\!\left[\calL^{\dpo}(\theta_0^{\rm BL};\tau) - \calL^{\dpo}(\theta_\tau^\star;\tau)\right].
\end{equation}
This is a quantity directly measurable from data: it is the average DPO loss of the SFT-initialized policy minus the optimal DPO loss for the target task, and it captures \emph{how much room is left for preference learning after SFT}.

\begin{lemma}[Initialization distance for Baseline-DPO]\label{lem: baseline-DPO distance}
Suppose $\theta_0^{\rm BL}, \theta_\tau^\star \in \{\theta\in\R^d : \|\theta\|_2\le B_\theta\}$ for every $\tau\sim P_\tsk$ (the standing constraint of the log-linear policy class $\Pi$ in \eqref{eq:loglinear_policy}). Then
\begin{equation}\label{eq: baseline distance bound}
    \E_\tau\!\left[\|\theta_0^{\rm BL} - \theta_\tau^\star\|_2^2\right]
    \;\le\; D_{\rm BL}^2 \;\triangleq\; \min\!\left( 4B_\theta^2,\;\; \frac{2 R_{\rm SFT}}{\mu_\tau}\right).
\end{equation}
The first bound is the policy-class diameter; the second is loss-aware and tight whenever the SFT-initialized policy already incurs small DPO loss on the target task.
\end{lemma}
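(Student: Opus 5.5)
The plan is to prove the two bounds in \eqref{eq: baseline distance bound} separately, pointwise in $\tau$, and then take the expectation over $\tau\sim P_\tsk$. The minimum of two valid upper bounds is again a valid upper bound. To pass the expectation through the minimum we use $\E[\min(a,b)]$ versus $\min(\E a,\E b)$. Since each inequality will already hold in expectation separately, the final bound is $\min(4B_\theta^2,\ 2R_{\rm SFT}/\mu_\tau)$ directly.

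\emph{Diameter bound.} Both $\theta_0^{\rm BL}$ and $\theta_\tau^\star$ lie in the ball of radius $B_\theta$ by hypothesis. The triangle inequality gives $\|\theta_0^{\rm BL}-\theta_\tau^\star\|_2\le 2B_\theta$ for every $\tau$. Squaring and taking $\E_\tau$ yields $4B_\theta^2$.

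\emph{Loss-aware bound.} We use the per-task strong convexity from \Cref{lemma:task_smooth_sc_theta}: $\nabla^2_\theta\calL^{\dpo}(\theta;\tau)\succcurlyeq\mu_\tau I$ on the feasible ball, which is where $|z_\theta|\le Z_{\max}$ holds. Along the segment from $\theta_\tau^\star$ to $\theta_0^{\rm BL}$, which stays in the convex ball, the standard quadratic-growth inequality gives
\[
\calL^{\dpo}(\theta_0^{\rm BL};\tau)\ge \calL^{\dpo}(\theta_\tau^\star;\tau)+\inner{\nabla\calL^{\dpo}(\theta_\tau^\star;\tau)}{\theta_0^{\rm BL}-\theta_\tau^\star}+\frac{\mu_\tau}{2}\|\theta_0^{\rm BL}-\theta_\tau^\star\|_2^2 .
\]
Because $\theta_\tau^\star$ is the minimizer, the first-order term vanishes. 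If $\theta_\tau^\star$ is interior, its gradient is zero. If instead it is taken as the constrained minimizer over the ball, the variational inequality makes the linear term nonnegative, which suffices. Rearranging gives $\|\theta_0^{\rm BL}-\theta_\tau^\star\|_2^2\le \frac{2}{\mu_\tau}\bigl(\calL^{\dpo}(\theta_0^{\rm BL};\tau)-\calL^{\dpo}(\theta_\tau^\star;\tau)\bigr)$. Taking $\E_\tau$ and using that $\mu_\tau$ is the uniform constant of \Cref{lemma:task_smooth_sc_theta}, independent of $\tau$, produces $2R_{\rm SFT}/\mu_\tau$ by the definition \eqref{eq: SFT loss gap}.

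The only delicate point is the validity of strong convexity along the segment. The constant $\mu_\tau$ relies on $|z_\theta|\le Z_{\max}$, which in turn requires $\|\theta\|_2\le B_\theta$. I would handle this by observing that the ball is convex, so the integral form of Taylor's theorem only ever evaluates the Hessian at feasible points. I would also note the stationarity versus variational-inequality distinction for $\theta_\tau^\star$, since the statement defines it as an unconstrained argmin but also asserts it lies in the ball.
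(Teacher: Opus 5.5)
Your proposal is correct and follows essentially the same route as the paper. For the diameter bound you use the triangle inequality to get $4B_\theta^2$. For the loss-aware bound you use the $\mu_\tau$-strong-convexity quadratic-growth inequality at $\theta_\tau^\star$, rearrange at $\theta_0^{\rm BL}$, and average over $\tau$ to get $2R_{\rm SFT}/\mu_\tau$, then take the smaller of the two. Your extra points are careful refinements the paper leaves implicit: the segment stays in the convex ball where $|z_\theta|\le Z_{\max}$, $\mu_\tau$ does not depend on $\tau$ and so can be pulled out of the expectation, and the variational inequality covers the case of a constrained minimizer.
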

\begin{proof}
By the triangle inequality, $\|\theta_0^{\rm BL} - \theta_\tau^\star\|_2 \le \|\theta_0^{\rm BL}\|_2 + \|\theta_\tau^\star\|_2 \le 2B_\theta$, so $\E_\tau\|\theta_0^{\rm BL} - \theta_\tau^\star\|_2^2 \le 4B_\theta^2$. For the second bound, $\mu_\tau$-strong convexity (\Cref{lemma:task_smooth_sc_theta}) yields
\[
\calL^{\dpo}(\theta;\tau) \;\ge\; \calL^{\dpo}(\theta_\tau^\star;\tau) + \tfrac{\mu_\tau}{2}\|\theta - \theta_\tau^\star\|_2^2,
\]
since $\nabla\calL^{\dpo}(\theta_\tau^\star;\tau)=0$. Rearranging at $\theta = \theta_0^{\rm BL}$ and taking expectation gives $\E_\tau\|\theta_0^{\rm BL} - \theta_\tau^\star\|_2^2 \le 2R_{\rm SFT}/\mu_\tau$. The bound is the smaller of the two.
\end{proof}

The role of $R_{\rm SFT}$ is the central conceptual difference between this analysis and the multitask comparison in \Cref{appendix:convergence_adaptation_MAML_DPO}: rather than a worst-case parameter-space diameter, we now control the baseline distance by the actual quality of the SFT initialization, which is independent of $\alpha$ and reflects \emph{how much DPO learning the SFT model has not yet done.}

The bound in \Cref{lem: baseline-DPO distance} is the rigorous, $\alpha$-independent counterpart of the variance bound $D_{\rm base}^2 = \sigma^2/\mu_\tau^2$ that holds for the multitask initialization $\theta_{\rm base}^\star$. The two are different: $D_{\rm base}^2$ only quantifies how far the task optima are from the multitask minimizer, whereas $D_{\rm BL}^2$ quantifies how far they are from a target-only SFT model that has never seen any non-target preference data.

\begin{lemma}[Distance for MAML-DPO]\label{lem: MAML distance}
Let $\hat{\theta}_M$ be the output of \Cref{alg:FO_MAML RLHF} under the conditions of \Cref{theorem: meta-DPO convergence}. Then
\begin{equation}\label{eq: MAML distance bound}
    \E_\tau\!\left[\|\hat{\theta}_M - \theta_\tau^\star\|_2^2\right] \;\le\; \bigl(D_{\maml} + \epsilon_{\mathrm{meta}}\bigr)^{\!2} \;\le\; 2 D_{\maml}^2 + 2\epsilon_{\mathrm{meta}}^2,
\end{equation}
where $D_{\maml}^2 \triangleq D_{\rm base}^2 - 2\alpha\Phi + O(\alpha^2)$, $D_{\rm base}^2 = \sigma^2/\mu_\tau^2$, $\Phi$ is as in \Cref{lem:MAML_initialization_advantage}, and $\epsilon_{\mathrm{meta}}^2 = (2\alpha\sigma/m_M)^2$.
\end{lemma}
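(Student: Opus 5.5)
The bound follows from a triangle-inequality decomposition through the exact meta-solution $\thM$, combined with the two results already established for MAML-DPO.

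For every task $\tau$ we write
\[
\|\hat{\theta}_M - \tht\|_2 \;\le\; \|\thM - \tht\|_2 + \|\hat{\theta}_M - \thM\|_2 .
\]
The second term does not depend on $\tau$. By \Cref{theorem: meta-DPO convergence} it is at most $\epsilon_{\mathrm{meta}} = 2\alpha\sigma/m_M$. If the theorem is read in expectation over the algorithm's randomness, we condition on $\hat{\theta}_M$ and apply the bound afterwards, keeping $\epsilon_{\mathrm{meta}}$ deterministic.

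Next we square and take $\E_\tau$. Expanding gives
\[
\E_\tau\|\hat{\theta}_M - \tht\|_2^2 \le \E_\tau\|\thM - \tht\|_2^2 + 2\epsilon_{\mathrm{meta}}\,\E_\tau\|\thM - \tht\|_2 + \epsilon_{\mathrm{meta}}^2 .
\]
Jensen's inequality gives $\E_\tau\|\thM-\tht\|_2 \le (\E_\tau\|\thM-\tht\|_2^2)^{1/2}$. Set $D_{\maml}^2 \triangleq \E_\tau\|\thM-\tht\|_2^2$. Then the right-hand side is exactly $(D_{\maml}+\epsilon_{\mathrm{meta}})^2$, which is the first inequality. \Cref{lem:MAML_initialization_advantage} identifies this quantity as $D_{\rm base}^2 - 2\alpha\Phi + O(\alpha^2)$, where $D_{\rm base}^2=\sigma^2/\mu_\tau^2$ comes from the baseline initialization distance lemma. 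The second inequality is the elementary bound $(a+b)^2\le 2a^2+2b^2$, i.e.\ AM--GM on the cross term.

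The only delicate point is bookkeeping, not mathematics. The statement \emph{defines} $D_{\maml}^2$ through an asymptotic expansion, so the claimed bound is valid precisely in the regime where \Cref{lem:MAML_initialization_advantage} applies: $\alpha$ sufficiently small, with the $O(\alpha^2)$ term understood as the exact remainder from that lemma. We should also check that $\alpha$ satisfies the step-size condition of \Cref{theorem: meta-DPO convergence}, so that $m_M>0$ and $\epsilon_{\mathrm{meta}}$ is well defined. Beyond this, the argument uses no new estimates.
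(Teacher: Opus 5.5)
Your proof is correct and follows essentially the paper's argument: triangle inequality through $\theta_M^\star$, squaring and taking $\E_\tau$, Jensen on the cross term, \Cref{lem:MAML_initialization_advantage} to identify $D_{\maml}^2$, and $(a+b)^2\le 2a^2+2b^2$. One small caveat concerns your aside about an in-expectation reading of \Cref{theorem: meta-DPO convergence}: conditioning on $\hat\theta_M$ does not make $\|\hat\theta_M-\theta_M^\star\|_2\le\epsilon_{\mathrm{meta}}$ hold, and the squared term would then need a second-moment bound, so you should use the deterministic form of the theorem as stated, just as the paper does.
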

\begin{proof}
By the triangle inequality, $\|\hat\theta_M - \theta_\tau^\star\|_2 \le \|\hat\theta_M - \theta_M^\star\|_2 + \|\theta_M^\star - \theta_\tau^\star\|_2$. Squaring and taking expectation over $\tau$,
\[
\E_\tau\!\bigl[\|\hat\theta_M - \theta_\tau^\star\|_2^2\bigr]
\;\le\; \|\hat\theta_M - \theta_M^\star\|_2^2 + 2\|\hat\theta_M - \theta_M^\star\|_2 \cdot \E_\tau\bigl[\|\theta_M^\star - \theta_\tau^\star\|_2\bigr] + \E_\tau\bigl[\|\theta_M^\star - \theta_\tau^\star\|_2^2\bigr].
\]
Applying \Cref{theorem: meta-DPO convergence} (which gives $\|\hat\theta_M - \theta_M^\star\|_2 \le \epsilon_{\rm meta}$), Jensen's inequality on the cross term ($\E_\tau\|\theta_M^\star - \theta_\tau^\star\|_2 \le \sqrt{\E_\tau\|\theta_M^\star - \theta_\tau^\star\|_2^2}=D_{\maml}$), and \Cref{lem:MAML_initialization_advantage} (which gives $\E_\tau\|\theta_M^\star - \theta_\tau^\star\|_2^2 = D_{\maml}^2$) yields the first inequality $(D_{\maml} + \epsilon_{\rm meta})^2$. The second follows from $(a+b)^2 \le 2a^2 + 2b^2$.
\end{proof}

In the regime that motivates meta-learning, namely small task variance $\sigma$, $\theta_M^\star$ approaches the manifold of task optima while $\theta_0^{\rm BL}$ does not. Concretely, $D_{\maml}^2 \le \sigma^2/\mu_\tau^2$, which can be much smaller than $D_{\rm BL}^2 \le 4B_\theta^2$ whenever
\begin{equation}\label{eq: regime of interest}
    \frac{\sigma}{\mu_\tau} \;\ll\; B_\theta,
\end{equation}
i.e.\ task heterogeneity, measured in parameter space, is much smaller than the diameter of the policy class.

\subsection{Comparative Convergence Rate}

We now plug these distance bounds into the standard contraction inequality for strongly-convex smooth optimization and obtain the main comparative result of this section.

\begin{theorem}[Accelerated adaptation: MAML-DPO vs.\ Baseline-DPO]\label{thm: maml-vs-baseline}
Let $\eta \le 1/L_\tau$ and $\epsilon > 0$. Under \Cref{asn:full-coverage}, \ref{ass:bounded_ref_ratio}, \ref{ass:variance_of_task_theta}, and the standing assumption $\theta_0^{\rm BL}, \theta_\tau^\star \in \{\|\theta\|_2\le B_\theta\}$, the expected number of gradient steps needed to reach $\E[\|\theta_t - \theta_\tau^\star\|_2^2] \le \epsilon^2$ for baseline-DPO ($t_{\rm BL}$) and MAML-DPO ($t_{\maml}$) satisfies
\begin{equation}\label{eq: main step gap}
    \E[t_{\rm BL}] - \E[t_{\maml}] \;\ge\; \frac{1}{2\eta\mu_\tau}\ln\!\left(\frac{D_{\rm BL}^2}{(D_{\maml}+\epsilon_{\mathrm{meta}})^2}\right),
\end{equation}
where $D_{\rm BL}^2 = \min(4B_\theta^2,\,2R_{\rm SFT}/\mu_\tau)$, $D_{\maml}^2 = \sigma^2/\mu_\tau^2 - 2\alpha\Phi + c_1\alpha^2$, and $\epsilon_{\mathrm{meta}} = 2\alpha\sigma/m_M$. In particular, as long as $\alpha \le \min \set{\frac{1}{8\beta^2},\, \frac{m_M}{2\sqrt{2}\,\mu_\tau}}$
the right-hand side of \eqref{eq: main step gap} is at least
\begin{equation}\label{eq: clean asymptotic gap}
    \E[t_{\rm BL}] - \E[t_{\maml}] \;\ge\; \frac{1}{2\eta\mu_\tau}\,\ln\!\left(\frac{R_{\rm SFT}\,\mu_\tau}{\,2\sigma^2\,}\right),
\end{equation}
which is also lower bounded by the looser worst-case form $\frac{1}{2\eta\mu_\tau}\ln(B_\theta^2\mu_\tau^2/\sigma^2)$ when $R_{\rm SFT}\ge 2B_\theta^2\mu_\tau$.
\end{theorem}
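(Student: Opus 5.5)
The argument follows the same template as the proof of \Cref{thm:accelerated_MAML_DPO_wrt_Multitask}: a geometric contraction bound for gradient descent, combined with the two initialization-distance lemmas of this section.

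\textbf{Step 1 (contraction).} Since $\eta\le 1/L_\tau$ and $\calL^{\dpo}(\cdot;\tau)$ is $\mu_\tau$-strongly convex and $L_\tau$-smooth (\Cref{lemma:task_smooth_sc_theta}), gradient descent satisfies
\[
\|\theta_t-\theta_\tau^\star\|_2^2\le(1-\eta\mu_\tau)^t\|\theta_0-\theta_\tau^\star\|_2^2 .
\]
Take expectation over $\tau$ and use $\ln(1-\eta\mu_\tau)^{-1}\ge\eta\mu_\tau$. The number of steps sufficient to reach accuracy $\epsilon^2$ is then
\[
\frac{1}{\eta\mu_\tau}\ln\!\bigl(\E_\tau\|\theta_0-\theta_\tau^\star\|_2^2/\epsilon^2\bigr).
\]
As in the earlier theorem, I treat this expression as the step count for each pipeline. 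For baseline-DPO I plug in $D_{\rm BL}^2$ from \Cref{lem: baseline-DPO distance}. For MAML-DPO I plug in $(D_{\maml}+\epsilon_{\rm meta})^2$ from \Cref{lem: MAML distance}, with the $O(\alpha^2)$ term written as $c_1\alpha^2$.

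\textbf{Step 2 (difference).} Subtracting the two step counts cancels $\epsilon^2$ and gives the log-ratio in \eqref{eq: main step gap}. The prefactor $\frac{1}{2\eta\mu_\tau}$ is weaker than $\frac{1}{\eta\mu_\tau}$ and still valid whenever the log-ratio is nonnegative. I would use this slack to absorb the lower-bound-versus-upper-bound mismatch in the two step counts. One could also phrase everything in terms of distances rather than squared distances, which would produce the square root that halves the prefactor.

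\textbf{Step 3 (clean form).} I would simplify the denominator under the condition $\alpha\le\min\{1/(8\beta^2),\,m_M/(2\sqrt2\mu_\tau)\}$.
\begin{itemize}
\item The second constraint gives $\epsilon_{\rm meta}=2\alpha\sigma/m_M\le\sigma/(\sqrt2\mu_\tau)$, so $\epsilon_{\rm meta}^2\le\sigma^2/(2\mu_\tau^2)$.
\item The first constraint keeps $\alpha$ in the small-$\alpha$ regime where \Cref{lem:MAML_initialization_advantage} applies. I would argue that there, $-2\alpha\Phi+c_1\alpha^2$ is at most $\sigma^2/(2\mu_\tau^2)$, so $D_{\maml}^2\le\frac32\sigma^2/\mu_\tau^2$.
\end{itemize}
With $(a+b)^2\le 2a^2+2b^2$ this gives $(D_{\maml}+\epsilon_{\rm meta})^2\le 4\sigma^2/\mu_\tau^2$, which I would tune to roughly that order.

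For the numerator, take the loss-aware branch $D_{\rm BL}^2=2R_{\rm SFT}/\mu_\tau$, treating the minimum as attained there. The ratio becomes
\[
\frac{2R_{\rm SFT}/\mu_\tau}{4\sigma^2/\mu_\tau^2}=\frac{R_{\rm SFT}\mu_\tau}{2\sigma^2},
\]
which is \eqref{eq: clean asymptotic gap}. When $R_{\rm SFT}\ge 2B_\theta^2\mu_\tau$, we have $R_{\rm SFT}\mu_\tau/(2\sigma^2)\ge B_\theta^2\mu_\tau^2/\sigma^2$, which gives the looser worst-case form.

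\textbf{Main obstacle.} The hardest part is Step 3's control of the $-2\alpha\Phi+c_1\alpha^2$ term. $\Phi$ may have either sign and $c_1$ is an unspecified constant, so bounding $D_{\maml}^2$ by a constant multiple of $\sigma^2/\mu_\tau^2$ needs an extra argument. I would first try to bypass the expansion entirely. Since $\theta_M^\star$ is $\alpha$-close to $\theta_{\rm base}^\star$, one can bound $\|\theta_M^\star-\theta_{\rm base}^\star\|\lesssim\alpha\|\nabla G\|/m_M$. Then the triangle inequality together with $\E\|\theta_{\rm base}^\star-\theta_\tau^\star\|^2\le\sigma^2/\mu_\tau^2$ gives the needed constant-factor bound once $\alpha\le 1/(8\beta^2)$.
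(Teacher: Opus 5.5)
The genuine gap is in Step~3, in bounding $D_{\maml}^2$, and the bypass you propose does not close it under the stated hypothesis.

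\textbf{How the paper controls $\Phi$.} You do not need the sign of $\Phi$. Cauchy--Schwarz gives $|\Phi|\le\|\thbase-\thbar\|_2\,\|\Hbase^{-1}\|_2\,\|\nabla G(\thbase)\|_2$. The three factors are bounded as follows:
\begin{itemize}
\item $\|\thbase-\thbar\|_2\le D_{\rm base}$ by Jensen;
\item $\|\Hbase^{-1}\|_2\le 1/\mu_\tau$;
\item $\|\nabla G(\thbase)\|_2=\|\E_\tau[2\nabla^2\calL(\thbase;\tau)\nabla\calL(\thbase;\tau)]\|_2\le 2L_\tau\sigma$, using $\E_\tau\nabla\calL(\thbase;\tau)=0$ and Assumption~\ref{ass:variance_of_task_theta}.
\end{itemize}
Hence $|\Phi|\le 2\beta^2D_{\rm base}^2$, and $\alpha\le 1/(8\beta^2)$ gives exactly $|2\alpha\Phi|\le\frac12 D_{\rm base}^2$. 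That is the role of the constraint $1/(8\beta^2)$; the paper then simply declares the $c_1\alpha^2$ term negligible for small $\alpha$.

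\textbf{Why your bypass falls short.} It routes $\|\thM-\thbase\|_2$ through the strong-convexity constant $m_M$ of the meta-objective. This gives roughly $2\alpha\beta^2\sigma/m_M$, i.e.\ up to $\sigma/(4m_M)$ under $\alpha\le 1/(8\beta^2)$. But $m_M<\mu_\tau$ always, and for the value of $m_M$ in Theorem~\ref{theorem: meta-DPO convergence} the ratio $\mu_\tau/m_M$ scales like $(1+e^{Z_{\max}})^2/\nu^2$. Minkowski then yields only $\E_\tau\|\thM-\tht\|_2^2\lesssim(1+\mu_\tau/(4m_M))^2D_{\rm base}^2$. This already exceeds $\frac32 D_{\rm base}^2$ even if $m_M$ equalled $\mu_\tau$, and is not a universal constant in general. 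You therefore do not get $(D_{\maml}+\epsilon_{\rm meta})^2\le 4\sigma^2/\mu_\tau^2$: the constant $2$ in \eqref{eq: clean asymptotic gap} would be replaced by something depending on $\mu_\tau/m_M$. The fix is to work with $\delta=\Hbase^{-1}\nabla G(\thbase)$, where the relevant curvature is $\mu_\tau$ (via $\Hbase\succcurlyeq\mu_\tau I$), not $m_M$. Your bound $\epsilon_{\rm meta}^2\le\sigma^2/(2\mu_\tau^2)$ from the second constraint is correct, and more direct than the paper's detour through $\kappa_M$.

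\textbf{Step~2.} The prefactor $\frac{1}{2\eta\mu_\tau}$ is not slack. It comes from the tight contraction $\theta_{t+1}-\tht=(I-\eta H_t)(\theta_t-\tht)$, with $H_t$ the averaged Hessian and $\|I-\eta H_t\|_2\le 1-\eta\mu_\tau$, so squared distances contract by $(1-\eta\mu_\tau)^2$ per step. With your looser rate, weakening $\frac{1}{\eta\mu_\tau}$ to $\frac{1}{2\eta\mu_\tau}$ is valid only when the log-ratio is nonnegative. So \eqref{eq: main step gap}, which is stated without a sign restriction, is not obtained in general. Nor can the factor $2$ absorb the lower- versus upper-bound mismatch. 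A true lower bound on $t_{\rm BL}$ would need a lower bound on $\E_\tau\|\theta_0^{\rm BL}-\tht\|_2^2$, but only the upper bound $D_{\rm BL}^2$ is available. It would also need a per-step bound governed by $L_\tau$ rather than $\mu_\tau$, so the mismatch is a condition-number factor, not a constant. The paper simply subtracts the two sufficient step counts; state that convention explicitly rather than claiming the slack repairs it.

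\textbf{The diameter form.} When $R_{\rm SFT}\ge 2B_\theta^2\mu_\tau$, the minimum in $D_{\rm BL}^2$ is the diameter branch. Derive the worst-case form by plugging $D_{\rm BL}^2=4B_\theta^2$ in directly, not via \eqref{eq: clean asymptotic gap}, which assumed the other branch.
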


\begin{proof}
\textit{Tight contraction.} $\calL^{\dpo}(\,\cdot\,;\tau)$ is $\mu_\tau$-strongly convex and $L_\tau$-smooth (\Cref{lemma:task_smooth_sc_theta}). Writing $\theta_{t+1} - \theta_\tau^\star = (I - \eta H_t)(\theta_t - \theta_\tau^\star)$ with $H_t = \int_0^1 \nabla^2 \calL^{\dpo}(\theta_\tau^\star + s(\theta_t - \theta_\tau^\star);\tau)\,ds$ satisfying $\mu_\tau I \preccurlyeq H_t \preccurlyeq L_\tau I$, the operator-norm bound $\|I - \eta H_t\|_2 \le 1 - \eta\mu_\tau$ holds for $\eta \le 1/L_\tau$, so
\begin{equation}\label{eq: contraction}
    \|\theta_t - \theta_\tau^\star\|_2 \;\le\; (1-\eta\mu_\tau)^{t} \|\theta_{\rm init} - \theta_\tau^\star\|_2,
    \qquad
    \|\theta_t - \theta_\tau^\star\|_2^2 \;\le\; (1-\eta\mu_\tau)^{2t} \|\theta_{\rm init} - \theta_\tau^\star\|_2^2.
\end{equation}
Taking expectation over $\tau$, the smallest $t$ that ensures $\E_\tau\|\theta_t - \theta_\tau^\star\|_2^2 \le \epsilon^2$ is
\begin{equation}\label{eq: t bound generic}
    t \;\ge\; \frac{1}{2\eta\mu_\tau}\ln\!\left(\frac{\E_\tau\|\theta_{\rm init} - \theta_\tau^\star\|_2^2}{\epsilon^2}\right).
\end{equation}
Substituting $\theta_{\rm init} = \theta_0^{\rm BL}$ gives $t_{\rm BL} \ge \frac{1}{2\eta\mu_\tau}\ln(D_{\rm BL}^2/\epsilon^2)$ via \Cref{lem: baseline-DPO distance}, and substituting $\theta_{\rm init} = \hat{\theta}_M$ gives $t_{\maml} \ge \frac{1}{2\eta\mu_\tau}\ln\bigl((D_{\maml} + \epsilon_{\rm meta})^2/\epsilon^2\bigr)$ via \Cref{lem: MAML distance}. Subtracting yields \eqref{eq: main step gap}.

\medskip
\noindent\emph{Reduction to \eqref{eq: clean asymptotic gap}.} We show that $(D_{\maml} + \epsilon_{\rm meta})^2 \le 4 D_{\rm base}^2$ in the valid range of $\alpha$.

\emph{(a) Bound on $|\Phi|$.} By Cauchy-Schwarz, $|\Phi|\le \|\theta_{\rm base}^\star - \bar\theta^\star\|_2 \cdot \|H_{\rm base}^{-1}\nabla G(\theta_{\rm base}^\star)\|_2$. By Jensen, $\|\theta_{\rm base}^\star - \bar\theta^\star\|_2^2 = \|\E_\tau[\theta_\tau^\star] - \theta_{\rm base}^\star\|_2^2 \le \E_\tau\|\theta_\tau^\star - \theta_{\rm base}^\star\|_2^2 = D_{\rm base}^2$. For the second factor, $\|H_{\rm base}^{-1}\|_2 \le 1/\mu_\tau$ by strong convexity, and
\[
\|\nabla G(\theta_{\rm base}^\star)\|_2 = \bigl\|\E_\tau\bigl[2\nabla^2 \calL(\theta_{\rm base}^\star;\tau)\,\nabla\calL(\theta_{\rm base}^\star;\tau)\bigr]\bigr\|_2 \le 2L_\tau \E_\tau\bigl[\|\nabla\calL(\theta_{\rm base}^\star;\tau)\|_2\bigr] \le 2L_\tau \sigma,
\]
using $\nabla^2\calL\preceq L_\tau I$ and the variance bound (\Cref{ass:variance_of_task_theta}, with $\E_\tau[\nabla\calL(\theta_{\rm base}^\star;\tau)]=0$ by definition of $\theta_{\rm base}^\star$). Thus $|\Phi| \le D_{\rm base} \cdot (2L_\tau\sigma/\mu_\tau) = 2L_\tau D_{\rm base}^2$. Substituting $L_\tau = \beta^2$, $|2\alpha\Phi| \le 4\alpha\beta^2 D_{\rm base}^2$, which is at most $\tfrac12 D_{\rm base}^2$ whenever $\alpha\le 1/(8\beta^2)$.

\emph{(b) Bound on $\epsilon_{\rm meta}^2$.} From \Cref{theorem:meta_pl_theta}, $m_M = \beta^2 c\nu^3/(1+\exp(Z_{\max}))^3 = \Theta(\mu_\tau\cdot \nu^2/(1+\exp(Z_{\max}))^2)$, hence $\sigma/m_M = \kappa_M \cdot \sigma/\mu_\tau = \kappa_M D_{\rm base}$ for a constant $\kappa_M = O(1)$ that depends only on $(\nu, Z_{\max})$. Therefore $\epsilon_{\rm meta}^2 = 4\alpha^2 \kappa_M^2 D_{\rm base}^2$, which is at most $\tfrac12 D_{\rm base}^2$ for $\alpha \le 1/(2\sqrt{2}\kappa_M)$.

\emph{(c) Combining.} For $\alpha$ smaller than $\min\{1/(8\beta^2), 1/(2\sqrt{2}\kappa_M)\}$ (and the smaller of these is implied by the validity ranges of \Cref{lem:MAML_initialization_advantage,theorem: meta-DPO convergence}), the higher-order $c_1\alpha^2$ term is also negligible, so
\[
D_{\maml}^2 \;\le\; D_{\rm base}^2 + |2\alpha\Phi| + c_1\alpha^2 \;\le\; \tfrac32 D_{\rm base}^2,
\]
and $(D_{\maml} + \epsilon_{\rm meta})^2 \le 2 D_{\maml}^2 + 2\epsilon_{\rm meta}^2 \le 3 D_{\rm base}^2 + D_{\rm base}^2 = 4 D_{\rm base}^2 = 4\sigma^2/\mu_\tau^2$. Plugging the SFT-aware $D_{\rm BL}^2 \le 2R_{\rm SFT}/\mu_\tau$ from \Cref{lem: baseline-DPO distance} into \eqref{eq: main step gap} yields $\tfrac{1}{2\eta\mu_\tau}\ln\bigl((2R_{\rm SFT}/\mu_\tau)\big/(4\sigma^2/\mu_\tau^2)\bigr) = \tfrac{1}{2\eta\mu_\tau}\ln(R_{\rm SFT}\mu_\tau/(2\sigma^2))$, establishing \eqref{eq: clean asymptotic gap}. The diameter form follows analogously by plugging $D_{\rm BL}^2 \le 4B_\theta^2$.
\end{proof}

\subsection{Discussion}
\label{subsec: maml-vs-baseline discussion}

We now position \Cref{thm: maml-vs-baseline} relative to the multitask comparison in \Cref{appendix:convergence_adaptation_MAML_DPO}, link the bound to the empirical patterns observed in our experiments, and discuss its limitations.

\paragraph{(i) Cross-lingual preprocessing helps; the bound is shared by MAML and multitask.} Because $\theta_0^{\rm BL}$ never sees any non-target preference data, \Cref{thm: maml-vs-baseline} attributes the speed-up to \emph{any} algorithm whose initialization $\theta_{\rm init}$ achieves $\E_\tau\|\theta_{\rm init} - \theta_\tau^\star\|_2^2 = O(\sigma^2/\mu_\tau^2)$. Both MAML-DPO and multitask-DPO satisfy this property under mild conditions: multitask-DPO directly minimizes $\E_\tau \calL(\theta;\tau)$ (whose minimizer attains $D_{\rm base}^2 = \sigma^2/\mu_\tau^2$), and MAML-DPO additionally trims this distance by an $O(\alpha)$ amount via \Cref{lem:MAML_initialization_advantage}. The bound \eqref{eq: clean asymptotic gap} is therefore best read as ``\emph{any} cross-lingual preprocessing dominates target-only adaptation by a $\log(R_{\rm SFT}\mu_\tau/\sigma^2)$ factor in adaptation step count.'' The additional MAML-specific gain over multitask is captured separately by the $O(\alpha)$ analysis in \Cref{appendix:convergence_adaptation_MAML_DPO}.

\paragraph{(ii) The gap does not vanish as $\alpha\to 0$.} \eqref{eq: clean asymptotic gap} depends only on $R_{\rm SFT}$, $\mu_\tau$, $\sigma$, and $\eta$; it is bounded below by a positive constant uniformly in $\alpha$ as long as $R_{\rm SFT}\mu_\tau \gg \sigma^2$. In contrast, the multitask gap from \Cref{appendix:convergence_adaptation_MAML_DPO} is $O(\alpha)$ and collapses to zero as $\alpha\to 0$. This complementarity gives a two-tier story: the cross-lingual preprocessing gain is order-one and order-$\log$ in the parameters, while the meta-vs-multitask gain is order-$\alpha$.

\paragraph{(iii) Harder target languages should benefit more from MAML-DPO.} The asymptotic gap \eqref{eq: clean asymptotic gap} scales with $\log R_{\rm SFT}$, where $R_{\rm SFT}$ is the SFT-init DPO loss gap on the target task. A target language for which the base/SFT model performs poorly has a large $R_{\rm SFT}$: the policy is far from optimal under preference comparisons, leaving more room for cross-lingual preprocessing to help. The bound therefore predicts a \emph{larger} MAML-vs-baseline win-rate gap on lower-resource or harder target languages. This matches \Cref{tab:winrate_DPO_100samples_gemma_cross_language_family,tab:WR_gemma_4k_40k_samples}: the largest improvements appear on Romanian and Indonesian (lowest base-model competence among the targets we evaluate), while comparatively closer-to-MAML targets such as French show smaller gaps. The same monotone trend appears across model scales (270M $\to$ 7.1B) and adaptation budgets (100, 4k, 40k), all consistent with $R_{\rm SFT}$-scaling.

\paragraph{(iv) Empirical magnitudes.} \Cref{tab:winrate_DPO_100samples_gemma_cross_language_family,tab:winrate_DPO_100samples_multitask_gemma,tab:WR_bloom_DPO_8k_40k_samples,tab:WR_gemma_4k_40k_samples} show the MAML-DPO advantage over baseline-DPO is consistently large ($10$--$28$ win-rate points) and substantially exceeds the MAML-DPO advantage over multitask-DPO ($2$--$15$ points) across all six target languages we evaluate. Both regimes are predicted by our analysis: the large gap is the $\Theta(\ln(R_{\rm SFT}\mu_\tau/\sigma^2))$ separation of \eqref{eq: clean asymptotic gap}, while the smaller gap is the $O(\alpha)$ refinement of \Cref{appendix:convergence_adaptation_MAML_DPO}.

\paragraph{(v) Limitations.} The bound is in the population (infinite-sample) regime. A finite-sample extension is straightforward via standard concentration of the per-task gradient: with $n$ adaptation samples, the empirical iterate satisfies $\E_\tau\|\theta_t - \theta_\tau^\star\|_2^2 \le (1-\eta\mu_\tau)^{2t} D_{\rm init}^2 + O(\sigma_{\rm grad}^2 \log(d/\delta)/(\mu_\tau^2 n))$ with high probability, where the second term is an irreducible sampling floor that is the same for both MAML-DPO and baseline-DPO; the step-count gap of \Cref{thm: maml-vs-baseline} is therefore preserved on top of this shared floor. Beyond sampling, the strong-convexity-and-smoothness framework forces the per-step contraction $(1-\eta\mu_\tau)$ to be the same for any vanilla-GD adaptation, regardless of initialization, so the gain manifests only through the initialization distance and not through a faster rate. Obtaining a faster rate would require either a different inner loop (e.g.,\ Newton-type) or stronger structural assumptions such as a shared low-rank multilingual representation across tasks; under such an assumption, MAML-style methods can in principle achieve exponential rate improvements in the conditioning, which we leave for future work.

\section{Additional Related Work}
\label{appendix: additional related work}
Due to their wide range of capabilities, LLMs have rapidly gained popularity in both business and research, being developed as closed-source models \citep{achiam2023gpt, geminiteam2025geminifamilyhighlycapable, geminiteam2024gemini15unlockingmultimodal, comanici2025gemini} as well as open-source ones \citep{touvron2023llama, team2024gemma, team2024gemma2, team2025gemma, karamcheti2021mistral, bai2023qwen, liu2024deepseek}.
Regardless of their public availability or parameter size, RLHF and preference optimization have been two most popular approaches to training LLMs with human feedback data. RLHF uses preference data consisting of human feedback to train reward functions and use them to train the policies \citep{ziegler2020finetuninglanguagemodelshuman, ouyang2022training, dong2024rlhfworkflowrewardmodeling, wang2024comprehensivesurveyllmalignment}. Direct Preference Optimization has been proposed as an alternative to RLHF, which eliminates the need to train a separate reward function \citep{rafailov2024direct, rafailov2024r}. Due to its practical advantage, many works have proposed improvements of DPO. Most follow-up works to DPO have investigated alternative objective functions \citep{ethayarajh2024kto, pmlr-v238-gheshlaghi-azar24a, meng2024simpo}, trying to address well-known issues of DPOs, including ones coming from the use of Bradley-Terry model. \citet{son2024right} proposes addressing temporal drift in the dataset by applying exponential discount to the samples in the past. \citet{tang2025game} proposes regularizing the self-play optimization method \citep{sun2025spo}, further stabilizing the training and improving performances on an independent benchmark such as AlpacaEval \citep{dubois2024length}.

\section{Impact Statement and Limitations}\label{sec:impact_statement}

This work proposes a meta-learning approach for LLM preference learning, designed to mitigate the unequal performance across different languages. 
By significantly improving sample efficiency in low-resource settings, demonstrating effective alignment with only 100 samples, our method addresses the data scarcity issue that often limits the access of linguistic communities with less population to AI services.
Furthermore, our theoretical guarantees provide a rigorous foundation in cross-lingual transfer, supporting the development of more equitable, globally accessible, and representative AI systems without the excessive cost of massive human-labeled datasets for every target language. 

However, like existing cross-lingual approaches, our method may inadvertently propagate biases, normative assumptions, and safety standards learned from high-resource source languages to low-resource target languages. Therefore, while the method improves alignment performance, it does not guarantee culturally appropriate or community-specific alignment.
Beyond that, although the first-order approximation used in our MAML-based methods reduces the computational overhead, MAML still introduces an inner--outer optimization structure, which requires additional hyperparameters, such as the inner-loop learning rate coefficient.